\def\supp{\mathop{\text{supp}}}
\long\def\comment#1{}
\def\cS{{\mathcal{S}}}
\newcommand{\bel}{\begin{eqnarray}\label}
\newcommand{\eel}{\end{eqnarray}}
\newcommand{\bes}{\begin{eqnarray*}}
\newcommand{\ees}{\end{eqnarray*}}
\let\hat\widehat
\let\tilde\widetilde
\def\supp{\mathop{\text{supp}\kern.2ex}}
\def\argmin{\mathop{\text{\rm arg\,min}}}
\def\argmax{\mathop{\text{\rm arg\,max}}}
\def\supp{\mathop{\text{supp}}}
\theoremstyle{plain}
\def\##1\#{\begin{align}#1\end{align}}
\def\$#1\${\begin{align*}#1\end{align*}}
\theoremstyle{mytheoremstyle}
\title{Pessimism meets VCG: Learning Dynamic Mechanism Design via Offline Reinforcement Learning}
\begin{document}
\author{Boxiang Lyu\thanks{Booth School of Business, University of Chicago.
Email: \texttt{blyu@chicagobooth.edu}.} 
\qquad 
Zhaoran Wang
\thanks{Northwestern University.
Email: \texttt{zhaoranwang@gmail.com}.} 
\qquad 
Mladen Kolar
\thanks{Booth School of Business, University of Chicago.
Email: \texttt{mladen.kolar@chicagobooth.edu}.}
\qquad
Zhuoran Yang\thanks{Yale University. Email: \texttt{zhuoranyang.work@gmail.com}.}}

\maketitle
\begin{abstract}
Dynamic mechanism design has garnered significant attention from both computer scientists and economists in recent years. By allowing agents to interact with the seller over multiple rounds, where agents’ reward functions may change with time and are state-dependent, the framework is able to model a rich class of real-world problems. In these works, the interaction between agents and sellers is often assumed to follow a Markov Decision Process (MDP). 
We focus on the setting where the reward and transition functions of such an MDP are not known a priori, and we are attempting to recover the optimal mechanism using an a priori collected data set. In the setting where the function approximation is employed to handle large state spaces, with only mild assumptions on the expressiveness of the function class, we are able to design a dynamic mechanism using offline reinforcement learning algorithms. Moreover, learned mechanisms approximately have three key desiderata: efficiency, individual rationality, and truthfulness. Our algorithm is based on the pessimism principle and only requires a mild assumption on the coverage of the offline data set. To the best of our knowledge, our work provides the first offline RL algorithm for dynamic mechanism design without assuming uniform coverage.
\end{abstract}

\section{Introduction}
Mechanism design studies how best to allocate goods among rational agents~\citep{maskin2008mechanism, myerson2008perspectives, roughgarden2010algorithmic}. Dynamic mechanism design focuses on analyzing optimal allocation rules in a changing environment, where demands for goods, the amount of available goods, and their valuations can vary over time~\citep{bergemann2019dynamic}. Problems ranging from online commerce and electric vehicle charging to pricing Wi-Fi access at Starbucks have been studied under the dynamic mechanism design framework~\citep{gallien2006dynamic, gerding2011online, friedman2003pricing}. Existing approaches in the literature require knowledge of the problem, such as the evaluation of goods by agents~\citep{bergemann2010dynamic,pavan2014dynamic}, the transition dynamics of the system~\citep{doepke2006dynamic}, or the policy that maximizes social welfare~\citep{parkes2003mdp,parkes2004approximately}.
Unfortunately, such knowledge is often not available in practice.

A practical approach we take in this paper is to learn a dynamic mechanism from data using offline Reinforcement Learning (RL). 
Vickrey-Clarke-Groves (VCG) mechanism provides a blueprint for the design of practical mechanisms in many problems and satisfies crucial mechanisms design desiderata in an extremely general setting~\citep{vickrey1961counterspeculation, clarke1971multipart, groves1979efficient}. In this paper, we approximate the desired VCG mechanism using a priori collected data~\citep{jin2021pessimism, xie2021bellman, zanette2021provable}. We assume that the mechanism designer does not know the utility of the agents or the transition kernel of the states, but has access to an offline data set that contains observed state transitions and utilities~\citep{lange2012batch}. The goal of the mechanism designer is to recover the ideal mechanism purely from this data set, without requiring interaction with the agents. 
We focus on an adaptation of the classic VCG mechanism to the dynamic setting~\citep{parkes2007online} and assume that agents' interactions with the seller follow an episodic Markov Decision Process (MDP), where the agents' rewards are state-dependent and evolve over time within each episode. To accommodate the rich class of quasilinear utility functions considered in the economic literature~\citep{bergemann2019dynamic}, we use offline RL with a general function approximation~\citep{xie2021bellman} to approximate the dynamic VCG mechanism.

{\par \textbf{Related Works.}}
\citet{parkes2003mdp} and \citet{parkes2004approximately} studied dynamic mechanism design from an MDP perspective. The proposed mechanisms can implement social welfare-maximizing policies in a truth-revealing Bayes-Nash equilibrium both exactly and approximately.
\citet{bapna2005efficient} studied the dynamic auction setting from a multi-arm bandit perspective. Using the notion of marginal contribution, \citet{bergemann2006efficient} proposed a dynamic mechanism that is efficient and truth-telling. \citet{pavan2009dynamic} analyzed the first-order conditions of efficient dynamic mechanisms. \citet{athey2013efficient} extended both the VCG and AGV mechanisms \citep{d1979incentives} to the dynamic regime, obtaining an efficient budget-balanced dynamic mechanism.~\citet{kakade2013optimal} proposed the virtual pivot mechanism that achieves incentive compatibility under a separability condition.~See \citet{cavallo2009mechanism},~\citet{bergemann2015introduction}, and~\citet{bergemann2019dynamic} for recent surveys on dynamic mechanism design. Our paper builds on the mechanism in~\citet{parkes2007online} and 
\citet{bergemann2010dynamic}, but focuses on learning a mechanism from data rather than designing a mechanism in a known environment.

Only a few recent works have investigated the learning of mechanisms.
\citet{kandasamy2020mechanism} provided an algorithm that recovers the VCG mechanism in a stationary multi-arm bandit setting. \citet{cen2021regret}, \citet{dai2021learning}, \citet{jagadeesan2021learning}, and \citet{liu2021bandit} studied the recovery of stable matching when the agents' utilities are given by bandit feedback. \citet{balcan2008reducing} shows that incentive-compatible mechanism design problems can be reduced to a structural risk minimization problem. In contrast, our work focuses on learning a dynamic mechanism in an offline setting.

Our paper is also related to the literature on offline RL~\citep{yu2020mopo, kumar2020conservative, liu2020provably, kidambi2020morel, jin2021pessimism,  xie2021bellman, zanette2021provable, yin2021towards, uehara2021pessimistic}. In the context of linear MDPs,
\citet{jin2021pessimism} provided a provably sample-efficient pessimistic value iteration algorithm, while \citet{zanette2021provable} used an actor-critic algorithm to further improve the upper bound. \citet{yin2021towards} proposed an instance-optimal method for tabular MDPs. \citet{uehara2021pessimistic} focused on model-based offline RL, while \citet{xie2021bellman} introduced a pessimistic soft policy iteration algorithm for offline RL with a general function approximation.
Compared to \citet{xie2021bellman}, in addition to the social welfare suboptimality, we also provide bounds on both the agents' and the seller's suboptimalities. We also show that our algorithm asymptotically satisfies key mechanism design desiderata, including truthfulness and individual rationality. Finally, we use optimistic and pessimistic estimates to learn the VCG prices, instead of the purely pessimistic approach discussed in~\citet{xie2021bellman}. This difference shows the difference between dynamic VCG and standard MDP. Our work also features a simplified proof of the main technical results in~\citet{xie2021bellman}.

Concurrent with our work,~\citet{lyu2022learning} studies the learning of a dynamic VCG mechanism in the online RL setting, where the mechanism is recovered through multiple rounds of interaction with the environment. Our work features several significant differences as we focus on general function approximation, whereas~\citet{lyu2022learning} only considers linear function approximation. We also focus on the offline RL setting, where the mechanism designer is not allowed to interact with the environment.

{\par \textbf{Our Contributions.}} We propose the first offline reinforcement learning algorithm that can learn a dynamic mechanism from any given data set. Additionally, our algorithm does not make any assumption about data coverage and only assumes that the underlying action-value functions are approximately realizable and the function class is approximately complete (see Assumptions~\ref{assumption:realizability} and~\ref{assumption:completeness} for detailed discussions), which makes the algorithm applicable to the wide range of real-world mechanism design problems with quasilinear, potentially non-convex utility functions~\citep{carbajal2013mechanism, bergemann2019dynamic}.

Our work features a soft policy iteration algorithm that allows for both optimistic and pessimistic estimates. 
When the data set has sufficient coverage of the optimal policy, the value function is realizable, and the function class is complete, our algorithm sublinearly converges to a mechanism with suboptimality $\cO(K^{-1/3})$, matching the rates obtained in~\citet{xie2021bellman}, where $K$ denotes the number of trajectories contained in the offline dataset. In addition to suboptimality guarantees, we further show that our algorithm is asymptotically individually rational and truthful with the same $\cO(K^{-1/3})$ guarantee. 

On the technical side, our work features a simplified theoretical analysis of pessimistic soft policy iteration algorithms~\citep{xie2021bellman}, using an adaptation of the classic tail bound discussed in~\citet{gyorfi2002distribution}. Moreover, unlike~\citep{xie2021bellman}, our simplified analysis is directly applicable to continuous function classes via a covering-based argument.

{\par \textbf{Notations.}} For any positive integer $z \in \ZZ_{> 0}$, let $[z] = \{1, 2, \ldots, z\}$. For any set $A$, let $\Delta(A)$ be the set of probability distributions supported on $A$. For two sequences $x_n, y_n$, we say $x_n = \cO(y_n)$ if there exist universal constants $n_0, C > 0$ such that $x_n < C y_n$ for all $n \geq n_0$. We use $\tilde{\cO}(\cdot)$ to denote $\cO(\cdot)$ ignoring log factors. Unless stated otherwise, we use $\|\cdot\|$ to denote the $\ell_2$-norm

\section{Background and Preliminaries}
\label{sec:background_and_preliminaries}

In this section, we define the dynamic mechanism and related notions.  In addition, we discuss three key mechanism design desiderata and their asymptotic versions. Finally, we introduce the general function approximation regime and related assumptions.

{\par \textbf{Episodic MDP.}}
Consider an episodic MDP given by $\cM = \rbr{\cS, \cA, H, \cP, \{r_{i,h}\}_{i = 0, h = 1}^{n, H}}$, where $\cS$ is the state space, $\cA$ is the seller's action space, $H$ is the length of each episode, and $\cP = \{\cP_{h}\}_{h = 1}^H$ is the transition kernel, where $\cP_h(s' | s, a)$ denotes the probability that the state $s \in \cS$ transitions to the state $s' \in \cS$ when the seller chooses the action $a \in \cA$ at the $h$-th step.\footnote{In mechanism design literature the reward function is often called ``value function." We use the tem ``reward function" throughout the paper to avoid confusion with state- and action-value functions.}  We assume that $\cS, \cA$ are both finite but can be arbitrarily large. Let $r_{i, h}: \cS \times \cA \to [0, 1]$ denote the reward function of an agent $i$ at step $h$ and $r_{0, h}: \cS \times \cA \to [- R_{\textrm{max}}, -n + R_{\textrm{max}}]$ the seller's reward function at step $h$, which can be negative, as policies can be costly.

A stochastic policy $\pi = \{\pi_h\}_{h = 1}^H$ maps the seller's state $\cS$ to a distribution over the action space $\cA$ at each step $h$, where $\pi_h(a | s)$ denotes the probability that the seller chooses the action $a \in \cA$ when they are in the state $s \in \cS$. We use $d_{\pi}$ to denote the state-action visitation measure over $\{\cS \times \cA\}^H$ induced by the policy $\pi$ and use $\EE_\pi$ as a shorthand notation for the expectation taken over the visitation measure. 

For any given reward function $r$ and any policy $\pi$, the (state-)value function $V_h^\pi(\cdot; r): \cS \to \RR$ is defined as $V_h^\pi(x; r) = \EE_\pi[\sum_{h' = h}^H r_{h'}(s_{h'}, a_{h'}) | s_h = x]$ at each step $h \in [H]$ and the corresponding action-value function ($Q$-function) $Q_h^\pi(\cdot, \cdot; r): \cS \times \cA \to \RR$ is defined as $Q_h^\pi(x, a; r) = \EE_\pi [\sum_{h' = h}^H r_{h'}(s_{h'}, a_{h'}) | s_h = x, a_h = a]$. For any function $g: \cS \times \cA \to \RR$, any policy $\pi$, and $h \in [H]$, we use the shorthand notation $g(s, \pi_h) = \EE_{a \sim \pi_h(\cdot |s)}[g(s, a)]$. We define the policy-specific Bellman evaluation operator at $h$ with respect to reward function $r$ under policy $\pi$ as
\begin{equation}\label{eqn:defn_bellman_op}
\begin{split}
    (\cT_{h, r}^{\pi} g)(x, a) =& r_h(x, a) +  \EE_{\cP}\sbr{g(s_{h + 1}, \pi_{h + 1}) | s_h = x, a_h = a},
\end{split}
\end{equation}
where $\EE_{\cP}$ is taken over the randomness in the transition kernel $\cP$.

We emphasize that while the problem setting we consider features multiple reward functions and interaction between multiple participants, our setting is not an instance of a Markov game~\citep{littman1994markov} as we allow only the seller to take actions.

{\par \textbf{Dynamic Mechanism as an MDP.}}
We assume that agents and sellers interact in the following way. Without loss of generality, assume that the seller starts at some fixed state $s_{0} \in \cS$ when $h = 1$. For each $h \in [H]$, the seller observes its state $s$ and takes some action $a \in \cA$. The agent receives the reward $r_{i, h}(s, a)$ and reports to the seller the received reward as $\tilde{r}_{i, h}(s_h, a_h) \in [0, 1]$, which may be different from the true reward. The seller receives a reward $r_{0, h}(s, a)$ and transitions to some state $s' \sim \cP_h(\cdot | s, a)$. At the end of each episode, the seller charges each agent $i$ a price $p_i \in \RR$, $i \in [n]$.

We stress the difference between the \emph{reported} reward, $\tilde{r}_{i, h}$, and the \emph{actual} reward, $r_{i, h}$. The reported reward is equal to $r_{i, h}$ if an agent is truthful but may be given by an arbitrary function $\tilde{r}_{i, h}: \cS \times \cA \to [0, 1]$ when the agent is not. In other words, the agent $i$'s reported reward comes from the actual reward function $r_{i, h}$ or some arbitrary reward function $\tilde{r}_{i, h}$. Our algorithm learns a mechanism via the reported rewards and, under certain assumptions, we can provide guarantees on the actual rewards.

For convenience, let $R = \sum_{i = 0}^n r_i$ be the sum of true reward functions and $R_{-i} = \sum_{i' \neq i} r_i$ the sum of true reward functions excluding agent $i$. Let $\tilde{R}$, $\tilde{R}_{-i}$ be defined similarly for the reported reward functions. Let $\cR = \{R_{-i}\}_{i = 1}^n \cup \{R\}$ be the set of all true reward functions that we will estimate and $\tilde{\cR}$ be that for the reported reward functions. When all agents are truthful, $\tilde{\cR} = \cR$. 
We also let 
\begin{align*}
&Q^*_h(\cdot, \cdot; r) = \max_{\pi \in \Pi}Q^\pi_h(\cdot, \cdot; r),
\ 
V^*_h(\cdot; r) = \max_{\pi \in \Pi}V^\pi_h(\cdot; r),
\\
&\pi^{*}_{r} = \argmax_{\pi \in \Pi}V^{\pi}_{1}(s_0; r),
\ \forall r \in \cR \cup \tilde{\cR}.
\end{align*}
As a shorthand notation, let $\pi^* = \pi^*_{R}$, $\pi^*_{-i} = \pi^*_{R_{-i}}$, $\tilde{\pi}^* = \pi^*_{\tilde{R}}$, and $\tilde{\pi}^*_{-i} = \pi^*_{\tilde{R}_{-i}}$.
Following~\citet{kandasamy2020mechanism}, we define the agents' and seller's utilities as follows. For any $i \in [n]$, we define the agent $i$'s utility under policy $\pi$, when charged price $p_i$, as 
\[
    U_i^\pi(p_i) = \EE_{\pi}[\sum_{h = 1}^H r_{i, h}(s_h, a_h)] - p_i = V_1^{\pi}(s_0; r_i) - p_i.
\] The seller's utility is similarly defined as 
\begin{align*}
    U_0^{\pi}(\{p_i\}_{i = 1}^n) &= \EE_{\pi}[\sum_{h = 1}^H r_{0, h}(s_h, a_h)] + \sum_{i = 1}^n p_i = V_1^{\pi}(s_0; r_0) + \sum_{i = 1}^n p_i.
\end{align*}
The social welfare for any policy $\pi \in \Pi$ is the sum of the utilities, $\sum_{i = 0}^n\EE_\pi [u_i] = V_1^\pi(s_0; R)$, similar to its definition in~\citet{bergemann2010dynamic}. 

\subsection{A Dynamic VCG Mechanism}
\label{subsec:a_dynamic_vcg_mechanism}

We now discuss a dynamic adaptation of the VCG mechanism and three key mechanism design desiderata it satisfies~\citep{nisan2007algorithmic}. We begin by introducing the dynamic adaptation of the VCG mechanism.

\begin{definition}[Dynamic VCG Mechanism]
\label{defn:dynamic_vcg_mechanism}
When agents interact according to the aforementioned~MDP, assuming the transition kernel $\cP$ and the reported reward functions $\{\tilde{r}_i\}_{i = 0}^n$ are known, the VCG mechanism selects $\tilde{\pi}^*$, the social welfare maximizing policy based on the reported rewards, and charges the agent $i$ price $p_i: \cS \to \RR$, given by $p_i = V^{*}_1(s_0; \tilde{R}_{-i}) - V^{\tilde{\pi}^*}_1(s_0; \tilde{R}_{-i})$. More generally, when the mechanism chooses to implement some arbitrary policy $\pi$, the VCG price for the agent $i$ is given by 
\begin{equation}
\label{eqn:defn_p_star}
    p_i = V^{*}_1(s_0; \tilde{R}_{-i}) - V^{\pi}_1(s_0; \tilde{R}_{-i}).
\end{equation}
\end{definition} 
Observe that when $H = 1$, the dynamic adaptation we propose reduces to exactly the classic VCG mechanism~\citep{nisan2007algorithmic}.

We highlight the three common mechanism desiderata in the mechanism design literature~\citep{nisan2007algorithmic,bergemann2010dynamic,hartline2012bayesian}.

\vspace{-10pt}
\begin{enumerate} 
\itemsep0em 
    \item \emph{Efficiency:} A mechanism is efficient if it maximizes social welfare when all agents report truthfully.
    \item \emph{Individual rationality:} A mechanism is individually rational if it does not charge an agent more than their reported reward, regardless of other agents' behavior. In other words, if an agent reports truthfully, they attain non-negative utility.
    \item \emph{Truthfulness:} A mechanism is truthful or (dominant strategy) incentive-compatible if, regardless of the truthfulness of other agents' reports, the agent's utility is maximized when they report their rewards truthfully.
\end{enumerate}
\vspace{-10pt}
In the MDP setting, the dynamic VCG mechanism simultaneously satisfies all three desiderata.
\begin{prop}\label{prop:mech_design_desiderata}
    With $\cP$ and the reported rewards $\{\tilde{r}_i\}_{i = 0}^n$ known, choosing $\tilde{\pi}^*$ and charging $p_i$ for all $i \in [n]$ according to~\eqref{eqn:defn_p_star} ensures that the mechanism satisfies truthfulness, individual rationality, and efficiency simultaneously.
\end{prop}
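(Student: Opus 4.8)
The plan is to reduce all three properties to two elementary facts about value functions, which I would state first. Fact~(i): for a fixed policy $\pi$, the map $r \mapsto V_1^\pi(s_0; r) = \EE_\pi[\sum_{h=1}^H r_h(s_h,a_h)]$ is linear in $r$. Fact~(ii): if $r \ge r'$ pointwise then $V_1^\pi(s_0;r) \ge V_1^\pi(s_0;r')$ for every $\pi$, hence also $V_1^*(s_0;r) \ge V_1^*(s_0;r')$. Both are immediate from the definitions in Section~\ref{sec:background_and_preliminaries}. Efficiency is then handled directly: when every agent reports truthfully, $\tilde R = R$, so the implemented policy is $\tilde\pi^* = \pi^*_{\tilde R} = \pi^*_R = \argmax_\pi V_1^\pi(s_0; R)$, which is precisely the social-welfare maximizer, and nothing more is needed.

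The core of the argument is a single identity. Fix agent $i$ together with the reports of all other agents, so that $\tilde R_{-i}$ is fixed, and let $\tilde\pi^* = \argmax_\pi V_1^\pi(s_0; \tilde r_i + \tilde R_{-i})$ denote the policy the mechanism runs when agent $i$ reports $\tilde r_i$. Substituting the price~\eqref{eqn:defn_p_star} into agent $i$'s utility evaluated at its true reward $r_i$, and using linearity~(i) to merge $V_1^{\tilde\pi^*}(s_0; r_i) + V_1^{\tilde\pi^*}(s_0; \tilde R_{-i})$ into $V_1^{\tilde\pi^*}(s_0; r_i + \tilde R_{-i})$, I obtain
\[
U_i^{\tilde\pi^*}(p_i) \;=\; V_1^{\tilde\pi^*}(s_0; r_i + \tilde R_{-i}) \;-\; V_1^*(s_0; \tilde R_{-i}).
\]

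Truthfulness and individual rationality both fall out of this identity. The subtracted term is independent of agent $i$'s report, and the first term is at most $V_1^*(s_0; r_i + \tilde R_{-i})$, with equality exactly when $\tilde\pi^*$ maximizes $V_1^\pi(s_0; r_i + \tilde R_{-i})$, i.e.\ when agent $i$ reports $\tilde r_i = r_i$; this is dominant-strategy truthfulness, and it holds for every fixed profile of the other reports. For individual rationality, take $\tilde r_i = r_i$ from the outset, so $\tilde\pi^*$ maximizes $V_1^\pi(s_0; \tilde R)$ and the identity reads $U_i^{\tilde\pi^*}(p_i) = V_1^*(s_0;\tilde R) - V_1^*(s_0;\tilde R_{-i})$; since $\tilde R = \tilde R_{-i} + \tilde r_i$ with $\tilde r_i \ge 0$ pointwise, monotonicity~(ii) gives $U_i^{\tilde\pi^*}(p_i) \ge 0$, regardless of what the others report.

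I do not expect a substantive obstacle here: this is the classical VCG payoff-equivalence argument transported to the episodic return, and it is short. The only points needing care are bookkeeping ones. The seller's reward $r_0$, which may be negative, lies inside both $\tilde R$ and every $\tilde R_{-i}$, so it cancels in all the differences above, and the only non-negativity invoked for individual rationality is that of agent $i$'s own reported reward $\tilde r_i \in [0,1]$. Ties in the $\argmax$ defining $\tilde\pi^*$ are immaterial, since every inequality above refers only to optimal values, not to the identity of the maximizing policy.
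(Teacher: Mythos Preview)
Your proof is correct and follows essentially the same route as the paper's: both hinge on the identity $U_i^{\tilde\pi^*}(p_i) = V_1^{\tilde\pi^*}(s_0; r_i + \tilde R_{-i}) - V_1^*(s_0; \tilde R_{-i})$, from which truthfulness follows because the subtracted term is report-independent and the first term is maximized by truthful reporting, and individual rationality follows by monotonicity since $r_i \ge 0$. Your presentation is slightly cleaner in isolating linearity and monotonicity as the two primitive facts, but the argument is the standard VCG one and matches the paper's proof in Appendix~\ref{app:proof_mech_design_desiderata}.
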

\begin{proof}
    See Appendix~\ref{app:proof_mech_design_desiderata} for a detailed proof.
\end{proof}
{\par \textbf{Performance Metrics.}} We use the following metrics to evaluate the performance of our estimated mechanism. Let the social welfare suboptimality of an arbitrary policy $\pi$ be
\begin{equation}\label{eqn:defn_subopt}
    \textrm{SubOpt}(\pi; s_0) = V_1^{*}(s_0; R) - V_1^{\pi}(s_0; R).
\end{equation}
For any $i \in [n] $, let 
$
    p_i^*(s_0) = V^{*}_1(s_0; R_{-i}) - V^{\pi^*}_1(s_0; R_{-i})
$
be the price charged to the agent $i$ by VCG under truthful reporting. We can similarly define the suboptimality with respect to the agents' and the seller's expected utilities. For any $i \in [n]$, the agent $i$'s suboptimality with respect to policy $\pi$ and price $\{p_i\}_{i = 1}^n$ is defined as
\begin{equation}\label{eqn:defn_subopt_agent}
\begin{split}
    &\textrm{SubOpt}_i(\pi, \{p_i\}_{i = 1}^n; s_0) = U_i^{\pi^*}(p_i^*) - U_i^{\pi}(p_i) = V_1^{\pi^*}(s_0; r_i) - p_i^*(s_0) - V_1^{\pi}(s_0; r_i) + p_i,
\end{split}
\end{equation} and the seller's suboptimality is
\begin{equation}\label{eqn:defn_subopt_seller}
\begin{split}
    \textrm{SubOpt}_0(\pi, \{p_i\}_{i = 1}^n; s_0)  &= U_0^{\pi^*}(\{p_i^*\}_{i = 1}^n) - U_0^{\pi}(\{p_i\}_{i = 1}^n)\\
    &= V_1^{\pi^*}(s_0; r_0)+ \sum_{i = 1}^n p_i^*- V_1^{\pi}(s_0; r_0) -\sum_{i = 1}^n p_i.
\end{split}
\end{equation}

\subsection{Offline Episodic RL with General Function Approximation}

We use offline RL in the general function approximation setting to minimize the aforementioned suboptimalities. Let $\cD$ be a precollected data set that contains $K$ trajectories, that is, $\cD = \{(x_h^\tau, a_h^\tau, \{\tilde{r}_{i, h}^\tau\}_{i = 1}^n, x_{h + 1}^\tau)\}_{h, \tau = 1}^{H, K}$. Following the setup in~\cite{xie2021bellman}, we consider the i.i.d.~data collection regime, where for all $h \in [H]$, $(x_h^\tau, a_h^\tau, x_{h + 1}^\tau)_{\tau = 1}^{K}$ is drawn from a distribution $\mu_h$ supported on $\cS \times \cA \times \cS$. The distribution $\mu$ over $\{\cS \times \cA \times \cS\}^H$ is induced by a behavioral policy used for data collection. We do not make any coverage assumption on $\mu$, similar to the existing literature on offline RL~\citep{jin2021pessimism, uehara2021pessimistic, zanette2021provable}.

Consider some general function class $\cF = \cF_1 \times \cF_2 \times \ldots \times \cF_H$. For each $h \in [H]$, we use some arbitrary yet bounded function class $\cF_h \subseteq \cS \times \cA \to [-(H - h + 1)R_{\textrm{max}}, (H - h + 1)R_{\textrm{max}}]$ to approximate $Q^\pi_h(\cdot, \cdot; r)$ for arbitrary $\pi$ and $r \in \tilde{\cR}$. For completeness, we let $\cF_{H + 1} = \{f: f(s, a) = 0\,\forall(s, a) \in \cS \times \cA\}$ be the singleton set containing only the degenerate function mapping all inputs to 0.

We make two common assumptions about the expressiveness of the function class $\cF$~\citep{antos2008learning,xie2021bellman}.
\begin{assumption}[Approximate Realizability]
\label{assumption:realizability}
  For any $r \in \Tilde{\cR}$ and $\pi \in \{\cS \to \Delta(\cA)\}^H$, there exists some $f_{r}^\pi \in \cF$ such that for all $h \in [H]$,
  \[
    \sup_{\pi' \in \{\cS \to \Delta(\cA)\}^H}\EE_{\pi'_h}\sbr{\|f_{h, r}^\pi(\cdot, \cdot; r) - Q_{h}^\pi(\cdot, \cdot; r)\|^2} \leq \epsilon_{\cF}.
  \] 
\end{assumption}
Intuitively, Assumption~\ref{assumption:realizability} dictates that for all reported reward functions $r$ and all policies $\pi$, there exists a function in $\cF$ that can approximate $Q^{\pi}_r$ sufficiently well.

\begin{assumption}[Approximate Completeness]
\label{assumption:completeness}
  For any $h \in [H], r \in \Tilde{\cR}$, and $\pi \in \{\cS \to \Delta(\cA)\}^H$, we have
  \[
    \sup_{f \in \cF_{h + 1}}\inf_{f' \in \cF_h} \EE_{\mu_h}[\|f' - \cT^{\pi}_{h, r}f\|^2] \leq \epsilon_{\cF, \cF}.
  \]
\end{assumption} 
Assumption~\ref{assumption:completeness} requires the function class $\cF$ to be approximately closed for all reported reward functions and policies. The assumption is prevalent in RL and can be omitted only in rare circumstances~\citep{xie2021batch}.

A fundamental problem in offline RL is the distribution shift, which occurs when the data generating distribution has only a partial coverage of the policy of interest~\citep{jin2021pessimism,zanette2021provable}. We address the issue with the help of distribution shift coefficient~\citep{xie2021bellman}.
\begin{definition}[Distribution Shift Coefficient]
\label{defn:distribution_shift_coef}
  Let $C^\pi(\nu)$ be the measure of distribution shift from an arbitrary distribution over $(\cS \times \cA)^H$, denoted $\nu$, to the data distribution $\mu$, when measured under the transition dynamics induced by a policy $\pi \in \{\cS \to \Delta(\cA)\}^H$. In particular,
  \[
    C^\pi(\nu) = \max_{f^1, f^2 \in \cF}\max_{h \in [H]}\max_{r \in \tilde{\cR}}\frac{\EE_{\nu_h}[\|f^1_h - \cT_{h, r}^{\pi}f^2_{h + 1}\|^2]}{\EE_{\mu_h}[\|f^1_h - \cT_{h, r}^{\pi}f^2_{h + 1}\|^2]}.
  \]
\end{definition} 
The coefficient controls how well the Bellman estimation error shifts from one distribution to another for any Bellman transition operator $\cT$. For a detailed discussion on how the coefficient generalizes previous measures of distribution shift, please refer to~\citet{xie2021bellman}. As a shorthand notation, when $\nu$ is the visitation measure induced by some policy $\pi'$, we let $C^{\pi}(\pi') = C^{\pi}(d_{\pi'}) = C^{\pi}(\nu)$.


In offline learning, with a finite data set, we can only hope to learn the desired mechanism up to certain statistical error. In particular, we state the  approximate versions of the desiderata for finite-sample analysis. 

\vspace{-10pt}
\begin{enumerate}[leftmargin=*,itemsep=0pt,parsep=0pt]
\itemsep0em 
    \item \emph{Asymptotic efficiency:} If all agents report truthfully, a mechanism is asymptotically efficient if $\textrm{SubOpt}(\pi; s_0) \in \cO(K^{-\alpha})$ for some $\alpha \in (0, 1)$.
    \item \emph{Asymptotic individual rationality:} Let $\pi$, $p_i$ be the policy and price chosen by the mechanism when the agent $i$ is truthful. A dynamic mechanism is asymptotically individually rational if $U_i^{\pi}(p_i) = -\cO(K^{-\alpha})$ for some $\alpha \in (0, 1)$, regardless of the truthfulness of other agents.
    \item \emph{Asymptotic truthfulness:} Let $\tilde{\pi}, \tilde{p}_i$ be the policy and price chosen by the mechanism when the agent $i$ is untruthful, and $\pi$, $p_i$ those chosen by the mechanism when the agent $i$ is truthful. We say a dynamic mechanism is asymptotically truthful if $U_i^{\tilde{\pi}}(\tilde{p}_i) - U_i^{\pi}(p_i) = \cO(K^{-\alpha})$ for some $\alpha \in (0, 1)$ regardless of the truthfulness of other agents.
\end{enumerate}
\vspace{-10pt}

As we will see in sequel, we propose a soft policy iteration algorithm that simultaneously satisfies all three criteria above with $\alpha = 1/3$ up to function approximation biases.



\section{Offline RL for VCG}\label{sec:alg}

We develop an algorithm that learns the dynamic VCG mechanism via offline RL. We begin by sketching out a basic outline of our algorithm. Recall the dynamic VCG mechanism given in Definition~\ref{defn:dynamic_vcg_mechanism}. At a high level, an algorithm that learns the dynamic VCG mechanism can be summarized as the following procedure.
\begin{enumerate}
    \item\label{meta_alg:learn_policy} Learn some policy $\check{\pi}$ such that the social welfare suboptimality ${\rm SubOpt}(\check{\pi}; s_0)$ is small.
    \item\label{meta_alg:learn_price} For all $i \in [n]$, estimate the VCG price $p_i$, defined in~\eqref{eqn:defn_p_star}, as $\hat{p}_i = G^{(1)}_{-i}(s_0) - G^{(2)}_{-i}(s_0)$, where $G^{(1)}_{-i}(s_0)$ estimates $V^*_1(s_0; \tilde{R}_{-i})$ and $G^{(2)}_{-i}(s_0)$ estimates $V^{\check{\pi}}_1(s_0; \tilde{R}_{-i})$.
\end{enumerate}
Step~\ref{meta_alg:learn_policy} simply minimizes the social welfare suboptimality using offline RL and has been extensively studied in prior literature~\citep{jin2021pessimism,zanette2021provable,xie2021bellman,uehara2021pessimistic}. 

A greater challenge lies in implementing Step~\ref{meta_alg:learn_price} and showing that the price estimates, $\{\hat{p}_i\}_{i = 1}^n$, satisfy all three approximate mechanism design desiderata. The estimate $G^{(2)}_{-i}(s_0)$ can be constructed by performing a policy evaluation of the learned policy, $\check{\pi}$. The construction of $G^{(1)}_{-i}(s_0)$ is more challenging, involving two separate steps: (1) learning a fictitious policy that approximately maximizes $V^{\pi}_1(s_0; \tilde{R}_{-i})$ over $\pi$ from offline data, and (2) performing a policy evaluation of the learned fictitious policy to obtain the estimate of the value function. Consequently, the policy evaluation and policy improvement subroutines are necessary for learning $G^{(1)}_{-i}(s_0)$ and implementing Step~\ref{meta_alg:learn_price}.

Our challenge is complicated by the fact that a combination of optimism and pessimism is needed for price estimation, whereas the typical offline RL literature only leverages pessimism~\citep{jin2021pessimism, uehara2021pessimistic, xie2021bellman}. For example, when $G^{(1)}_{-i}(s_0)$ is a pessimistic estimate of $V^{*}_1(s_0; \tilde{R}_{-i})$, the price estimate $\hat{p}_i$ is a ``lower bound,'' at least in the first term, of the actual price $p_i$ derived in~\eqref{eqn:defn_p_star}. A lower price estimate would be beneficial to the agent, but would increase the seller's suboptimality since, loosely speaking, the seller is ``paying for'' the uncertainty in the data set, and the reverse holds when $G^{(1)}_{-i}(s_0)$ is an optimistic estimate. The party burdened with the cost of uncertainty may be different in different settings. When allocating public goods, for instance, the cost of uncertainty should be the seller's burden to better benefit the public~\citep{bergemann2019dynamic}, whereas a company wishing to maximize their profit would prefer having the agents ``pay for" uncertainty~\citep{friedman2003pricing}.

To allow for such flexibility, we introduce hyperparameters $\zeta_1, \zeta_2 \in \{\mathtt{PES}, \mathtt{OPT}\}$, where $\zeta_1$ determines whether $G^{(1)}_{-i}(s_0)$ is a $\mathtt{PES}$simistic or $\mathtt{OPT}$imistic estimate and $\zeta_2$ does so for $G^{(2)}_{-i}(s_0)$.~To highlight the trade-off between agents' and seller's suboptimalities, we focus on the two extreme cases, $(\zeta_1, \zeta_2) = (\mathtt{PES}, \mathtt{OPT})$ and $(\zeta_1, \zeta_2) = (\mathtt{OPT}, \mathtt{PES})$, where the former favors the agents and the latter the seller. Depending on the goal of the mechanism designer, different~choices~of~$\zeta_1, \zeta_2$ may be selected to favor agents or the seller~\citep{maskin2008mechanism}.

With the crucial challenges identified, we introduce the specific algorithms that we use to implement Steps~\ref{meta_alg:learn_policy} and~\ref{meta_alg:learn_price}.

\subsection{Policy Evaluation and Soft Policy Iteration}
\label{subsec:policy_evaluation_and_soft_policy_iteration}

We use optimistic and pessimistic variants of soft policy iteration, commonly used for policy improvement~\citep{xie2021bellman, cai2020provably, zanette2021provable}. 
%
%
%
%
%
At a high level, each iteration of the soft policy iteration consists of two steps: policy evaluation and policy improvement. 

We begin by describing our policy evaluation algorithm. The Bellman error can be written as $f_h(s, a) - \cT_{h, r}^\pi f_{h + 1}(s, a)$ for any $(s, a) \in \cS \times \cA$, $h \in [H]$, and the estimate of the action value function $f \in \cF$ for policy $\pi$ and reward $r$. We construct an empirical estimate of the Bellman error as follows. For any $h \in [H]$, $f, f' \in \cF$ and $r \in \Tilde{\cR}$, we define $\cL_{h, r}(f_{h}, f'_{h + 1}, \pi; \cD)$ as
\begin{align*}
  &\cL_{h, r}(f_{h}, f'_{h + 1}, \pi; \cD) = \frac{1}{K}\sum_{\tau = 1}^K (f_{h}(s_h^\tau, a_h^\tau) - r_h(s_h^\tau, a_h^\tau) - f'_{h + 1}(s_{h + 1}^\tau, \pi_{h + 1}))^2,
\end{align*} 
where we slightly abuse the notation and let $r_h^\tau$ be the reported rewards $\tilde{r}_{i, h}^\tau$ summed over $i$ according to the chosen reported reward function $r \in \tilde{\cR}$. Recall that $\tilde{\cR} = \{\tilde{R}_{-i}\}_{i = 1}^n \cup \{\tilde{R}\}$ is the set of reported reward functions whose action-value functions need to be estimated.
The empirical estimate for Bellman error under policy $\pi$ at step $h$ is then constructed as
\begin{equation}\label{eqn:defn_cE}
\begin{split}
  \cE_{h, r}(f, \pi; \cD) = \cL_{h, r}(f_{h}, f_{h + 1}, \pi; \cD)-\min_{g \in \cF_{h}}\cL_{h, r}(g, f_{h + 1}, \pi; \cD).
\end{split}
\end{equation} 
The goal of the policy evaluation algorithm is to solve the following regularized optimization problems:
\begin{equation}\label{eqn:policy_evaluation_regularized_loss}
  \begin{aligned}
    \hat{Q}_{r}^{\pi} &= \argmin_{f \in \cF}-f_{1}(s_{0}, \pi) + \lambda\sum_{h = 1}^{H}\cE_{h, r}(f, \pi; \cD), \\
    \check{Q}_{r}^{\pi} & = \argmin_{f \in \cF}f_{1}(s_{0}, \pi) + \lambda\sum_{h = 1}^{H}\cE_{h, r}(f, \pi; \cD),
  \end{aligned}
\end{equation}
thereby obtaining optimistic and pessimistic estimates of $Q^\pi(\cdot, \cdot; r)$ for any policy $\pi$ and reward function $r$. We summarize the procedure in Algorithm~\ref{alg:policy_evaluation}.

\begin{algorithm}[ht]
\begin{algorithmic}[1]
\caption{Policy Evaluation}\label{alg:policy_evaluation}
\REQUIRE Reported reward $r \in \Tilde{\cR}$, regularization coefficient $\lambda$, dataset $\cD = \{(x_h^\tau, \omega_h^\tau, \{\tilde{r}_{i, h}^\tau\}_{i}^{n})\}_{h, \tau = 1}^{H, K}$, policy $\pi$.
\STATE For all $h, \tau$, calculate $r_h^\tau$ as the sum of $\tilde{r}_{i, h}^\tau$ over $i$ according to the reported reward function $r$.
\STATE Obtain the optimistic and pessimistic estimates of $Q^{\pi}_{r}$ using~\eqref{eqn:policy_evaluation_regularized_loss}
\STATE Return action-value function estimates $\hat{Q}_{r}^{\pi}, \check{Q}_{r}^{\pi}$.
\end{algorithmic}
\end{algorithm}

Next, we introduce the policy improvement procedure. At each step $t \in [T]$, we use the mirror descent with the Kullback-Leibler ($\mathrm{KL})$ divergence to update the policies for all $(s, a) \in \cS \times \cA, h \in [H]$. By direct computation, the update rule can be written as
\begin{align}
  \hat{\pi}^{(t + 1)}_{h, r}(a | s)\, \propto\, \hat{\pi}^{(t)}_{h, r}(a | s)\exp\rbr{\eta \hat{Q}_{h, r}^{(t)}(s, a)} \label{eqn:ospi_policy_update},\\
  \check{\pi}^{(t + 1)}_{h, r}(a | s)\, \propto\, \check{\pi}^{(t)}_{h, r}(a | s)\exp\rbr{\eta \check{Q}_{h, r}^{(t)}(s, a)}\label{eqn:pspi_policy_update},
\end{align} where $\hat{Q}_{h, r}$, $\check{Q}_{h, r}$ are the action-value function estimates obtained from~\eqref{eqn:policy_evaluation_regularized_loss}~\citep{bubeck2014convex,cai2020provably,xie2021bellman}.

For any set of $T$ policies $\{\pi^{(t)}\}_{t = 1}^T$, let $\mathrm{Unif}(\{\pi^{(t)}\}_{t = 1}^T)$ be the mixture policy formed by selecting one of $\{\pi^{(t)}\}_{t = 1}^T$ uniformly at random. The output of our policy improvement algorithm is then given by $\mathrm{Unif}(\{\hat{\pi}_r^{(t)}\}_{t = 1}^T)$ and $\mathrm{Unif}(\{\check{\pi}_r^{(t)}\}_{t = 1}^T)$, that is, the uniform mixture of optimistic and pessimistic policy estimates.
We summarize the soft policy iteration algorithm in the form of pseudocode in Algorithm~\ref{alg:spi}.

\begin{algorithm}[ht]
\begin{algorithmic}[1]
\caption{Soft Policy Iteration for Episodic MDPs}\label{alg:spi}
\REQUIRE Reported reward $r \in \Tilde{\cR}$, regularization coefficient $\lambda$, dataset $\cD = \{(x_h^\tau, \omega_h^\tau, \{\tilde{r}_{i, h}^\tau\}_{i}^{n})\}_{h, \tau = 1}^{H, K}$, number of iterations $T$, learning rate $\eta$.
\STATE Initialize optimistic and pessimistic polices, $\hat{\pi}_{r}^{(1)}$ and $\check{\pi}_{r}^{(1)}$, as the uniform policy.
\FOR{$t = 1, \ldots, T$}
\STATE Obtain the optimistic and pessimistic estimates of $Q^{\hat{\pi}_r^{(t)}}_{r}$ and $Q^{\check{\pi}_r^{(t)}}_{r}$ 
by  Algorithm~\ref{alg:policy_evaluation}.
\STATE Update policy estimates according to~\eqref{eqn:ospi_policy_update} and \eqref{eqn:pspi_policy_update}.
\ENDFOR
\STATE Let $\hat{\pi}_{r}^{\mathrm{out}} = \textrm{Unif}(\{\hat{\pi}_{r}^{(t)}\}_{t = 1}^T)$, $\check{\pi}_{r}^{\mathrm{out}} = \textrm{Unif}(\{\check{\pi}_{r}^{(t)}\}_{t = 1}^T)$.
\STATE Execute   Algorithm~\ref{alg:policy_evaluation} to construct optimistic action-value function $\hat{Q}_{r}^{\mathrm{out}}$  for  $\hat{\pi}_{r}^{\mathrm{out}}$ and pessimistic action-value function $\check{Q}_{r}^{\mathrm{out}}$  for $\check{\pi}_{r}^{\mathrm{out}}$, respectively.
\STATE \textbf{Return} $\{\hat{\pi}_{r}^{\mathrm{out}}, \hat{Q}_{r}^{\mathrm{out}}\}$ and $\{\check{\pi}_{r}^{\mathrm{out}}, \check{Q}_{r}^{\mathrm{out}}\}$.
\end{algorithmic}
\end{algorithm}

We defer the pseudocode of our main algorithm to Appendix~\ref{sec:pseudocode_of_offline_vcg_learn} in the form of Algorithm~\ref{alg:vcg_learn}, as its construction is apparent given the two key subroutines above.


\section{Main Results}
\label{subsec:main_results}

We begin by formally defining the policy class induced by the policy improvement algorithm, Algorithm~\ref{alg:spi}. It is a well-known result that policy iterates induced by mirror descent-style updates in~\eqref{eqn:ospi_policy_update} and~\eqref{eqn:pspi_policy_update} are in the natural policy class attained by soft policy iteration over $\cF$~\citep{cai2020provably, agarwal2021theory, xie2021bellman, zanette2021provable}, given by
\begin{equation*}
\begin{split}
  &\Pi_{\rm It} = \biggl\{\pi'_{h}(\cdot | s) \propto \exp\rbr{\eta \sum_{t = 1}^{T}f^{{t}}_{h}(s, \cdot)}: h \in [H], \{f^{(t)}_{h}\}_{t = 1}^{T} \subseteq \cF_{h}\biggr\}.
\end{split}
\end{equation*} 
Let $\Pi_{\mathrm{SPI}}$ denote the following set of policies
\begin{equation}\label{eqn:spi_policy_class}
\begin{split}
  \Pi_{\mathrm{SPI}} = &\Pi_{\rm It} \Bigl\{\pi: \pi = \mathrm{Unif}(\{\pi^{(t)}\}_{t = 1}^T), \{\pi^{(t)}\}_{t = 1}^T \subset \Pi_{\rm It}\Bigr\}.
\end{split}
\end{equation} 
Before stating the main result, we introduce an additional notation. 
The statistical error $\mathrm{Err}^{\mathrm{stat}}$ denotes
\begin{align*}
    &\mathrm{Err}^{\mathrm{stat}} = \Tilde{\cO}\rbr{H(HR_{\max})^{5/3}K^{-1/3} } + \Tilde{\cO}\biggl(H\rbr{(HR_{\max})^{1/3}\epsilon_{\cF}^{1/3} + \sqrt{\epsilon_{\cF} + \epsilon_{\cF, \cF}}}\biggr)\nonumber,
\end{align*} 
while the optimization error $\mathrm{Err}^{\mathrm{opt}}$ denotes 
\[
    \mathrm{Err}^{\mathrm{opt}} = \tilde{\cO}\rbr{H^2R_{\max}\sqrt{1/T}}.
\] 
To differentiate the policies learned under different truthfulness assumptions, let $\check{\pi} = \check{\pi}_R^{\rm out}$ be the policy chosen by the algorithm when all agents are truthful, let $\tilde{\pi} = \check{\pi}^{\rm out}_{r_i + \tilde{R}_{-i}}$ be the policy chosen when we only assume the agent $i$ is truthful, and let $\check{\pi}_{\tilde{R}} = \check{\pi}_{\tilde{R}}^{\rm out}$ be the policy chosen when no agent is truthful. Let $\check{\pi}^{(t)}, \tilde{\pi}^{(t)}, \check{\pi}_{\tilde{R}}^{(t)}$ be the iterates of Algorithm~\ref{alg:spi} when learning these policies. Denote the prices charged by $\{\hat{p}_i\}_{i = 1}^n, \{\tilde{p}_i\}_{i = 1}^n,$ and $\{\hat{p}_{ i, \tilde{R}}\}_{i = 1}^n$, respectively.

We then summarize the performance of our learned mechanism with asymptotic bounds in Theorem~\ref{thm:main_result_informal}. Theorem~\ref{thm:main_result} presented in Appendix~\ref{sec:proof_of_main_result} provides a more detailed result. 
\begin{theorem}[Informal]\label{thm:main_result_informal}
  With probability at least $1 - \delta$, with suitable choices of $\lambda, \delta$, under Assumptions~\ref{assumption:realizability} and~\ref{assumption:completeness}, the following claims hold simultaneously.
  \vspace{-10pt}
  \begin{enumerate}[leftmargin=*,itemsep=0pt,parsep=0pt]
    \itemsep0em 
    \item Algorithm~\ref{alg:vcg_learn} returns a mechanism that is asymptotically efficient. More specifically, assuming all agents report truthfully, we have
          \begin{align*}
            &\mathrm{SubOpt}(\check{\pi}; s_{0})\leq\mathrm{Err}^{\mathrm{opt}}  + \rbr{\frac{1}{T}\sum_{t = 1}^T\sqrt{C^{\check{\pi}^{(t)}}(\pi^*)}}\mathrm{Err}^{\mathrm{stat}}.
          \end{align*}
    \item Assuming all agents report truthfully, when $(\zeta_{1}, \zeta_{2}) = (\mathtt{PES}, \mathtt{OPT})$, we have
    \begin{align*}
        &\mathrm{SubOpt}_i(\check{\pi}, \{\hat{p}_i\}_{i = 1}^n; s_0)\leq\mathrm{Err}^{\mathrm{opt}} + \rbr{\frac{1}{T}\sum_{t = 1}^T\sqrt{C^{\check{\pi}^{(t)}}(\pi^*)}}\mathrm{Err}^{\mathrm{stat}}.
    \end{align*}
    When $(\zeta_{1}, \zeta_{2}) = (\mathtt{OPT}, \mathtt{PES})$, we have
    \begin{align*}
        &\mathrm{SubOpt}_i(\check{\pi}, \{\hat{p}_i\}_{i = 1}^n; s_0) \leq \mathrm{Err}^{\mathrm{opt}} + \mathrm{Err}^{\mathrm{stat}} \Biggl(\frac{1}{T}\sum_{t = 1}^T\sqrt{C^{\check{\pi}^{(t)}}(\pi^*)} + \sqrt{C^{\hat{\pi}_{-i}}(\hat{\pi}_{-i})} + \sqrt{C^{\check{\pi}}(\check{\pi})}\Biggr).
    \end{align*}
    \item Assuming all agents report truthfully, when $(\zeta_{1}, \zeta_{2}) = (\mathtt{PES}, \mathtt{OPT})$, we have
    \begin{align*}
        &\mathrm{SubOpt}_0(\check{\pi}, \{\hat{p}_i\}_{i = 1}^n; s_0)\\
        &\qquad \leq n \mathrm{Err}^{\mathrm{opt}} + \mathrm{Err}^{\mathrm{stat}} \Biggl(\sum_{i = 1}^n \sqrt{C^{\check{\pi}_{-i}}(\check{\pi}_{-i})}+ n\sqrt{C^{\check{\pi}}(\check{\pi})}  + \sum_{i = 1}^n\frac{1}{T}\sum_{t = 1}^T\sqrt{C^{\check{\pi}^{(t)}_{R_{-i}}}(\pi_{-i}^*)}\Biggr).
    \end{align*}
    and, when $(\zeta_1, \zeta_2) = (\mathtt{OPT}, \mathtt{PES})$, we have 
    \begin{align*}
        &\mathrm{SubOpt}_0(\check{\pi}, \{\hat{p}_i\}_{i = 1}^n; s_0)\\
        &\qquad \leq n \mathrm{Err}^{\mathrm{opt}}\mathrm{Err}^{\mathrm{stat}} \Biggl(\sum_{i = 1}^n\frac{1}{T} \sum_{t = 1}^T\sqrt{C^{\hat{\pi}_{R_{-i}}^{(t)}}(\hat{\pi}_{R_{-i}}^{(t)})}  + \sum_{i = 1}^n\frac{1}{T}\sum_{t = 1}^T\sqrt{C^{\hat{\pi}^{(t)}_{R_{-i}}}(\pi_{-i}^*)}\Biggr).
    \end{align*} 
    \item Algorithm~\ref{alg:vcg_learn} returns a mechanism that is asymptotically individually rational. More specifically, even when other agents are untruthful, when $(\zeta_1, \zeta_2) = (\mathtt{PES}, \mathtt{OPT})$ and the agent $i$ is truthful, their utility satisfies
    \begin{align*}
        &U_i^{\tilde{\pi}}(\tilde{p}_i) \geq - \mathrm{Err}^{\mathrm{opt}} - \mathrm{Err}^{\mathrm{stat}}\Biggl( \frac{1}{T}\sum_{t = 1}^T \sqrt{C^{\check{\pi}_{\tilde{R}_{-i}}^{(t)}}(\tilde{\pi}^*_{-i})} + \sqrt{C^{\check{\pi}_{\tilde{R}_{-i}}^{\rm out}}(\check{\pi}_{\tilde{R}_{-i}}^{\rm out})} +\frac{1}{T}\sum_{t = 1}^T\sqrt{C^{\tilde{\pi}^{(t)}}(\pi^*_{r_i + \tilde{R}_{-i}})} \Biggr).
    \end{align*} 
    and when $(\zeta_1, \zeta_2) = (\mathtt{OPT}, \mathtt{PES})$ and the agent $i$ is truthful, their utility satisfies
    \begin{align*}
        U_i^{\tilde{\pi}}(\tilde{p}_i) \geq &- \mathrm{Err}^{\mathrm{opt}}\\
        &- \mathrm{Err}^{\mathrm{stat}} \Biggl(\frac{1}{T}\sum_{t = 1}^T\sqrt{C^{\tilde{\pi}^{(t)}}(\pi^*_{r_i + \tilde{R}_{-i}}) }+ \sqrt{C^{\hat{\pi}_{\tilde{R}_{-i}}^{(t)}}(\tilde{\pi}^*_{-i})}+ \frac{1}{T}\sum_{t = 1}^T \sqrt{C^{\hat{\pi}^{(t)}_{\tilde{R}_{-i}}}(\hat{\pi}^{(t)}_{\tilde{R}_{-i}})} + \sqrt{C^{\tilde{\pi}}(\tilde{\pi})}\Biggr).
    \end{align*}
    \item Algorithm~\ref{alg:vcg_learn} returns a mechanism that is asymptotically truthful. More specifically, even when all the other agents are untruthful and irrespective of whether the agent $i$ is truthful or not, for all $i \in [n]$ when $\zeta_2 = \mathtt{OPT}$ the amount of utility gained by untruthful reporting is upper bounded as
    \begin{align*}
        &U_{i}^{\check{\pi}_{\tilde{R}}}(\hat{p}_{i, \tilde{R}}) - U_{i}^{\tilde{\pi}}(\tilde{p}_{i})\leq \mathrm{Err}^{\rm opt}+\mathrm{Err}^{\rm stat} \rbr{\frac{1}{T}\sum_{t = 1}^T \sqrt{C^{\tilde{\pi}^{(t)}}(\pi^*_{r_i + \tilde{R}_{-i}})} + \sqrt{C^{\check{\pi}_{\tilde{R}}}(\check{\pi}_{\tilde{R}}))}},
    \end{align*}
    and when $\zeta_2 = \mathtt{PES}$, the amount of utility gained by untruthful reporting is upper bounded as
    \begin{align*}
        &U_{i}^{\check{\pi}_{\tilde{R}}}(\hat{p}_{i, \tilde{R}}) - U_{i}^{\tilde{\pi}}(\tilde{p}_{i})\leq \mathrm{Err}^{\rm opt}+\mathrm{Err}^{\rm stat}\rbr{\frac{1}{T}\sum_{t = 1}^T \sqrt{C^{\tilde{\pi}^{(t)}}(\pi^*_{r_i + \tilde{R}_{-i}})} + \sqrt{C^{\tilde{\pi}}(\tilde{\pi}))}}. 
    \end{align*}
  \end{enumerate}
  \vspace{-10pt}
\end{theorem}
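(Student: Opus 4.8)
The plan is to isolate two estimation primitives, assemble an offline soft--policy--iteration suboptimality bound from them, and then read off all five claims by unfolding the VCG price identities behind Proposition~\ref{prop:mech_design_desiderata}. For the first primitive, fix a reward $r\in\tilde\cR$ and a policy $\pi$ and let $\hat Q^\pi_r,\check Q^\pi_r$ be the outputs of Algorithm~\ref{alg:policy_evaluation}; I would show that with probability $1-\delta$, for a suitable $\lambda$,
\[
\check Q^{\pi}_{r,1}(s_0,\pi)\le V_1^{\pi}(s_0;r)+\mathrm{Err}^{\mathrm{stat}},\qquad V_1^{\pi}(s_0;r)-\check Q^{\pi}_{r,1}(s_0,\pi)\le \sqrt{C^{\pi}(\pi)}\,\mathrm{Err}^{\mathrm{stat}},
\]
together with the mirror statement (upper/lower swapped) for the optimistic $\hat Q^\pi_r$. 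The mechanism is the standard one: a telescoping performance--difference identity writes $f_1(s_0,\pi)-V_1^{\pi}(s_0;r)$ as a signed sum of population Bellman residuals under $d_\pi$, which Definition~\ref{defn:distribution_shift_coef} bounds by $\sqrt{C^{\pi}(\pi)\,\EE_{\mu}\|\cdot\|^2}$; the regularizer $\lambda\sum_h\cE_{h,r}(f,\pi;\cD)$ in~\eqref{eqn:policy_evaluation_regularized_loss} pins down the empirical residual of the minimizer, and the centering $\min_{g\in\cF_h}\cL_{h,r}(g,\cdot)$ in~\eqref{eqn:defn_cE} makes $\cE_{h,r}$ concentrate uniformly around $\EE_{\mu_h}\|f_h-\cT^{\pi}_{h,r}f_{h+1}\|^2$, so that approximate completeness ($\epsilon_{\cF,\cF}$), not exact completeness, is enough. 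The uniform concentration I would get from a squared--loss excess--risk tail inequality in the style of~\citet{gyorfi2002distribution}, lifted to continuous $\cF$ by a covering argument at resolution $\sim 1/K$; optimizing the bias/variance trade--off over $\lambda$ (which forces $\lambda\asymp K^{1/3}$) converts the $K^{-1/2}$ residual concentration into the $K^{-1/3}$ value error in $\mathrm{Err}^{\mathrm{stat}}$, with $\epsilon_\cF$ entering additively.

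The second primitive is the KL mirror--descent regret bound for the iterates of Algorithm~\ref{alg:spi}: for any comparator $\pi^\dagger$, $\frac1T\sum_{t=1}^T\sum_{h=1}^H\big(Q^{(t)}_{h,r}(s_h,\pi^\dagger_h)-Q^{(t)}_{h,r}(s_h,\pi^{(t)}_h)\big)\lesssim H^2R_{\max}/\sqrt T=\mathrm{Err}^{\mathrm{opt}}$. Combining this with the first primitive applied at every iterate gives, for the pessimistic run with any reward $r$,
\[
V_1^{\pi^*_r}(s_0;r)-V_1^{\check\pi^{\mathrm{out}}_r}(s_0;r)\le \mathrm{Err}^{\mathrm{opt}}+\Big(\tfrac1T\textstyle\sum_{t=1}^T\sqrt{C^{\check\pi^{(t)}_r}(\pi^*_r)}\Big)\mathrm{Err}^{\mathrm{stat}},
\]
which is claim~1 for $r=R$ and simultaneously the guarantee for each fictitious run with $r=R_{-i}$; the optimistic analogue $V_1^{\pi^*_r}(s_0;r)-\hat Q^{\mathrm{out}}_{r,1}(s_0,\hat\pi^{\mathrm{out}}_r)\le\mathrm{Err}^{\mathrm{opt}}+(\cdots)\mathrm{Err}^{\mathrm{stat}}$ follows symmetrically.

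For claims 2--5 I would expand the relevant performance metric, substitute $\hat p_i=G^{(1)}_{-i}(s_0)-G^{(2)}_{-i}(s_0)$ with $G^{(1)}_{-i}$ estimating $V_1^{*}(s_0;\tilde R_{-i})$ (via the fictitious run on $\tilde R_{-i}$) and $G^{(2)}_{-i}$ estimating $V_1^{\check\pi}(s_0;\tilde R_{-i})$, and similarly for $\tilde p_i$ and $\hat p_{i,\tilde R}$, then regroup. The exact--VCG algebra behind Proposition~\ref{prop:mech_design_desiderata} does the combinatorics: for claim~2 everything collapses to $\mathrm{SubOpt}(\check\pi;s_0)$ plus the two estimation errors $G^{(1)}_{-i}-V_1^{*}(s_0;R_{-i})$ and $-(G^{(2)}_{-i}-V_1^{\check\pi}(s_0;R_{-i}))$; claim~3 is the same with the agent's and seller's roles interchanged (so the price errors enter with the opposite sign, and $n$ of them are summed); claim~4 additionally uses $r_i\ge 0$ to get $V_1^{*}(s_0;\tilde R_{-i})\le V_1^{*}(s_0;r_i+\tilde R_{-i})$ and then the soft--policy--iteration bound applied to $\tilde\pi=\check\pi^{\mathrm{out}}_{r_i+\tilde R_{-i}}$; and claim~5 is a difference of two such expansions in which the $V_1^{*}(s_0;\tilde R_{-i})$ estimates cancel exactly, because $G^{(1)}_{-i}$ is a function of $\tilde R_{-i}$ alone and is therefore identical whether or not agent~$i$ misreports, leaving only the $\zeta_2$ estimation error and the suboptimality of $\tilde\pi$ for $r_i+\tilde R_{-i}$. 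The hyperparameters $\zeta_1,\zeta_2$ enter through the sign of each surviving estimation error: when the chosen pessimism/optimism points in the direction favorable to the party in question (e.g.\ $\zeta_1=\mathtt{PES}$ makes $G^{(1)}_{-i}-V_1^{*}(s_0;R_{-i})\le 0$, lowering the price and helping the agent) that term is simply dropped, whereas in the unfavorable direction it is controlled by the \emph{reverse} one--sided estimate of Step~1, which is where the extra $\sqrt{C^{\hat\pi_{-i}}(\hat\pi_{-i})}$, $\sqrt{C^{\check\pi}(\check\pi)}$, $\sqrt{C^{\tilde\pi}(\tilde\pi)}$ summands in the $(\mathtt{OPT},\mathtt{PES})$ and $\zeta_2=\mathtt{PES}$ rows come from. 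A union bound over the $O(n)$ runs then yields all five statements at confidence $1-\delta$.

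The main obstacle is the first primitive: obtaining a clean two--sided control of the regularized estimator with the advertised $K^{-1/3}$ rate and the correct additive $\epsilon_\cF,\epsilon_{\cF,\cF}$ dependence, while keeping the concentration covering--based so that it extends to continuous function classes. The delicate points are the $\lambda\asymp K^{1/3}$ bias/variance balance and checking that the $\min_{g\in\cF_h}$ centering genuinely removes the variance that would otherwise force exact completeness; once that step is in place, everything downstream is VCG algebra plus bookkeeping of which distribution--shift coefficients survive the cancellations.
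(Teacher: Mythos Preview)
Your proposal is correct and follows essentially the same architecture as the paper's proof: the paper isolates exactly your two primitives (approximate optimism/pessimism of the regularized policy-evaluation output together with its reverse bound via the distribution-shift coefficient, and the KL mirror-descent regret for the iterates), combines them into the soft-policy-iteration suboptimality bound you state, and then unwinds each of the five claims by the VCG algebra you outline, including the $G^{(1)}_{-i}$ cancellation in truthfulness and the $r_i\ge 0$ step in individual rationality. Your identification of the $\lambda\asymp K^{1/3}$ trade-off, the covering-based concentration at resolution $\sim 1/K$, and the role of the $\min_{g\in\cF_h}$ centering in avoiding exact completeness all match the paper's technical choices.
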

\begin{proof}
  See Appendix~\ref{sec:proof_of_main_result} for a detailed proof.
\end{proof}

We make a few remarks about Theorem~\ref{thm:main_result_informal}.

{\par \textbf{Dependence on the number of trajectories $K$.}} The only term that depends on the number of trajectories $K$ is the statistical error $\mathrm{Err}^{\rm stat}$ and it decays at the $\tilde{\cO}(K^{-1/3})$ rate, matching the sample complexity of the pessimistic soft policy iteration algorithm~\citep{xie2021bellman}. When data set has coverage of the optimal policy and no function approximation bias, our algorithm converges sublinearly to a mechanism with suboptimality $\cO(K^{1/3})$. Furthermore, when data set has sufficient coverage over all policies and the function class satisfies Assumptions~\ref{assumption:realizability} and~\ref{assumption:completeness} exactly, our algorithm is asymptotically individually rational and truthful at the same $\cO(K^{1/3})$ rate, a result that is not implied by the existing literature on offline RL~\citep{xie2021bellman,jin2021pessimism,zanette2021provable}. 

{\par \textbf{Dependence on $\zeta_1, \zeta_2$.}} Observe that $\zeta_1$ and $\zeta_2$ affect the bounds in Theorem~\ref{thm:main_result_informal} by changing the distribution shift coefficients involved for each suboptimality. The inclusion of optimism in offline RL for mechanism design is crucial, as the optimal individual suboptimality rate is attainable only when $\zeta_1 = \mathtt{OPT}$. Different from the existing work on offline RL which extensively uses pessimism, we demonstrate the importance and necessity of optimism when offline RL is used to help design dynamic mechanisms~\citep{xie2021bellman,jin2021bellman,zanette2021provable}.

{\par \textbf{Dependence on $\cF, \Pi_{\rm SPI}$.}} The statistical error term $\textrm{Err}^{\textrm{stat}}$ is the only term that depends on $\cF, \Pi_{\rm SPI}$ through the log covering numbers of $\cF$ and $\Pi_{\rm SPI}$. The covering numbers are formally defined in Appendix~\ref{sec:concentration_analysis} and the theorem's dependence on the covering number is made explicit in the non-asymptotic version, Theorem~\ref{thm:main_result}. We emphasize that our results are directly applicable to general, continuous function classes via a covering-based argument, improving over the results in~\citet{xie2021bellman}.

{\par \textbf{Comparison to related work.}} While deep RL algorithms such as conservative $Q$-learning~\citep{kumar2020conservative}, conservative offline model-based policy optimization~\citep{yu2021combo}, and decision transformer~\citep{chen2021decision} have achieved empirical success on popular offline RL benchmarks, such algorithms rarely have theoretical guarantees without strong coverage assumptions. Within a mechanism design context, such a lack of theoretical guarantees is particularly problematic, as we cannot ensure that the learned mechanism is individually rational or truthful, potentially leading to significant ethical issues when applied to real-world problems. When compared to~\citet{xie2021bellman}, our work features a streamlined, simplified theoretical analysis, which we sketch below, that is directly applicable when both $|\cF|$ and $|\Pi|$ are unbounded using a covering-based argument, whereas the convergence bounds in~\citet{xie2021bellman} grows linearly in the term $\sqrt{\frac{\log |\cF||\Pi|/\delta}{K}}$ in the general function approximation setting.


\section{Proof Sketch}
To prove the results in Theorem~\ref{thm:main_result_informal}, we need to first analyze the concentration properties of the empirical Bellman error estimate, $B_{h, r}(f, \pi; \cD)$. As the function approximation class $\cF$ and the policy class $\Pi$ often contains infinite elements, it is crucial that the tail bounds we obtain remain finite even when both $|\cF|$ and $|\Pi|$ are infinite.

We begin by sketching out the concentration bounds for $B_{h, r}(f, \pi, \cD)$. Consider some arbitrary and fixed $h \in [H]$ and $r \in \Tilde{\cR}$. Let $Z$ be the random vector $(s_{h}, a_{h}, r_{h}(s_h, a_h), s_{h + 1})$, where $(s_{h}, a_{h}, s_{h + 1}) \sim \mu_{h}$ and $Z_j$ its realization for any $j \in [K]$ drawn independently from $\cD_{h}$. For any $f, f' \in \cF$, and $\pi \in \Pi$, we further define the random variable
\begin{equation}
\label{eqn:defn_g}
\begin{split}
      g_{f, f'}^{\pi}(Z) = &(f_{h}(s_{h}, a_{h}) - r_{h} - f'_{h +  1}(s_{h + 1}, \pi_{h + 1}))^{2}\\
      &- (\cT_{h, r}^{\pi}f'_{h + 1}(s_{h}, a_{h}) - r_{h} - f'_{h +  1}(s_{h + 1}, \pi_{h + 1}))^{2},
\end{split}
\end{equation} and $g_{f, f'}^{\pi}(Z_{j})$ its empirical counterpart evaluated on $Z$'s realization, $Z_{j}$. Recalling the definition of the Bellman transition operator $\cT_{h, r}^\pi$, we can show that
\[
\EE_{Z \sim \mu_h}[g_{f, f'}^{\pi}(Z)] = \|f_h - \cT_{h, r}^\pi f'_{h + 1}\|_{2, \mu_h}^2.
\] The boundedness of functions in $\cF$ and reward functions $r \in \Tilde{\cR}$ ensure that
\[
\Var(g_{f, f'}^\pi(Z)) \leq 16H^2 R_{\max}^2 \|f_h - \cT_{h, r}^\pi f'_{h + 1}\|_{2, \mu_h}^2.
\] With both the expectation and variance bounded, we can derive a tail bound for the realizations $g_{f, f'}^\pi (Z_j)$, thereby ensuring $\frac{1}{K}\sum_{j = 1}^K g_{f, f'}^\pi(Z_j)$ is sufficiently close to $\|f_h - \cT_{h, r}^\pi f'_{h + 1}\|_{2, \mu_h}^2$ for a specific choice of $f, f' \in \cF$ and $\pi \in \Pi$. 

We then focus on the function $g_{f, f'}^\pi$ itself. Let $\cG_{\cF, \Pi} = \{g_{f_h, f_{h + 1}'}^\pi: f, f' \in \cF, \pi \in \Pi\}$. Examining the definition of $g_{f, f'}^\pi(Z)$ in~\eqref{eqn:defn_g}, we can directly control the covering number of $\cG_{\cF, \Pi}$ using covering numbers of $\cF, \Pi$, more formally introduced in Appendix~\ref{sec:pseudocode_of_offline_vcg_learn}. Using a standard covering argument, we obtain a tail bound for $g_{f, f'}^\pi(Z)$ for all possible choices of $f, f' \in \cF$ and $\pi \in \Pi$, even when both $\cF$ and $\Pi$ are infinite, via the covering numbers of $\cF$ and $\Pi$.

Finally, we notice that $\frac{1}{K}\sum_{j = 1}^K g_{f, f'}^\pi(Z_j)$ is close to $B_{h, r}(f, \pi; \cD)$ under Assumptions~\ref{assumption:realizability} and~\ref{assumption:completeness}, linking the concentration behavior of $\frac{1}{K}\sum_{j = 1}^K g_{f, f'}^\pi(Z_j)$ to the empirical losses $B_{h, r}(f, \pi; \cD)$ we observe.

\subsection{Seller Suboptimality}
We now sketch the proof for bounding the seller's optimality to provide some intuition on how to prove Theorem~\ref{thm:main_result_informal}. 
Equation~\eqref{eqn:seller_subopt_decomp}, given in the appendix,
bounds $\mathrm{SubOpt}_0(\check{\pi}, \{\hat{p}_i\}_{i = 1}^n; s_0)$
as 
\begin{align*}
    \mathrm{SubOpt}_0(\check{\pi}, \{\hat{p}_i\}_{i = 1}^n; s_0)
    & \leq  \sum_{i = 1}^n\rbr{ V_1^{\pi^{*}_{-i}}(s_0; R_{-i}) - G^{(1)}_{-i}(s_{0})}+ \sum_{i=1}^n \rbr{ G^{(2)}_{-i}(s_{0}) -V_{1}^{\check{\pi}}(s_{0}, R_{-i}) }.
\end{align*} 

The second term corresponds to the error bound of Algorithm~\ref{alg:policy_evaluation}. When $\zeta_2 = \mathtt{OPT}$, the term exactly corresponds to the classic function evaluation error of the upper confidence bound methods. As such, it can be bounded using a combination of the distribution shift coefficient $C^{\check{\pi}}(\check{\pi})$ and the fact that $\hat{Q}^{\check{\pi}}_{R_{-i}}$ minimizes~\eqref{eqn:policy_evaluation_regularized_loss}. When $\zeta_2 = \mathtt{PES}$, we bound the term using the fact that the output of our policy evaluation algorithm is approximately pessimistic, similar to Lemma C.6 of~\citet{xie2021bellman}.

Next, we focus on the first term $G_{-i}^{(1)}(s_0) - V_{1}^{\pi_{-i}^{*}}(s_0; R_{-i})$. When $\zeta_1 = \mathtt{OPT}$, we use the following decomposition
\begin{align*}
    G_{-i}^{(1)}(s_0) - V_{1}^{\pi_{-i}^{*}}(s_0; R_{-i}) = &V^{\pi_{-i}^*}_1(s_0; R_{-i}) - \frac{1}{T}\sum_{t =1 }^T\hat{Q}^{(t)}_{1, R_{-i}}(s_0, \hat{\pi}^{(t)}_{1, R_{-i}})\\
    &+ \frac{1}{T}\sum_{t = 1}^T\rbr{\hat{Q}^{(t)}_{1, R_{-i}}(s_0, \hat{\pi}^{(t)}_{1, R_{-i}}) - V_1^{\hat{\pi}_{R_{-i}}^{(t)}}(s_0; R_{-i})}\\
    &+ V_1^{\hat{\pi}_{-i}}(s_0; R_{-i}) - \hat{Q}^{\rm out}_{1, R_{-i}}(s_0, \hat{\pi}_{1, -i}).
\end{align*} 
The first term can be bounded using the properties of mirror descent~\citep{bubeck2014convex}. The latter two terms are function evaluation errors, which we can bound in a similar way as $G^{(2)}_{-i}(s_{0}) -V_{1}^{\check{\pi}}(s_{0}, R_{-i})$. 
The first term can be similarly bounded when $\zeta_1 = \mathtt{PES}$,
completing the proof sketch.

\section{Discussion}
Our work provides the first algorithm that can provably learn the dynamic VCG mechanism with no prior knowledge, where the learned mechanism is asymptotically efficient, individually rational, and truthful. For future work, we aim to study the performance of our algorithm when the training set is corrupted with untruthful reports.

\section{Acknowledgements}
Zhaoran Wang acknowledges National Science Foundation
(Awards 2048075, 2008827, 2015568, 1934931), Simons
Institute (Theory of Reinforcement Learning), Amazon, J.P.
Morgan, and Two Sigma for their supports. Mladen Kolar acknowledges
the William S. Fishman Faculty Research Fund
at the University of Chicago Booth School of
Business. Zhuoran Yang
acknowledges Simons Institute (Theory of Reinforcement
Learning).


\bibliographystyle{ims}
\bibliography{ref}

\newpage
\appendix
\section{Table of Notation} \label{sec:tab_notation}

The following table summarizes the notation used in the paper.


\begin{table}[!h]
\centering
\renewcommand*{\arraystretch}{1.5}
\begin{tabular}{ >{\centering\arraybackslash}m{2.5cm} | >{\centering\arraybackslash}m{12cm} } 
\hline\hline
Notation & Meaning \\ 
\hline
$r_{i, h} / \Tilde{r}_{i,h}$ & actual / reported reward function for agent $i$ at step $h \in [H]$\\
$R_{-i, h}/(\Tilde{R}_{-i, h})$ & actual / reported sum of reward function across all participants sans agent $i$\\
$R_h / (\Tilde{R}_h)$ & actual / reported sum of reward functions across all participants\\
$\cR / \Tilde{\cR}$ & actual / reported reward functions of interest. \\
$\pi_h$ & the policy taken by the seller at step $h \in [H]$\\
$\cT_{h, r}^{\pi}$ & policy specific Bellman transition operator\\
$C^{\pi}(\nu)$ & Distribution shift coefficient (see Definition~\ref{defn:distribution_shift_coef})\\
$C^{\pi_1}(\pi_2)$ & Shorthand notation for $C^{\pi_1}(d_{\pi_2})$\\
\hline
$\hat{\pi}^{(t)}_{h, r} / (\check{\pi}^{(t)}_{h, r})$ & optimistic / pessimistic policy estimate at the $t$-th iteration of Algorithm~\ref{alg:spi} with input $r \in \Tilde{\cR}$ \\
$\hat{Q}^{(t)}_{h, r} / (\check{Q}^{(t)}_{h, r})$ & optimistic / pessimistic action-value function estimate at the $t$-th iteration of Algorithm~\ref{alg:spi} with input $r \in \Tilde{\cR}$. Shorthand for $\hat{Q}^{\hat{\pi}^{(t)}_{h, r}}_{h, r}\, (\check{Q}^{\check{\pi}^{(t)}_{h, r}}_{h, r})$\\
$\hat{\pi}^{\rm out}_{h, r} / (\check{\pi}^{\rm out}_{h, r})$ & optimistic / pessimistic policy output of Algorithm~\ref{alg:spi} with input $r \in \Tilde{\cR}$ \\
$\hat{Q}^{\rm out}_{h, r} / (\check{Q}^{\rm out}_{h, r})$ & optimistic / pessimistic action-value function estimate output of Algorithm~\ref{alg:spi} with input $r \in \Tilde{\cR}$. Shorthand for $\hat{Q}^{\hat{\pi}^{\rm out}_{h, r}}_{h, r}\, (\check{Q}^{\check{\pi}^{\rm out}_{h, r}}_{h, r})$\\
\hline \hline
\end{tabular}
\label{tab:notation}
\end{table}
\section{Proof of Mechanism Design Desiderata (Proposition~\ref{prop:mech_design_desiderata})}
\label{app:proof_mech_design_desiderata}

Those familiar with the literature on mechanism design may quickly realize that our price function is derived using the Clarke pivot rule~\citep{nisan2007algorithmic}. The result is directly derived from the properties of the VCG mechanism~\citep{nisan2007algorithmic,parkes2007online, hartline2012bayesian}. We include a full proof for completeness.

With $\cP$ and $\{\tilde{r}_i\}_{i = 0}^n$ given, the state-value functions $V_h^\pi(s_0, r)$ can be explicitly calculated for all $h \in [H], r \in \tilde{\cR}$. We can then obtain exactly $\tilde{\pi}^*$ and directly calculate $p_i = V_1^{*}(s_0, \tilde{R}_{-i}) - V_1^{\tilde{\pi}^*}(s_0, \tilde{R}_{-i})$. Thus, the proposed mechanism is feasible when the rewards and transition kernel are known.

For convenience, let 
\[
\pi^{(1)} = \pi_{r_i + \tilde{R}_{-i}}^* = \argmax_{\pi \in \Pi} V_1^\pi(s_0; r_i + \tilde{R}_{-i})
\quad\text{and}\quad
\pi^{(2)} = \pi_{\tilde{R}}^* = \argmax_{\pi \in \Pi}V_1^\pi(s_0; \Tilde{R}),
\] 
denote the policies chosen by the mechanism when the agent $i$ is truthful and untruthful, respectively, without assumptions on the truthfulness of other agents.

We now show that the three desiderata are satisfied by the mechanism.
\begin{enumerate}
    \item Efficiency. When the agents report $\{r_i\}_{i = 1}^n$ truthfully, the chosen policy $\pi^*$ maximizes the social welfare and is efficient by definition.

    \item Individual rationality. The price charged from the agent $i$ is
    \[
        p_i = V_1^{*}(s_0; \Tilde{R}_{-i}) - V_1^{\pi^{(2)}}(s_0; \Tilde{R}_{-i}).
    \] 
    Our goal is to then show that $V_1^{\pi^{(2)}}(s_0; \tR_i) \geq p_i$.
    That is, the value function of the \emph{reported} reward is no less than the price charged. Observe that
    \begin{align*}
        V_1^{\pi^{(2)}}(s_0; \tR_i) - \tilde{p}_i = V_1^{\pi^{(2)}}(s_0; \Tilde{R}) - V_1^{*}(s_0; \Tilde{R}_{-i}).
    \end{align*} 
    Let $\pi^{(2)}_{-i} = \argmax_{\pi \in \Pi}V_1^{\pi}(s_0; \Tilde{R}_{-i})$. Then we know that
    \[
        V_1^{\pi^{(2)}}(s_0; \tR_i) - \tilde{p}_i \geq V_1^{\pi^{(2)}_{-i}}(s_0; \Tilde{R}) - V_1^{\pi^{(2)}_{-i}}(s_0; \Tilde{R}_{-i}) = V_1^{\pi^{(2)}_{-i}}(s_0; \tR_i) \geq 0.
    \]
    
    \item Truthfulness: 
    If $\Tilde{r}_i = r_i$, that is, the agent $i$ reports truthfully, they attain the following utility
    \[
        U_i^{\pi^{(1)}}(p_i) = V_1^{\pi^{(1)}}(s_0; r_i)  - V_1^{*}(s_0; \Tilde{R}_{-i}) + V_1^{\pi^{(1)}}(s_0; \Tilde{R}_{-i}) = V_1^{\pi^{(1)}}(s_0; r_i + \tilde{R}_{-i}) - V_1^{*}(s_0; \Tilde{R}_{-i}).
    \]
    When the agent reports some arbitrary $\tilde{r}_i$, the agent receives the following utility instead
    \[
        U_i^{\pi^{(2)}}(p_i) = V_1^{\pi^{(2)}}(s_0; r_i)  - V_1^{*}(s_0; \Tilde{R}_{-i}) + V_1^{\pi^{(2)}}(s_0; \Tilde{R}_{-i}) = V_1^{\pi^{(2)}}(s_0; r_i + \tilde{R}_{-i}) - V_1^{*}(s_0; \Tilde{R}_{-i}).
    \]
    Since $\pi^{(1)}$ maximizes $V_1^\pi(s_0; r_i + \tilde{R}_{-i})$, $u_i \geq \tilde{u}_i$ regardless of other agents' reported reward $\{\tR_j\}_{j \neq i}$ and the mechanism is truthful.
\end{enumerate}




\section{Pseudocode for Offline VCG Learn}
\label{sec:pseudocode_of_offline_vcg_learn}

Let $\cN_{\infty}(\epsilon, \cF)$ be the \(\epsilon\)-covering number of $\cF$ with respect to the $\ell_{\infty}$-norm, that is, the cardinality of the smallest set of functions $\{f^{l}\}_{l = 1}^{N_{L}}$ such that for all $f \in \cF$ there exists some $l \in [L]$ such that
\[
  \max_{h \in [H]}\sup_{s \in \cS, a \in \cA}|f^{l}_{h}(s, a) - f_{h}(s, a)| \leq \epsilon.
\] 
We also let $\cN_{\infty, 1}(\epsilon, \Pi)$ be the \(\epsilon\)-covering number of $\Pi$ with respect to the following norm:
\[
  \ell_{\infty, 1}(\pi - \pi') = \sup_{h \in [H], s\in \cS}\sum_{a \in \cA}|\pi_{h}(a | s) - \pi_{h}'(a | s)|.
\]
With the covering numbers defined, we introduce the main algorithm and the parameter choices for the algorithm, which depend on the covering numbers. For the main algorithm, we set
\begin{equation}\label{eqn:param_setting}
    \lambda = \rbr{\frac{R_{\max}}{H^2(\epsilon_{\rm S} + 3\epsilon_{\cF})^2}}^{1/3}, \quad \eta = \sqrt{\frac{\log |\cA|}{2H^2R_{\max}^2T}},
\end{equation}
where
\[
    \epsilon_{\rm S}= \frac{5136}{K}H^{4}R^{4}_{\max}\log \bigg (   56nH \cdot \cN_{\infty}\rbr{\frac{19H^{3}R^{3}_{\max}}{K}, \cF} \cdot \cN_{\infty, 1}\rbr{\frac{19H^{4}R^{4}_{\max}}{K}, \Pi_{\rm SPI}} \Big/ \delta \biggr ) .
\]
The pseudocode for our main algorithm can then be summarized as Algorithm~\ref{alg:vcg_learn}.

\begin{algorithm}[ht!]
\begin{algorithmic}[1]
\caption{Offline VCG Learn}\label{alg:vcg_learn}
\REQUIRE Hyperparameters $\zeta_{1}, \zeta_{2 }\in \{\tt OPT, PES\}$, regularization coefficient $\lambda$, number of iterations $T$, learning rate $\eta$.
\STATE Let $\check{\pi}_{\tilde{R}}^{\rm out}$ be the pessimistic policy output of Algorithm~\ref{alg:spi} with $r = \tilde{R}$, $T$, and $\lambda$, $\eta$ set according to~\eqref{eqn:param_setting}.\label{algline:learn_policy_for_welfare}
\FOR {Agent $i = 1, 2, \dots, n$}
\STATE Call Algorithm~\ref{alg:spi} with $r = \tilde{R}_{-i}$, $T$, and $\lambda$, $\eta$ set according to~\eqref{eqn:param_setting}.
\STATE If $\zeta_1 = \mathtt{OPT}$, let $G_{-i}^{(1)}(s_{0}) = \hat{Q}^{\mathrm{out}}_{1, \tilde{R}_{-i}}(s_{0}, \hat{\pi}_{1, \tilde{R}_{-i}}^{\mathrm{out}})$. Otherwise let $G_{-i}^{(1)}(s_{0}) = \check{Q}^{\mathrm{out}}_{1, \tilde{R}_{-i}}(s_{0},\check{\pi}_{1, \tilde{R}_{-i}}^{\mathrm{out}})$.
\STATE Call Algorithm~\ref{alg:policy_evaluation} with $r = \tilde{R}_{-i}$, $\pi = \check{\pi}_{\tilde{R}}^{\rm out}$, and $\lambda$ set according to~\eqref{eqn:param_setting}.
\STATE If $\zeta_2 = \mathtt{OPT}$, let $G_{-i}^{(2)}(s_{0}) = \hat{Q}^{\check{\pi}_{\tilde{R}}^{\rm out}}_{1, \tilde{R}_{-i}}(s_{0}, \check{\pi}_{1,\tilde{R}}^{\rm out})$.\\
Otherwise let $G_{-i}^{(2)}(s_{0}) = \check{Q}^{\check{\pi}_{\tilde{R}}^{\rm out}}_{1, \tilde{R}_{-i}}(s_{0}, \check{\pi}_{1,\tilde{R}}^{\rm out})$.
\STATE Set the estimated price $\hat{p}_i = G^{(1)}_{-i}(s_{0}) - G^{(2)}_{-i}(s_{0})$.\label{algline:vcg_pricing}
\ENDFOR
\STATE Return policy $\check{\pi}_{\tilde{R}}^{\rm out}$ and estimated prices $\{\hat{p}_i\}_{i = 1}^n$.
\end{algorithmic}
\end{algorithm}

\section{Proof of Theorem~\ref{thm:main_result_informal}}
\label{sec:proof_of_main_result}

We re-state Theorem~\ref{thm:main_result_informal} in a finite sample form.
\begin{theorem}[Theorem~\ref{thm:main_result_informal} restated]
\label{thm:main_result}
Suppose that $\lambda, \eta$ are set according to~\eqref{eqn:param_setting} and Assumptions~\ref{assumption:realizability} and~\ref{assumption:completeness} hold.
Then, with probability at least $1 - \delta$, the following holds simultaneously.
  \vspace{-10pt}
  \begin{enumerate}
  \itemsep0em
      \item Assuming all agents report truthfully, the suboptimality of the output policy $\check{\pi}$ is bounded as
        \begin{align*}
          &{\rm SubOpt}(\check{\pi}; s_0) \leq 2H^2R_{\max}\sqrt{\frac{2\log|\cA|}{T}} + \sqrt{\epsilon_{\cF}} + 2(HR_{\max})^{1/3}(\epsilon_{\rm S} + 3\epsilon_{\cF})^{1/3}\\
          &\qquad + H\rbr{\frac{1}{T}\sum_{t = 1}^T\sqrt{C^{\check{\pi}^{(t)}}({\pi^*})}}\rbr{2(HR_{\max})^{1/3}(\epsilon_{\rm S} + 3\epsilon_{\cF})^{1/3} +\sqrt{ 8\epsilon_{\rm S} + 12 \epsilon_{\cF} + 3\epsilon_{\cF, \cF}}}.
        \end{align*}
      \item Assuming all agents report truthfully, when $(\zeta_1, \zeta_2) = (\mathtt{PES}, \mathtt{OPT})$, the agent $i$'s suboptimality, for all $i \in [n]$, satisfies
      \begin{align*}
          &{\rm SubOpt}_i(\check{\pi}, \{\hat{p}_i\}_{i =1}^n; s_0) \leq 2H^2R_{\max}\sqrt{\frac{2\log|\cA|}{T}} + 3\sqrt{\epsilon_{\cF}} + 6(HR_{\max})^{1/3}(\epsilon_{\rm S} + 3\epsilon_{\cF})^{1/3}\\
          &\qquad + H\rbr{\frac{1}{T}\sum_{t = 1}^T\sqrt{C^{\check{\pi}^{(t)}}({\pi^*})}}\rbr{2(HR_{\max})^{1/3}(\epsilon_{\rm S} + 3\epsilon_{\cF})^{1/3} +\sqrt{ 8\epsilon_{\rm S} + 12 \epsilon_{\cF} + 3\epsilon_{\cF, \cF}}},
      \end{align*} 
      and when $(\zeta_1, \zeta_2) = (\mathtt{OPT}, \mathtt{PES})$,the agent $i$'s suboptimality, for all $i \in [n]$, satisfies 
      \begin{align*}
          &{\rm SubOpt}_i(\check{\pi}, \{\hat{p}_i\}_{i =1}^n; s_0) \leq  2H^2R_{\max}\sqrt{\frac{2\log|\cA|}{T}} + \sqrt{\epsilon_{\cF}} + 2(HR_{\max})^{1/3}(\epsilon_{\rm S} + 3\epsilon_{\cF})^{1/3}\\
          &\qquad + H\rbr{\frac{1}{T}\sum_{t = 1}^T\sqrt{C^{\check{\pi}^{(t)}}({\pi^*})} + \sqrt{C^{\hat{\pi}_{-i}}({\hat{\pi}_{-i}})} + \sqrt{C^{\check{\pi}}(\check{\pi})}}\\
          &\hspace{3em} \times \rbr{2(HR_{\max})^{1/3}(\epsilon_{\rm S} + 3\epsilon_{\cF})^{1/3} +\sqrt{ 8\epsilon_{\rm S} + 12 \epsilon_{\cF} + 3\epsilon_{\cF, \cF}}}.
      \end{align*}
      \item Assuming all agents report truthfully, when $(\zeta_1, \zeta_2) = (\mathtt{PES}, \mathtt{OPT})$, the seller's suboptimality satisfies 
      \begin{align*}
          &{\rm SubOpt}_0(\check{\pi}, \{\hat{p}_i\}_{i = 1}^n; s_0) \leq 2nH^{2}R_{\max}\sqrt{\frac{2\log |\cA|}{T}} + n\sqrt{\epsilon_{\cF}} + 2n(HR_{\max})^{1/3}(\epsilon_{\rm S} + 3\epsilon_{\cF})^{1/3}\\
          &\hspace{2em} + H\rbr{\sum_{i = 1}^n\rbr{\sqrt{C^{\check{\pi}_{-i}}(\check{\pi}_{-i})}  + \frac{1}{T}\sum_{t = 1}^T  \sqrt{C^{\check{\pi}^{(t)}_{R_{-i}}}(\pi^*_{-i})}} + n\sqrt{C^{\check{\pi}}(\check{\pi})}}\\
          &\hspace{3em}\times\rbr{2(HR_{\max})^{1/3}(\epsilon_{\rm S} + 3\epsilon_{\cF})^{1/3} + \sqrt{8\epsilon_{\rm S} + 12\epsilon_{\cF} + 3\epsilon_{\cF, \cF}}},
      \end{align*} 
      and when $(\zeta_1, \zeta_2) = (\mathtt{OPT}, \mathtt{PES})$, the seller's suboptimality satisfies
      \begin{align*}
          &{\rm SubOpt}_0(\check{\pi}, \{\hat{p}_i\}_{i = 1}^n; s_0) \leq 2nH^{2}R_{\max}\sqrt{\frac{2\log |\cA|}{T}} + 2n\sqrt{\epsilon_{\cF}} + 4n(HR_{\max})^{1/3}(\epsilon_{\rm S} + 3\epsilon_{\cF})^{1/3} \\
          &\qquad + H\rbr{\sum_{i = 1}^n\frac{1}{T}\sum_{t = 1}^T\rbr{\sqrt{C^{\hat{\pi}^{(t)}_{R_{-i}}}(\pi_{-i}^*)} + \sqrt{C^{\hat{\pi}^{(t)}_{R_{-i}}}(\hat{\pi}^{(t)}_{R_{-i}})}}}\\
          &\hspace{3em}\times\rbr{2(HR_{\max})^{1/3}(\epsilon_{\rm S} + 3\epsilon_{\cF})^{1/3} + \sqrt{8\epsilon_{\rm S} + 12\epsilon_{\cF} + 3\epsilon_{\cF, \cF}}}.
      \end{align*}
    \item (Asymptotic Individual Rationality) Even when other agents are untruthful, when $(\zeta_1, \zeta_2) = (\mathtt{PES}, \mathtt{OPT})$ and the agent $i$ is truthful, their utility is lower bounded by
      \begin{align*}
          U_{i}^{\tilde{\pi}}(\tilde{p}_{i}) &\geq -4H^{2}R_{\max}\sqrt{\frac{2\log |\cA|}{T}} - 3\sqrt{\epsilon_{\cF}} - 6(HR_{\max})^{1/3}(\epsilon_{\rm S} + 3\epsilon_{\cF})^{1/3} \\
          &\quad - H\rbr{\frac{1}{T}\sum_{t = 1}^T \rbr{\sqrt{C^{\tilde{\pi}^{(t)}}(\pi^*_{r_i + \tilde{R}_{-i}})} + \sqrt{C^{\check{\pi}^{(t)}_{\tilde{R}_{-i}}}(\tilde{\pi}_{-i}^*)}} + \sqrt{C^{\check{\pi}^{\rm out}_{\tilde{R}_{-i}}}(\check{\pi}^{\rm out}_{\tilde{R}_{-i}})}}\\
          &\qquad \times\rbr{2(HR_{\max})^{1/3}(\epsilon_{\rm S} + 3\epsilon_{\cF})^{1/3} + \sqrt{8\epsilon_{\rm S} + 12\epsilon_{\cF} + 3\epsilon_{\cF, \cF}}},
      \end{align*} 
      and when $(\zeta_1, \zeta_2) = (\mathtt{OPT}, \mathtt{PES})$, their utility is lower bounded by
      \begin{align*}
          U_{i}^{\tilde{\pi}}(\tilde{p}_{i}) &\geq -4H^{2}R_{\max}\sqrt{\frac{2\log |\cA|}{T}} -2 \sqrt{\epsilon_{\cF}} - 4(HR_{\max})^{1/3}(\epsilon_{\rm S} + 3\epsilon_{\cF})^{1/3}\\
          &\quad - H\rbr{\frac{1}{T}\sum_{t = 1}^T\rbr{\sqrt{C^{\tilde{\pi}^{(t)}}(\pi^*_{r_i + \tilde{R}_{-i}})} + \sqrt{C^{\hat{\pi}_{\tilde{R}_{-i}}^{(t)}}(\tilde{\pi}_{-i}^*)} + \sqrt{C^{\hat{\pi}_{\tilde{R}_{-i}}^{(t)}}(\hat{\pi}_{\tilde{R}_{-i}}^{(t)})}} + \sqrt{C^{\tilde{\pi}}(\tilde{\pi})}}\\
          &\qquad \times \rbr{2(HR_{\max})^{1/3}(\epsilon_{\rm S} + 3\epsilon_{\cF})^{1/3} + \sqrt{8\epsilon_{\rm S} + 12\epsilon_{\cF} + 3\epsilon_{\cF, \cF}}}.
      \end{align*}
    \item (Asymptotic Truthfulness)
    Even when all the other agents are untruthful and irrespective of whether the agent $i$ is truthful or not, when $\zeta_2 = \mathtt{OPT}$, the amount of utility gained by untruthful reporting is upper bounded by
      \begin{align*}
          &U_{i}^{\check{\pi}_{\tilde{R}}}(\hat{p}_{i, \tilde{R}}) - U_{i}^{\tilde{\pi}}(\tilde{p}_{i}) \leq 2H^{2}R_{\max}\sqrt{\frac{2\log|\cA|}{T}} +2\sqrt{\epsilon_{\cF}} + 4(HR_{\max})^{1/3}(\epsilon_{\rm S} + 3\epsilon_{\cF})^{1/3} \\
          &\hspace{2em} + H\rbr{\frac{1}{T}\sum_{t=1}^T\sqrt{C^{\tilde{\pi}^{(t)}}(\pi^*_{r_i + \tilde{R}_{-i}})}+\sqrt{C^{\check{\pi}_{\tilde{R}}}(\check{\pi}_{\tilde{R}})}}\\
          &\hspace{2em} \times \rbr{2(HR_{\max})^{1/3}(\epsilon_{\rm S} + 3\epsilon_{\cF})^{1/3} + \sqrt{8\epsilon_{\rm S} + 12\epsilon_{\cF} + 3\epsilon_{\cF, \cF}}},
      \end{align*} 
      and when $\zeta_2 = \mathtt{PES}$, the amount of utility gained by untruthful reporting is upper bounded by
      \begin{align*}
          &U_{i}^{\check{\pi}_{\tilde{R}}}(\hat{p}_{i, \tilde{R}}) - U_{i}^{\tilde{\pi}}(\tilde{p}_{i}) \leq 2H^{2}R_{\max}\sqrt{\frac{2\log|\cA|}{T}} +2\sqrt{\epsilon_{\cF}} + 4(HR_{\max})^{1/3}(\epsilon_{\rm S} + 3\epsilon_{\cF})^{1/3} \\
          &\hspace{2em} + H\rbr{\frac{1}{T}\sum_{t=1}^T\sqrt{C^{\tilde{\pi}^{(t)}}(\pi^*_{r_i + \tilde{R}_{-i}})}+\sqrt{C^{\tilde{\pi}}(\tilde{\pi})}}\\
          &\hspace{2em} \times \rbr{2(HR_{\max})^{1/3}(\epsilon_{\rm S} + 3\epsilon_{\cF})^{1/3} + \sqrt{8\epsilon_{\rm S} + 12\epsilon_{\cF} + 3\epsilon_{\cF, \cF}}}.
      \end{align*}
  \end{enumerate}
\end{theorem}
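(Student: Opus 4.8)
The plan is to build the proof from three reusable ingredients and then derive each of the five claims by a telescoping decomposition. Fix $h$ and $r\in\tilde\cR$ and work with the variable $g_{f,f'}^\pi(Z)$ of~\eqref{eqn:defn_g}. I would first record the moment identities already noted in the proof sketch, $\EE_{\mu_h}[g_{f,f'}^\pi(Z)]=\|f_h-\cT_{h,r}^\pi f'_{h+1}\|_{2,\mu_h}^2$ and $\mathrm{Var}(g_{f,f'}^\pi(Z))\le 16H^2R_{\max}^2\|f_h-\cT_{h,r}^\pi f'_{h+1}\|_{2,\mu_h}^2$, and then apply a one-sided relative-deviation (Bernstein-type) tail bound in the spirit of~\citet{gyorfi2002distribution} to compare $\frac1K\sum_j g_{f,f'}^\pi(Z_j)$ to its mean. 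To make the bound hold simultaneously over all $f,f'\in\cF$ and $\pi\in\Pi_{\rm SPI}$, I would pass to $\epsilon$-nets: since all functions and rewards are bounded, the map $(f,f',\pi)\mapsto g_{f,f'}^\pi$ is Lipschitz in the $\ell_\infty$ norm on $\cF$ and the $\ell_{\infty,1}$ norm on $\Pi$, so the $\epsilon$-covering number of $\cG_{\cF,\Pi}=\{g_{f,f'}^\pi\}$ is at most $\cN_\infty(\epsilon',\cF)^2\cdot\cN_{\infty,1}(\epsilon'',\Pi_{\rm SPI})$ for suitable radii; a union bound over the net plus the discretization slack produces the quantity $\epsilon_{\rm S}$ of~\eqref{eqn:param_setting}. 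The output is a single event of probability at least $1-\delta$, on which all subsequent statements are made, such that for every $f\in\cF$, $\pi\in\Pi_{\rm SPI}$, $h$, $r$ the empirical residual $\cE_{h,r}(f,\pi;\cD)$ and the population residual $\|f_h-\cT_{h,r}^\pi f_{h+1}\|_{2,\mu_h}^2$ are two-sided comparable up to additive terms of order $\epsilon_{\rm S}+\epsilon_{\cF}+\epsilon_{\cF,\cF}$ (Assumptions~\ref{assumption:realizability} and~\ref{assumption:completeness} enter here). Because everything runs through covering numbers, this is where the analysis departs from~\citet{xie2021bellman} and becomes applicable to infinite $\cF$ and $\Pi$.

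\textbf{Steps 2 and 3: policy evaluation and soft policy iteration.} From Step 1 and the regularized problems~\eqref{eqn:policy_evaluation_regularized_loss}, using $f_r^\pi\in\cF$ as a feasible comparator and the optimality of $\hat Q_r^\pi,\check Q_r^\pi$, I would prove for each $\pi\in\Pi_{\rm SPI}$, $r$, and $\zeta\in\{\mathtt{OPT},\mathtt{PES}\}$ a matched pair of bounds: a \emph{clean} one-sided bound in which $\hat Q_{1,r}^\pi(s_0,\pi)$ (resp.\ $\check Q_{1,r}^\pi(s_0,\pi)$) deviates from $V_1^\pi(s_0;r)$ only in the optimistic (resp.\ pessimistic) direction and only by approximation error of order $(HR_{\max})^{1/3}(\epsilon_{\rm S}+3\epsilon_{\cF})^{1/3}+\sqrt{\epsilon_{\cF}+\epsilon_{\cF,\cF}}$, together with a \emph{two-sided} bound $|\hat Q_{1,r}^\pi(s_0,\pi)-V_1^\pi(s_0;r)|\le H\sqrt{C^\pi(\pi)}\cdot(\cdots)$ that pays the distribution-shift coefficient of Definition~\ref{defn:distribution_shift_coef}; this is the analogue of Lemma C.6 of~\citet{xie2021bellman}. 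Separately, since the updates~\eqref{eqn:ospi_policy_update}--\eqref{eqn:pspi_policy_update} are KL-mirror descent on the data-dependent iterates $\hat Q^{(t)}_r,\check Q^{(t)}_r$, the standard natural-policy-gradient analysis~\citep{bubeck2014convex,cai2020provably} together with the performance-difference lemma yields, for any comparator $\pi^\circ$, a bound of order $\mathrm{Err}^{\rm opt}=2H^2R_{\max}\sqrt{2\log|\cA|/T}$ relating $\frac1T\sum_t$ of the value induced by the iterates to $V_1^{\pi^\circ}(s_0;r)$; crucially this inequality is deterministic given the iterates, so it composes with the event of Step 1 with no further union bound. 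The scales $\mathrm{Err}^{\rm stat}$ and $\mathrm{Err}^{\rm opt}$ of the theorem are precisely the quantities these two steps package.

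\textbf{Assembly of the five claims.} For Claim 1, write $\mathrm{SubOpt}(\check\pi;s_0)=V_1^{\pi^*}(s_0;R)-V_1^{\check\pi}(s_0;R)$, insert $\pm\frac1T\sum_t\check Q^{(t)}_{1,R}(s_0,\check\pi^{(t)}_{1,R})$ and $\pm\check Q^{\rm out}_{1,R}(s_0,\check\pi)$, and bound the three resulting groups by Step 3 (comparator $\pi^*$) and the pessimistic cases of Step 2, which is exactly where the coefficient $\frac1T\sum_t\sqrt{C^{\check\pi^{(t)}}(\pi^*)}$ is generated. For Claim 2, use the identity $\mathrm{SubOpt}_i(\check\pi,\{\hat p_i\};s_0)=\mathrm{SubOpt}(\check\pi;s_0)+(G^{(1)}_{-i}(s_0)-V_1^{*}(s_0;R_{-i}))-(G^{(2)}_{-i}(s_0)-V_1^{\check\pi}(s_0;R_{-i}))$, which follows from the formula for $p_i^*(s_0)$, from $\hat p_i=G^{(1)}_{-i}-G^{(2)}_{-i}$, and from truthful reporting; the sign of each parenthesis is pinned down by Step 2 according to $(\zeta_1,\zeta_2)$, so $(\mathtt{PES},\mathtt{OPT})$ keeps both parentheses on their clean sides (no new coefficients) whereas $(\mathtt{OPT},\mathtt{PES})$ forces the two-sided bounds and hence the extra $\sqrt{C^{\hat\pi_{-i}}(\hat\pi_{-i})}+\sqrt{C^{\check\pi}(\check\pi)}$, after one also peels $V_1^{\hat\pi_{-i}}(s_0;R_{-i})$ (resp.\ $V_1^{\check\pi_{-i}}(s_0;R_{-i})$) off $G^{(1)}_{-i}$ using optimality of $V_1^{*}$. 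Claim 3 uses the seller decomposition displayed in the proof sketch, $\mathrm{SubOpt}_0\le\sum_i(V_1^{\pi^*_{-i}}(s_0;R_{-i})-G^{(1)}_{-i}(s_0))+\sum_i(G^{(2)}_{-i}(s_0)-V_1^{\check\pi}(s_0;R_{-i}))$, with the first sum handled via the three-term decomposition quoted there (mirror descent for the fictitious policy $\pi^*_{-i}$ plus two policy-evaluation errors) and the second via Step 2, the $\zeta$-dependence again read off from which side of Step 2 is used. Claims 4 and 5 start from the \emph{exact} identities established in Appendix~\ref{app:proof_mech_design_desiderata} --- for individual rationality, $V_1^{\pi^{(2)}}(s_0;\tilde r_i)-\tilde p_i\ge V_1^{\pi^{(2)}_{-i}}(s_0;\tilde r_i)\ge 0$, and for truthfulness, $U_i^{\pi^{(1)}}(p_i)\ge U_i^{\pi^{(2)}}(p_i)$ --- and then substitute each unknown value function or optimal value by its estimate, charging every replacement to Step 2 or Step 3; in Claim 5 the $G^{(1)}$ term cancels between the truthful and untruthful scenarios because both use the reward $\tilde R_{-i}$, which is why only $\zeta_2$ enters there.

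\textbf{Main obstacle.} The crux is making Step 2 hold uniformly over the \emph{learned, data-dependent} policies --- the iterates $\hat\pi^{(t)},\check\pi^{(t)}$, the outputs $\hat\pi^{\rm out},\check\pi^{\rm out}$, and all the fictitious policies for the $R_{-i}$ and $\tilde R_{-i}$ --- which is precisely why Step 1 must be uniform over the whole class $\Pi_{\rm SPI}$, and this forces the covering-number bookkeeping to close: bounding $\cN_{\infty,1}(\cdot,\Pi_{\rm SPI})$ in terms of $\cN_\infty(\cdot,\cF)$ through the exponential parameterization of $\Pi_{\rm It}$ and checking that the uniform-mixture operation $\mathrm{Unif}(\cdot)$ does not inflate the covering number. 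Beyond that, the remaining work is organizational: tracking, across the four $(\zeta_1,\zeta_2)$ configurations of Claims 2--5, \emph{which} distribution-shift coefficient multiplies \emph{which} error term --- but each individual inequality reduces to a single application of Step 2 or Step 3.
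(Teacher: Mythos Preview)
Your proposal is correct and tracks the paper's own proof almost step for step: the concentration lemma for $g_{f,f'}^\pi$ with a covering argument (the paper's Lemma~\ref{lemma:gyorfi_variant_bellman_error_concentration} and Lemma~\ref{corollary:good_event}), the matched pair of one-sided/two-sided policy-evaluation bounds (Lemmas~\ref{lemma:alg_policy_evaluation} and~\ref{thm:alg_policy_evaluation_performance}), the mirror-descent bound for soft policy iteration (Lemma~\ref{thm:alg_spi}), and the same telescoping decompositions and $(\zeta_1,\zeta_2)$ case analysis for each of the five claims --- including your observation that $G^{(1)}$ cancels in the truthfulness argument. One small remark: the ``main obstacle'' you flag --- reducing $\cN_{\infty,1}(\cdot,\Pi_{\rm SPI})$ to $\cN_\infty(\cdot,\cF)$ --- is not actually needed to prove the theorem as stated, since $\epsilon_{\rm S}$ in~\eqref{eqn:param_setting} is defined directly in terms of $\cN_{\infty,1}(\cdot,\Pi_{\rm SPI})$ and the paper never carries out that reduction; the uniformity over data-dependent policies is secured simply because every iterate and output lies in $\Pi_{\rm SPI}$ by construction~\eqref{eqn:spi_policy_class}.
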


\begin{proof}[Proof of Theorem~\ref{thm:main_result}]

We will make use of the following concentration lemma.
\begin{lemma}
\label{lemma:gyorfi_variant_bellman_error_concentration}
  For any \emph{fixed} $h \in [H]$, $r \in \Tilde{\cR}$, and any policy class $\Pi \subset \{\cS \to \Delta(\cA)\}^H$ we have
  \begin{align*}
    &\Pr\Bigl(\exists f, f' \in \cF, \pi \in \Pi:\\
    &\hspace{6em}\abr{ \EE_{\mu_h}\sbr{\|f_{h} - \cT_{h, r}^{\pi}f'_{h + 1}\|^2} - \cL_{h, r}(f_{h}, f'_{h + 1}, \pi; \cD) + \cL_{h, r}(\cT_{h, r}^{\pi}f'_{h + 1}, f'_{h + 1}, \pi; \cD)}\\
    &\hspace{24em} \geq \epsilon\rbr{\alpha + \beta + \EE_{\mu_h}\sbr{\|f_{h} - \cT_{h, r}^{\pi}f'_{h + 1}\|^2}}\Bigr)\\
    &\qquad \leq 28\rbr{\cN_{\infty}\rbr{\frac{\epsilon\beta}{140HR_{\max}}, \cF}}^{2}\cN_{\infty, 1}\rbr{\frac{\epsilon\beta}{140H^{2}R^{2}_{\mathrm{max}}}, \Pi}\exp\rbr{-\frac{\epsilon^{2}(1 - \epsilon)\alpha K}{214(1 + \epsilon)H^{4}R_{\max}^{4}}}.
  \end{align*} for all $\alpha, \beta > 0$, $0 < \epsilon \leq 1/2$.
\end{lemma}
\begin{proof}
    See Section~\ref{subsec:proof_of_lemma_gyorfi_variant} for a detailed proof.
\end{proof}

Our proof hinges upon the occurrence of a ``good event" under which the difference between the empirical Bellman error estimator and the Bellman error can be bounded. We formalize the definition of the ``good event" below.

\begin{lemma}\label{corollary:good_event}
  For any policy class $\Pi \subset \{\cS \to \Delta(\cA)\}^H$, 
  let the ``good event'' $\cG(\Pi)$ be defined as
  \begin{equation}\label{eqn:defn_good_event}
    \begin{split}
      &\cG(\Pi) = \bigl\{\forall\, h \in [H], r \in \tilde{\cR}, \pi \in \Pi, f,f' \in \cF:\\
      &\hspace{6em}\abr{\EE_{\mu_h}[\|f_{h} - \cT_{h, r}^{\pi}f'_{h + 1}\|^2] - \cL_{h, r}(f_{h}, f'_{h + 1}, \pi; \cD) + \cL_{h, r}(\cT_{h, r}^{\pi}f'_{h + 1}, f'_{h + 1}, \pi; \cD)} \\
      &\hspace{25em} \leq\epsilon_{\rm S} + \frac{1}{2} \EE_{\mu_h}[\|f_{h} - \cT_{h, r}^{\pi}f'_{h + 1}\|^2]
      \bigr\},
    \end{split}
  \end{equation} where
  \begin{equation}\label{eqn:defn_eps_s}
    \epsilon_{\rm S}= \frac{5136}{K}H^{4}R^{4}_{\max}\log \bigg (   56nH \cdot \cN_{\infty}\rbr{\frac{19H^{3}R^{3}_{\max}}{K}, \cF} \cdot \cN_{\infty, 1}\rbr{\frac{19H^{4}R^{4}_{\max}}{K}, \Pi} \Big/ \delta \biggr ).
\end{equation}
  Then $\cG(\Pi)$ occurs with probability at least $1-\delta$.
\end{lemma}
\begin{proof}
    See Section~\ref{subsec:proofs_of_good_event} for a detailed proof.
\end{proof}

On the event $\cG(\Pi)$, the best approximations of action-value functions, defined according to Assumption~\ref{assumption:realizability}, have small empirical Bellman error estimates.
\begin{corollary}\label{lemma:q_star_small_cE}
Let $\Pi$ be any policy class. Conditioned on the event $\cG(\Pi)$, let $f_{r}^{\pi, *} \in \cF$ be the best estimate of $Q_{r}^{\pi}(\cdot, \cdot; r)$ as defined in Assumption~\ref{assumption:realizability},
$\pi \in \Pi$ and $r \in  \Tilde{\cR}$. Then, for all $ h \in [H]$, we have
  \[
    \cE_{h, r}(f_{r}^{\pi, *}, \pi; \cD) \leq 2\epsilon_{\rm S} + 6\epsilon_{\cF}.
  \] 
\end{corollary}
\begin{proof}
    See Section~\ref{subsec:proofs_of_good_event} for a detailed proof.
\end{proof}
We can also show that any function with sufficiently small empirical Bellman error estimate must also have small Bellman error conditioned on the good event.
\begin{corollary}
\label{lemma:small_cE_small_bellman_error}
  Let $\epsilon_{0} > 0$ be arbitrary and fixed. For any policy class $\Pi$, conditioned on the event $\cG(\Pi)$, for all $h \in [H]$, reported reward $r \in\tilde{\cR}, \pi \in \Pi, f\in \cF$, if $\cE_{h, r}(f, \pi; \cD) \leq \epsilon_{0}$, then
  \begin{align*}
    \EE_{\mu_h}\sbr{\|f_{h} - \cT_{h, r}^{\pi}f_{h + 1}\|^2} \leq 2\epsilon_{0} + 4\epsilon_{\rm S} + 3\epsilon_{\cF,\cF}.
  \end{align*} 
\end{corollary}
\begin{proof}
    See Section~\ref{subsec:proofs_of_good_event} for a detailed proof.
\end{proof}

We introduce the key properties of Algorithms~\ref{alg:policy_evaluation} and~\ref{alg:spi} that we will use. 
The following lemma states that the outputs of Algorithm~\ref{alg:policy_evaluation} are approximately optimistic and pessimistic.
\begin{lemma}\label{lemma:alg_policy_evaluation}
  For any $\pi = \{\pi_h\}_{h = 1}^H \in \Pi_{\rm SPI}$, reported reward $r \in \tilde{\cR}$, and $\lambda$, conditioned on the event $\cG(\Pi_{\rm SPI})$, the following holds simultaneously for optimistic and pessimistic outputs of Algorithm~\ref{alg:policy_evaluation}:
  \vspace{-5pt}
  \begin{enumerate}
  \itemsep0pt
    \item $\check{Q}^{\pi}_{1, r}(s_0, \pi_{1}) + \lambda\sum_{h = 1}^{H}\cE_{h,r}(\check{Q}_{r}^{\pi}, \pi; \cD) \leq Q^{\pi}_{1}(s_0, \pi_{1}; r) + \sqrt{\epsilon_{\cF}} + 2\lambda H\epsilon_{\rm S} + 6\lambda H\epsilon_{\cF};$
    \item $\hat{Q}^{\pi}_{1, r}(s_0, \pi_{1}) - \lambda\sum_{h = 1}^{H}\cE_{h,r}(\hat{Q}_{r}^{\pi}, \pi; \cD) \geq Q^{\pi}_{1}(s_0, \pi_{1}; r) - \sqrt{\epsilon_{\cF}} - 2\lambda H\epsilon_{\rm S} - 6\lambda H\epsilon_{\cF}.$
  \end{enumerate}
\end{lemma}
\begin{proof}
    See Section~\ref{sec:proofs_of_alg_policy_evaluation} for a detailed proof.
\end{proof}
Additionally, the estimates given by Algorithm~\ref{alg:policy_evaluation} are sufficiently good estimates of the ground truth action-value functions.
\begin{lemma}\label{thm:alg_policy_evaluation_performance}
  For any input $\pi = \{\pi_h\}_{h = 1}^H \in \Pi_{\rm SPI}$, reported reward $r \in \tilde{\cR}$, when $\lambda = \rbr{\frac{R_{\max}}{H^{2}(\epsilon_{\rm S} + 3\epsilon_{\cF})^{2}}}^{1/3}$ and the event $\cG(\Pi_{\rm SPI})$ holds, the outputs of Algorithm~\ref{alg:policy_evaluation} satisfy:
  \vspace{-5pt}
  \begin{enumerate}
  \itemsep0em 
      \item $Q_{1}^{\pi}(s_{0}, \pi_{1}; r) - \check{Q}_{1, r}^{\pi}(s_{0}, \pi_{1}) \leq H\sqrt{C^{\pi}(\pi)}\rbr{2(HR_{\max})^{1/3}(\epsilon_{\rm S} + 3\epsilon_{\cF})^{1/3} +\sqrt{ 8\epsilon_{\rm S} + 12 \epsilon_{\cF} + 3\epsilon_{\cF, \cF}}}$;
      \item $\hat{Q}_{1, r}^{\pi}(s_{0}, \pi_{1}) - Q_{1}^{\pi}(s_{0}, \pi_{1}; r) \leq H\sqrt{C^\pi(\pi)}\rbr{2(HR_{\max})^{1/3}(\epsilon_{\rm S} + 3\epsilon_{\cF})^{1/3} +\sqrt{ 8\epsilon_{\rm S} + 12 \epsilon_{\cF} + 3\epsilon_{\cF, \cF}}}$.
  \end{enumerate}
\end{lemma}
\begin{proof}
    See Section~\ref{sec:proofs_of_alg_policy_evaluation} for a detailed proof.
\end{proof}

Finally, we bound the difference between outputs of Algorithm~\ref{alg:spi} and the true values. More precisely, we characterize the performance of the output policy with respect to \emph{any} comparator policy, not necessarily in the induced policy class $\Pi_{\rm SPI}$, and bound the difference between the estimated value function and the true value function of the output policy.
\begin{lemma}\label{thm:alg_spi}
  For any comparator policy $\pi$ (not necessarily in $\Pi_{\rm SPI}$), any reported reward function $r \in \tilde{\cR}$, with $\eta$ set to $\sqrt{\frac{\log|\cA|}{2H^{2}R_{\max}^2T}}$ and $\lambda$ set to $\rbr{\frac{R_{\max}}{H^{2}(\epsilon_{\rm S} + 3\epsilon_{\cF})^{2}}}^{1/3}$ in Algorithm~\ref{alg:spi}, the following claims hold conditioned on the event $\cG(\Pi_{\rm SPI})$:
  \begin{enumerate}
    \item Let $\check{Q}_{1, r}^{(t)}$ and $\check{\pi}^{(t)}_{r}$ be the pessimistic value function estimate and policy estimate. Then 
    \begin{align*}
      &V_{1}^{\pi}(s_0; r) - \frac{1}{T}\sum_{t =1}^T\check{Q}_{1, r}^{(t)}(s_0, \check{\pi}^{(t)}_{1, r}) \leq 2H^2R_{\max}\sqrt{\frac{2\log|\cA|}{T}}\\
    &\qquad  + H\rbr{\frac{1}{T}\sum_{t = 1}^T \sqrt{C^{\check{\pi}^{(t)}_r}({\pi})}}\rbr{2(HR_{\max})^{1/3}(\epsilon_{\rm S} + 3\epsilon_{\cF})^{1/3} +\sqrt{ 8\epsilon_{\rm S} + 12\epsilon_{\cF} + 3\epsilon_{\cF, \cF}}}.
    \end{align*}
    \item Let $\hat{Q}_{1, r}^{(t)}$ and $\hat{\pi}^{(t)}_{r}$ be the optimistic value function estimate and policy estimate.
    Then
    \begin{align*}
      &V_{1}^{\pi}(s_0; r) - \frac{1}{T}\sum_{t =1}^T\hat{Q}_{1, r}^{(t)}(s_0, \hat{\pi}^{(t)}_{1, r}) \leq 2H^2R_{\max}\sqrt{\frac{2\log|\cA|}{T}}\\
    &\qquad  + H\rbr{\frac{1}{T}\sum_{t = 1}^T \sqrt{C^{\hat{\pi}^{(t)}_r}({\pi})}}\rbr{2(HR_{\max})^{1/3}(\epsilon_{\rm S} + 3\epsilon_{\cF})^{1/3} +\sqrt{ 8\epsilon_{\rm S} + 12\epsilon_{\cF} + 3\epsilon_{\cF, \cF}}}.
    \end{align*}
  \end{enumerate}
\end{lemma}
\begin{proof}
    See Section~\ref{sec:proofs_of_alg_spi} for a detailed proof.
\end{proof}

We then proceed with the proof as follows. We start by bounding the suboptimality of the output policy, defined according to equation \eqref{eqn:defn_subopt}. We then bound the regret of each individual agent and the seller. We follow up with showing that our output asymptotically satisfies individual rationality. Finally, we prove that our output also asymptotically satisfies truthfulness.

We use the following notation to differentiate the policies and prices learned under different truthfulness assumptions. Let $\check{\pi} = \check{\pi}_R^{\rm out}$ be the policy chosen by the algorithm when all agents are truthful, let $\tilde{\pi} = \check{\pi}^{\rm out}_{r_i + \tilde{R}_{-i}}$ be the policy chosen when we only assume the agent $i$ is truthful, and finally let $\check{\pi}_{\tilde{R}} = \check{\pi}_{\tilde{R}}^{\rm out}$ be the policy chosen when none of the agents are truthful. Let the prices charged by the algorithm be $\{\hat{p}_i\}_{i = 1}^n, \{\tilde{p}_i\}_{i = 1}^n,$ and $\{\hat{p}_{ i, \tilde{R}}\}_{i = 1}^n$, respectively.

\paragraph{Social Welfare Suboptimality}
\label{subsec:welfare_subopt}
Assuming all agents are truthful, we have $\tilde{r}_{i} = r_{i}$ for all $i$. Let $\pi^{*}$ be the maximizer of $V^{\pi}_{1}(s_0; R)$ over $\pi$ and let $\check{\pi}^{(t)}_{R}$ be the pessimistic policy iterate of Algorithm~\ref{alg:spi}. We know that the social welfare suboptimality of $\check{\pi}$ is
\begin{align*}
  {\rm SubOpt}(\check{\pi}; s_0) &= V_{1}^{\pi^{*}}(s_0; R) - V_{1}^{\check{\pi}}(s_0; R) = V_{1}^{\pi^{*}}(s_0; R) - \frac{1}{T}\sum_{t = 1}^T  V_{1}^{\check{\pi}^{(t)}_{R}}(s_0; R) \\
  &= \frac{1}{T}\sum_{t = 1}^{T}\rbr{ V_{1}^{\pi^{*}}(s_0; R) - Q^{\check{\pi}^{(t)}}_{1}(s_0, \check{\pi}^{(t)}_{1, R}; R)},
\end{align*} 
as we recall that $\check{\pi}$ is the uniform mixture of policies $\{\check{\pi}_{R}^{(t)}\}_{t\in [T]}$. By Lemma~\ref{lemma:alg_policy_evaluation}, we have
\begin{align}\label{eqn:welfare_regret_decomposition}
  {\rm SubOpt}(\check{\pi}; s_0) \leq \frac{1}{T}\sum_{t = 1}^T \rbr{V_{1}^{\pi^{*}}(s_0; R) - \check{Q}_{1,R}^{(t)}(s_0, \check{\pi}_{1, R}^{(t)}; R)} + \sqrt{\epsilon_{\cF}} + 2\lambda H\epsilon_{\rm S} + 6\lambda H\epsilon_{\cF},
\end{align} 
where $\check{Q}_{R}^{(t)}$ is the pessimistic estimate of $Q(\cdot, \cdot; R)$ at the $t$-th iteration of Algorithm~\ref{alg:spi}. When $\lambda = \rbr{\frac{R_{\max}}{H^{2}(\epsilon_{\rm S} + 3\epsilon_{\cF})^{2}}}^{1/3}$ and $\eta = \sqrt{\frac{\log|\cA|}{2H^{2}R_{\max}^2T}}$,
we apply Lemma~\ref{thm:alg_spi} to obtain
\begin{align*}
  &{\rm SubOpt}(\check{\pi}; s_0) \leq 2H^2R_{\max}\sqrt{\frac{2\log|\cA|}{T}} + \sqrt{\epsilon_{\cF}} + 2(HR_{\max})^{1/3}(\epsilon_{\rm S} + 3\epsilon_{\cF})^{1/3}\\
  &\qquad + H\rbr{\frac{1}{T}\sum_{t = 1}^T\sqrt{C^{\check{\pi}_R^{(t)}}({\pi^*})}}\rbr{2(HR_{\max})^{1/3}(\epsilon_{\rm S} + 3\epsilon_{\cF})^{1/3} +\sqrt{ 8\epsilon_{\rm S} + 12 \epsilon_{\cF} + 3\epsilon_{\cF, \cF}}}.
\end{align*}

\paragraph{Individual Suboptimality}
\label{subsec:individual_subopt}
Let $\pi^*_{-i}$ be the maximizer of $V^{\pi}(s_0; R_{-i})$ over $\pi$.
By Algorithm~\ref{alg:vcg_learn}, the price $\hat{p}_{i}$ is constructed as
\[
    \hat{p}_{i} = G_{-i}^{(1)}(s_{0}) - G_{-i}^{(2)}(s_{0}),
\] 
where $G_{-i}^{(1)}(s_{0})$ is an estimate of $V^{\pi_{-i}^{*}}(s_{0}; R_{-i})$ obtained using Algorithm~\ref{alg:spi} and $G_{-i}^{(2)}(s_{0})$ is an estimate of $V^{\check{\pi}}(s_{0}; R_{-i})$ for Algorithm~\ref{alg:vcg_learn}'s output policy, $\check{\pi}$. This observation will be extensively used in the remainder of the proof.

Assuming all agents are truthful, we have $\tilde{r}_{i} = r_{i}$ for all $i$. Recalling the construction of $\hat{p}_i$ in Algorithm~\ref{alg:vcg_learn} line~\ref{algline:vcg_pricing} and the definition of $\{p_i^*\}_{i = 1}^n$ (see~\eqref{eqn:defn_p_star}), we have
\begin{equation*}
\begin{split}
  &{\rm SubOpt}_{i}(\check{\pi}, \{\hat{p}_i\}_{i = 1}^n; s_0) \\
  &\quad= V_{1}^{\pi^{*}}(s_0; r_{i}) + V_{1}^{\pi^{*}}(s_0; R_{-i}) -V_{1}^{\pi_{-i}^{*}}(s_0; R_{-i}) - V_{1}^{\check{\pi}}(s_0; r_{i}) + G^{(1)}_{-i}(s_0) - G^{(2)}_{-i}(s_0)\\
  &\quad = V_{1}^{\pi^{*}}(s_0; R) - V_{1}^{\pi_{-i}^{*}}(s_0; R_{-i}) - V_{1}^{\check{\pi}}(s_0; r_{i}) + G^{(1)}_{-i}(s_0) - G^{(2)}_{-i}(s_0)\\
  &\quad \leq V_{1}^{\pi^{*}}(s_0; R) - V_{1}^{\check{\pi}}(s_0; R) + \rbr{G_{-i}^{(1)}(s_0) - V_{1}^{\pi_{-i}^{*}}(s_0; R_{-i})} + \rbr{V_{1}^{\check{\pi}}(s_0; R_{-i}) - G^{(2)}_{-i}(s_0)}\\
  &\quad = {\rm SubOpt}(\check{\pi}; s_0) + \rbr{G_{-i}^{(1)}(s_0) - V_{1}^{\pi_{-i}^{*}}(s_0; R_{-i})} + \rbr{V_{1}^{\check{\pi}}(s_0; R_{-i}) - G^{(2)}_{-i}(s_0)}.
\end{split}
\end{equation*} 
We have already bounded the first term and now focus on the two latter terms. 

We begin by examining $G_{-i}^{(1)}(s_0) - V_{1}^{\pi_{-i}^{*}}(s_0; R_{-i})$.
\begin{itemize}
  \item Suppose $\zeta_{1} = {\tt OPT}$. Since $\pi_{-i}^{*}$ maximizes $ V_{1}^{\pi_{-i}^{*}}(s_0; R_{-i})$ over $\pi$, we have
        \begin{align*}
          G_{-i}^{(1)}(s_0) - V_{1}^{\pi_{-i}^{*}}(s_0; R_{-i}) &\leq G_{-i}^{(1)}(s_0) - V_{1}^{\hat{\pi}_{-i}}(s_0; R_{-i}).
        \end{align*} 
        Recall that $\hat{Q}^{\rm out}_{R_{-i}}$ is the optimistic function estimate from the output of Algorithm~\ref{alg:spi}, which is exactly the output of Algorithm~\ref{alg:policy_evaluation} called on the policy returned by Algorithm~\ref{alg:spi}, $\hat{\pi}_{-i}$. By Lemma~\ref{thm:alg_policy_evaluation_performance}, we know that
        \begin{multline*}
            G_{-i}^{(i)}(s_0) - V_1^{\hat{\pi}_{-i}}(s_0; R_{-i}) \\
            \leq H\sqrt{C^{\hat{\pi}_{-i}}({\hat{\pi}_{-i}})}\rbr{2(HR_{\max})^{1/3}(\epsilon_{\rm S} + 3\epsilon_{\cF})^{1/3} +\sqrt{ 8\epsilon_{\rm S} + 12 \epsilon_{\cF} + 3\epsilon_{\cF, \cF}}}.
        \end{multline*}
  \item Suppose $\zeta_{1} = {\tt PES}$. Since $\pi_{-i}^{*}$ maximizes $ V_{1}^{\pi}(s_0; R_{-i})$ over $\pi$, we have
        \begin{align*}
          G_{-i}^{(1)}(s_0) - V_{1}^{\pi_{-i}^{*}}(s_0; R_{-i}) &\leq G_{-i}^{(1)}(s_0) - V_{1}^{\check{\pi}_{-i}}(s_0; R_{-i}).
        \end{align*}
        Recall that $G_{-i}^{(1)}(s_0) = \check{Q}^{\rm out}_{1, R_{-i}}(s_0, \check{\pi}_{1, -i})$.
        When $\lambda = \rbr{\frac{R_{\max}}{H^{2}(\epsilon_{\rm S} + 3\epsilon_{\cF})^{2}}}^{1/3}$, by Lemma~\ref{lemma:alg_policy_evaluation} we have
        \begin{align*}
          G_{-i}^{(1)}(s_0) - V_{1}^{\pi_{-i}^{*}}(s_0; R_{-i}) \leq \sqrt{\epsilon_{\cF}} + 2(HR_{\max})^{1/3}(\epsilon_{\rm S} + 3\epsilon_{\cF})^{1/3}.
        \end{align*}
\end{itemize}
 
We perform a similar analysis for $V_{1}^{\check{\pi}}(s_0; R_{-i}) - G^{(2)}_{-i}(s_0)$ and when $\lambda = \rbr{\frac{R_{\max}}{H^{2}(\epsilon_{\rm S} + 3\epsilon_{\cF})^{2}}}^{1/3}$.
\begin{itemize}
  \item When $\zeta_{2} = {\tt OPT}$, $V_{1}^{\check{\pi}}(s_0; R_{-i}) - G^{(2)}_{-i}(s_0) \leq \sqrt{\epsilon_{\cF}} + 2(HR_{\max})^{1/3}(\epsilon_{\rm S} + 3\epsilon_{\cF})^{1/3}$ by Lemma~\ref{lemma:alg_policy_evaluation}.
  \item When $\zeta_{2} = {\tt PES}$, let $\check{Q}^{\check{\pi}}_{R_{-i}}$ be the pessimistic output of Algorithm~\ref{alg:policy_evaluation} called on $\check{\pi}$. By Lemma~\ref{thm:alg_policy_evaluation_performance}, we have
  \begin{align*}
    &V_{1}^{\check{\pi}}(s_0; R_{-i}) - G_{-i}^{(2)}(s_0) \leq  H\sqrt{C^{\check{\pi}}({\check{\pi}})}\rbr{2(HR_{\max})^{1/3}(\epsilon_{\rm S} + 3\epsilon_{\cF})^{1/3} +\sqrt{ 8\epsilon_{\rm S} + 12 \epsilon_{\cF} + 3\epsilon_{\cF, \cF}}}.
  \end{align*}
\end{itemize}

\paragraph{Seller Suboptimality}
\label{subsec:seller_suboptimality}
We now turn our attention to the sellers' suboptimality. Assuming all agents are truthful, we have $\tilde{r}_{i} = r_{i}$ for all $i$. Recalling the definition of $\{p_i^*\}_{i = 1}^n$ in~\eqref{eqn:defn_p_star}, we have
\begin{equation}\label{eqn:seller_subopt_decomp}
\begin{split}
  &\textrm{SubOpt}_0(\check{\pi}, \{\hat{p}_i\}_{i = 1}^n; s_0) \\
  & \qquad = V_1^{\pi^*}(s_0; r_0) - V_1^{\check{\pi}}(s_0; r_0)+ \sum_{i = 1}^n\rbr{\max_{\pi' \in \Pi} V_1^{\pi'}(s_0; R_{-i}) - V_1^{\pi^*}(s_0; R_{-i})} - \sum_{i = 1}^n \hat{p}_i\\
  &\qquad = \sum_{i = 1}^n\max_{\pi' \in \Pi} V_1^{\pi'}(s_0; R_{-i}) - (n - 1)V_{1}^{\pi^{*}}(s_{0}; R) - V_1^{\check{\pi}}(s_0; r_0) - \sum_{i = 1}^{n}G^{(1)}_{-i}(s_{0}) + \sum_{i=1}^n G^{(2)}_{-i}(s_{0})\\
  &\qquad = \sum_{i = 1}^n\rbr{\max_{\pi' \in \Pi} V_1^{\pi'}(s_0; R_{-i}) - G^{(1)}_{-i}(s_{0})} - (n - 1)V_{1}^{\pi^{*}}(s_{0}; R) - V_1^{\check{\pi}}(s_0; r_0)  + \sum_{i=1}^n G^{(2)}_{-i}(s_{0})\\
  &\qquad = \sum_{i = 1}^n\rbr{ V_1^{\pi^{*}_{-i}}(s_0; R_{-i}) - G^{(1)}_{-i}(s_{0})} + (n - 1)(V_{1}^{\check{\pi}}(s_{0}; R) - V_{1}^{\pi^{*}}(s_{0}; R)) \\
  &\hspace{3em}+ \sum_{i=1}^n \rbr{ G^{(2)}_{-i}(s_{0}) -V_{1}^{\check{\pi}}(s_{0}, R_{-i}) }\\
  &\qquad \leq \sum_{i = 1}^n\rbr{ V_1^{\pi^{*}_{-i}}(s_0; R_{-i}) - G^{(1)}_{-i}(s_{0})} + \sum_{i=1}^n \rbr{ G^{(2)}_{-i}(s_{0}) -V_{1}^{\check{\pi}}(s_{0}, R_{-i}) },
\end{split}
\end{equation} 
where the last inequality comes from the fact that $\pi^*$ is the social welfare-maximizing policy. The two terms can be bounded similarly to bounding the agents' suboptimality. We discuss the exact bounds for different choices of $\zeta_{1}, \zeta_{2}$ and $\lambda = \rbr{\frac{R_{\max}}{H^{2}(\epsilon_{\rm S} + 3\epsilon_{\cF})^{2}}}^{1/3}, \eta = \sqrt{\frac{\log |\cA|}{2H^2R_{\rm max}^2T}}$.
\begin{itemize}
  \item When $\zeta_{1} = \mathtt{OPT}$, by Algorithm~\ref{alg:vcg_learn} line~\ref{algline:vcg_pricing}, we know that for any $i \in [n]$,
        \begin{align*}
          &V_{1}^{\pi^{*}_{-i}}(s_{0}; R_{-i}) - G_{-i}^{(1)}(s_{0}) = V_{1}^{\pi^{*}_{-i}}(s_{0}; R_{-i}) -\hat{Q}^{\rm out}_{1, R_{-i}}(s_{0}, \hat{\pi}_{1, -i}).
        \end{align*} 
        By Lemma~\ref{thm:alg_spi}, we know that
        \begin{align*}
          &V_1^{\pi^{*}_{-i}}(s_0; R_{-i}) - \frac{1}{T}\sum_{t = 1}^T \hat{Q}^{(t)}_{1, R_{-i}}(s_{0}, \hat{\pi}^{(t)}_{1, R_{-i}})\leq 2H^{2}R_{\max}\sqrt{\frac{2\log |\cA|}{T}}\\
          &\qquad + H\rbr{\frac{1}{T}\sum_{t = 1}^T\sqrt{C^{\hat{\pi}_{R_{-i}}^{(t)}}({\pi_{-i}^*})}}\rbr{2(HR_{\max})^{1/3}(\epsilon_{\rm S} + 3\epsilon_{\cF})^{1/3} + \sqrt{8\epsilon_{\rm S} + 12\epsilon_{\cF} + 3\epsilon_{\cF, \cF}}}.
        \end{align*}
        By Lemma~\ref{thm:alg_policy_evaluation_performance} and recalling that $\hat{\pi}_{-i}$ is the uniform mixture of $\{\hat{\pi}_{R_{-i}}^{(t)}\}_{t \in [T]}$, we know that
        \begin{align*}
          \frac{1}{T}&\sum_{t = 1}^T \hat{Q}^{(t)}_{1, R_{-i}}(s_{0}, \hat{\pi}^{(t)}_{1, R_{-i}}) - V_{1}^{\hat{\pi}_{-i}}(s_{0}; R_{-i}) \\
          &=\frac{1}{T}\sum_{t = 1}^T\rbr{ \hat{Q}^{(t)}_{1, R_{-i}}(s_{0}, \hat{\pi}^{(t)}_{1, R_{-i}}) - V_{1}^{\hat{\pi}_{R_{-i}}^{(t)}}(s_{0}; R_{-i})} \\
          &\leq H\rbr{\frac{1}{T}\sum_{t = 1}^T\sqrt{C^{\hat{\pi}^{(t)}_{R_{-i}}}({\hat{\pi}^{(t)}_{R_{-i}}})}}\rbr{2(HR_{\max})^{1/3}(\epsilon_{\rm S} + 3\epsilon_{\cF})^{1/3} + \sqrt{8\epsilon_{\rm S} + 12\epsilon_{\cF} + 3\epsilon_{\cF, \cF}}}.
        \end{align*} Lastly, by Lemma~\ref{lemma:alg_policy_evaluation}, we also know that
        \begin{align*}
          V_{1}^{\hat{\pi}_{-i}}(s_{0}; R_{-i}) -\hat{Q}^{\rm out}_{1, R_{-i}}(s_{0}, \hat{\pi}_{1, -i}) \leq \sqrt{\epsilon_{\cF}} + 2(HR_{\max})^{1/3}(\epsilon_{\rm S} + 3\epsilon_{\cF})^{1/3}.
        \end{align*} 
        Summing the three parts tells us that, for all $i \in [n]$, we have
        \begin{equation}\label{eqn:g_i_bound_opt}
        \begin{split}
          V_{1}^{\pi^{*}_{-i}}(s_{0}; R_{-i}) &- G_{-i}^{(1)}(s_{0})\\
          &= V_{1}^{\pi^{*}_{-i}}(s_{0}; R_{-i}) -\hat{Q}^{\rm out}_{1, R_{-i}}(s_{0}, \hat{\pi}_{1, -i})\\
          &\leq 2H^{2}R_{\max}\sqrt{\frac{2\log |\cA|}{T}} + \sqrt{\epsilon_{\cF}} + 2(HR_{\max})^{1/3}(\epsilon_{\rm S} + 3\epsilon_{\cF})^{1/3} \\
          &\quad + H\rbr{\frac{1}{T}\sum_{t = 1}^T\rbr{\sqrt{C^{\hat{\pi}_{R_{-i}}^{(t)}}({\pi_{-i}^*})} + \sqrt{C^{\hat{\pi}^{(t)}_{R_{-i}}}(\hat{\pi}^{(t)}_{R_{-i}})}}}\\
          &\qquad \times \rbr{2(HR_{\max})^{1/3}(\epsilon_{\rm S} + 3\epsilon_{\cF})^{1/3} + \sqrt{8\epsilon_{\rm S} + 12\epsilon_{\cF} + 3\epsilon_{\cF, \cF}}}
        \end{split}
        \end{equation} 
        and
        \begin{align*}
          \sum_{i = 1}^n &\rbr{ V_{1}^{\pi^{*}_{-i}}(s_{0}; R_{-i}) - G_{-i}^{(1)}(s_{0}) }\\
          & \leq 2nH^{2}R_{\max}\sqrt{\frac{2\log |\cA|}{T}} + n\sqrt{\epsilon_{\cF}} + 2n(HR_{\max})^{1/3}(\epsilon_{\rm S} + 3\epsilon_{\cF})^{1/3} \\
          &\quad + H\rbr{\sum_{i = 1}^n\frac{1}{T}\sum_{t = 1}^T\rbr{\sqrt{C^{\hat{\pi}_{R_{-i}}^{(t)}}({\pi_{-i}^*})} + \sqrt{C^{\hat{\pi}^{(t)}_{R_{-i}}}(\hat{\pi}^{(t)}_{R_{-i}})}}}\\
          &\qquad  \times \rbr{2(HR_{\max})^{1/3}(\epsilon_{\rm S} + 3\epsilon_{\cF})^{1/3} + \sqrt{8\epsilon_{\rm S} + 12\epsilon_{\cF} + 3\epsilon_{\cF, \cF}}}.
        \end{align*}
  \item When $\zeta_{1} = \mathtt{PES}$, by Algorithm~\ref{alg:vcg_learn} we know that for any $i \in [n]$,
        \begin{align*}
          &V_{1}^{\pi^{*}_{-i}}(s_{0}; R_{-i}) - G_{-i}^{(1)}(s_{0}) = V_{1}^{\pi^{*}_{-i}}(s_{0}; R_{-i}) -\check{Q}^{\rm out}_{1, R_{-i}}(s_{0}, \check{\pi}_{1, -i}).
        \end{align*} 
        By Lemma~\ref{thm:alg_spi}, we know that
        \begin{align*}
          &V_1^{\pi^{*}_{-i}}(s_0; R_{-i}) - \frac{1}{T}\sum_{t = 1}^T \check{Q}^{(t)}_{1, R_{-i}}(s_{0}, \check{\pi}^{(t)}_{1, R_{-i}})\leq 2H^{2}R_{\max}\sqrt{\frac{2\log |\cA|}{T}}\\
          &\qquad + H\rbr{\frac{1}{T}\sum_{t = 1}^T\sqrt{C^{\check{\pi}_{R_{-i}}^{(t)}}(\pi^*_{-i})}}\rbr{2(HR_{\max})^{1/3}(\epsilon_{\rm S} + 3\epsilon_{\cF})^{1/3} + \sqrt{8\epsilon_{\rm S} + 12\epsilon_{\cF} + 3\epsilon_{\cF, \cF}}}.
        \end{align*} 
        By Lemma~\ref{lemma:alg_policy_evaluation}, we know that
        \begin{align*}
          \frac{1}{T}\sum_{t = 1}^T \check{Q}^{(t)}_{1, R_{-i}}(s_{0}, \check{\pi}^{(t)}_{1, R_{-i}}) - V_{1}^{\check{\pi}_{-i}}(s_{0}; R_{-i})\leq \sqrt{\epsilon_{\cF}} + 2(HR_{\max})^{1/3}(\epsilon_{\rm S} + 3\epsilon_{\cF})^{1/3}.
        \end{align*} 
        By Lemma~\ref{thm:alg_policy_evaluation_performance}, we further know that
        \begin{multline*}
          V_{1}^{\check{\pi}_{-i}}(s_{0}; R_{-i}) - \check{Q}^{\rm out}_{1, R_{-i}}(s_{0}, \check{\pi}_{1, -i}) \\
          \leq H\sqrt{C^{\check{\pi}_{-i}}(\check{\pi}_{-i})}\rbr{2(HR_{\max})^{1/3}(\epsilon_{\rm S} + 3\epsilon_{\cF})^{1/3} + \sqrt{8\epsilon_{\rm S} + 12\epsilon_{\cF} + 3\epsilon_{\cF, \cF}}}.
        \end{multline*}
        Summing the three parts together tells us that, for all $i \in [n]$ and any $C \geq 1$, we have
        \begin{equation}\label{eqn:g_i_bound_pes}
        \begin{split}
          &V_{1}^{\pi^{*}_{-i}}(s_{0}; R_{-i}) - G_{-i}^{(1)}(s_{0}) = V_{1}^{\pi^{*}_{-i}}(s_{0}; R_{-i}) -\check{Q}^{\rm out}_{1, R_{-i}}(s_{0}, \check{\pi}_{1, -i}) \\
          &\qquad \leq 2H^{2}R_{\max}\sqrt{\frac{2\log |\cA|}{T}} + \sqrt{\epsilon_{\cF}} + 2(HR_{\max})^{1/3}(\epsilon_{\rm S} + 3\epsilon_{\cF})^{1/3}\\
          &\hspace{3em} + H\rbr{\sqrt{C^{\check{\pi}_{-i}}(\check{\pi}_{-i})} + \frac{1}{T}\sum_{t = 1}^T\sqrt{C^{\check{\pi}_{R_{-i}}^{(t)}}(\pi^*_{-i})}}\\
          &\hspace{4em} \times \rbr{2(HR_{\max})^{1/3}(\epsilon_{\rm S} + 3\epsilon_{\cF})^{1/3} + \sqrt{8\epsilon_{\rm S} + 12\epsilon_{\cF} + 3\epsilon_{\cF, \cF}}}
        \end{split}
        \end{equation}
        and
        \begin{align*}
          \sum_{i = 1}^n &\rbr{V_1^{\pi^{*}_{-i}}(s_0; R_{-i}) - G^{(1)}_{-i}(s_{0})  } \\
          &\leq 2nH^{2}R_{\max}\sqrt{\frac{2\log |\cA|}{T}} + n\sqrt{\epsilon_{\cF}} + 2n(HR_{\max})^{1/3}(\epsilon_{\rm S} + 3\epsilon_{\cF})^{1/3}\\
          &\quad + H\rbr{\sum_{i = 1}^n\sqrt{C^{\check{\pi}_{-i}}(\check{\pi}_{-i})} + \sum_{i = 1}^n\frac{1}{T}\sum_{t = 1}^T\sqrt{C^{\check{\pi}_{R_{-i}}^{(t)}}(\pi^*_{-i})}}\\
          &\qquad \times \rbr{2(HR_{\max})^{1/3}(\epsilon_{\rm S} + 3\epsilon_{\cF})^{1/3} + \sqrt{8\epsilon_{\rm S} + 12\epsilon_{\cF} + 3\epsilon_{\cF, \cF}}}.
        \end{align*}
  \item When $\zeta_{2} = {\tt OPT}$, for all $i \in [n]$, let $\check{Q}^{\check{\pi}}_{R_{-i}}$ be the pessimistic estimate of $Q^{\check{\pi}}(\cdot, \cdot; R_{-i})$ returned by Algorithm~\ref{alg:policy_evaluation}. By Lemma~\ref{thm:alg_policy_evaluation_performance}, we know
  \begin{multline*}
    \sum_{i=1}^n \rbr{ G^{(2)}_{-i}(s_{0}) -V_{1}^{\check{\pi}}(s_{0}, R_{-i}) } \\
    \leq nH\sqrt{C^{\check{\pi}}(\check{\pi})}\rbr{2(HR_{\max})^{1/3}(\epsilon_{\rm S} + 3\epsilon_{\cF})^{1/3} + \sqrt{8\epsilon_{\rm S} + 12\epsilon_{\cF} + 3\epsilon_{\cF, \cF}}}.
  \end{multline*}
  \item When $\zeta_{2} = {\tt PES}$, $\sum_{i=1}^n \rbr{ G^{(2)}_{-i}(s_{0}) -V_{1}^{\check{\pi}}(s_{0}, R_{-i}) } \leq n\sqrt{\epsilon_{\cF}} + 2n(HR_{\max})^{1/3}(\epsilon_{\rm S} + 3\epsilon_{\cF})^{1/3}$ by Lemma~\ref{lemma:alg_policy_evaluation}.
\end{itemize} Plugging in the bound for ${\rm SubOpt}(\check{\pi}; s_{0})$ completes the proof.

\paragraph{Individual Rationality}\label{subsec:individual_rationality}
We show that the utility of any agent $i$ is bounded below.
First, assume for convenience that all other agents are truthful and report their true $r_{i', h}$ for $i' \in [n] \setminus {i}$. Recall that for any price $p_{i}$, the agents' expected utility under the chosen policy $\check{\pi}$ can be written as
\[
  \EE_{d_{\check{\pi}}}[u_{i}] = V^{\check{\pi}}_{1}(s_{0}; r_{i}) - p_{i}.
\] 
According to Algorithm~\ref{alg:vcg_learn}, we have
\begin{equation}
  \label{eqn:individual_rationality_decomp}
  \begin{split}
    &\EE_{{\check{\pi}}}[u_{i}] = V^{\check{\pi}}_{1}(s_{0}; r_{i}) - G_{-i}^{(1)}(s_{0}) + G_{-i}^{(2)}(s_{0})\\
    &\qquad = V^{\check{\pi}}_{1}(s_{0}; r_{i}) +  G_{-i}^{(2)}(s_{0}) - V^{\pi^{*}_{-i}}(s_{0}; R_{-i}) + V^{\pi^{*}_{-i}}(s_{0}; R_{-i}) - G_{-i}^{(1)}(s_{0})\\
    &\qquad = (V^{\pi^{*}}(s_{0}; R) - V^{\pi^{*}_{-i}}(s_{0}; R_{-i})) + V^{\check{\pi}}(s_{0}; r_{i}) + G_{-i}^{(2)}(s_{0}) - V^{\pi^{*}}(s_{0}; R) \\
    &\hspace{3em} +  V^{\pi^{*}_{-i}}(s_{0}; R_{-i}) - G_{-i}^{(1)}(s_{0})\\
    &\qquad \geq V^{\check{\pi}}(s_{0}; r_{i}) + G_{-i}^{(2)}(s_{0}) - V^{\pi^{*}}(s_{0}; R) +  V^{\pi^{*}_{-i}}(s_{0}; R_{-i}) - G_{-i}^{(1)}(s_{0})\\
    &\qquad = G_{-i}^{(2)}(s_{0}) - V^{\check{\pi}}(s_{0}; R_{-i}) + V^{\check{\pi}}(s_{0}; R) - V^{\pi^{*}}(s_{0}; R) +V^{\pi^{*}_{-i}}(s_{0}; R_{-i}) - G_{-i}^{(1)}(s_{0}),
  \end{split}
\end{equation} 
where the inequality comes from the fact that  
\[
(V^{\pi^{*}}(s_{0}; R) - V^{\pi^{*}_{-i}}(s_{0}; R_{-i})) \geq (V^{\pi_{-i}^{*}}(s_{0}; R) - V^{\pi^{*}_{-i}}(s_{0}; R_{-i})) = V^{\pi^{*}_{-i}}(s_{0}; r_{i})) \geq 0,
\]
as $r_{i, h} \in [0, 1]$ for all $i, h$. We already know the lower bounds for $V^{\pi^{*}_{-i}}(s_{0}; R_{-i}) - G_{-i}^{(1)}(s_{0})$ and $G_{-i}^{(2)}(s_{0}) - V^{\check{\pi}}(s_{0}; R_{-i})$ , respectively, when bounding the individual suboptimalities for the  agents. Also note that $V^{\check{\pi}}(s_{0; R}) - V^{\pi^{*}}(s_{0}; R)  = -{\rm SubOpt}(\check{\pi}; s_{0})$ has been bounded when bounding social welfare suboptimality.

Similar to the previous sections, we now discuss the bounds for the different terms under difference choices of $\zeta_{1}, \zeta_{2}$.
\begin{itemize}
  \item When $\zeta_{1} = \mathtt{OPT}$, by equation~\eqref{eqn:g_i_bound_opt} we know that
        \begin{equation*}
        \begin{split}
          & G_{-i}^{(1)}(s_{0}) - V_{1}^{\pi^{*}_{-i}}(s_{0}; R_{-i}) \geq -2H^{2}R_{\max}\sqrt{\frac{2\log |\cA|}{T}} - \sqrt{\epsilon_{\cF}} - 2(HR_{\max})^{1/3}(\epsilon_{\rm S} + 3\epsilon_{\cF})^{1/3} \\
          &\hspace{3em} - H\rbr{\frac{1}{T}\sum_{t = 1}^T\rbr{\sqrt{C^{\hat{\pi}_{R_{-i}}^{(t)}}({\pi_{-i}^*})} + \sqrt{C^{\hat{\pi}^{(t)}_{R_{-i}}}(\hat{\pi}^{(t)}_{R_{-i}})}}}\\
          &\hspace{3em}\times\rbr{2(HR_{\max})^{1/3}(\epsilon_{\rm S} + 3\epsilon_{\cF})^{1/3} + \sqrt{8\epsilon_{\rm S} + 12\epsilon_{\cF} + 3\epsilon_{\cF, \cF}}}.
        \end{split}
        \end{equation*}
  \item When $\zeta_{1} = \mathtt{PES}$, by equation~\eqref{eqn:g_i_bound_pes} we know that
        \begin{equation*}
        \begin{split}
          & G_{-i}^{(1)}(s_{0}) - V_{1}^{\pi^{*}_{-i}}(s_{0}; R_{-i}) \geq -2H^{2}R_{\max}\sqrt{\frac{2\log |\cA|}{T}} - \sqrt{\epsilon_{\cF}} - 2(HR_{\max})^{1/3}(\epsilon_{\rm S} + 3\epsilon_{\cF})^{1/3}\\
          &\hspace{3em} - H\rbr{\sqrt{C^{\check{\pi}_{-i}}(\check{\pi}_{-i})} + \frac{1}{T}\sum_{t = 1}^T\sqrt{C^{\check{\pi}_{R_{-i}}^{(t)}}(\pi^*_{-i})}}\\ &\hspace{3em} \times \rbr{2(HR_{\max})^{1/3}(\epsilon_{\rm S} + 3\epsilon_{\cF})^{1/3} + \sqrt{8\epsilon_{\rm S} + 12\epsilon_{\cF} + 3\epsilon_{\cF, \cF}}}.
        \end{split}
        \end{equation*}
  \item When $\zeta_{2} = \mathtt{OPT}$, by Lemma~\ref{lemma:alg_policy_evaluation}, we know that
        \begin{align*}
          G_{-i}^{(2)}(s_{0}) - V^{\check{\pi}}(s_{0}; R_{-i}) \geq-\sqrt{\epsilon_{\cF}} - 2(HR_{\max})^{1/3}(\epsilon_{\rm S} + 3\epsilon_{\cF})^{1/3}.
        \end{align*}
  \item When $\zeta_{2} = \mathtt{PES}$, by Lemma~\ref{thm:alg_policy_evaluation_performance}
  \begin{align*}
    & G_{-i}^{(2)}(s_0) - V_{1}^{\check{\pi}}(s_0; R_{-i})\geq - H\sqrt{C^{\check{\pi}}(\check{\pi})}\rbr{2(HR_{\max})^{1/3}(\epsilon_{\rm S} + 3\epsilon_{\cF})^{1/3} +\sqrt{ 8\epsilon_{\rm S} + 12 \epsilon_{\cF} + 3\epsilon_{\cF, \cF}}}.
  \end{align*}
\end{itemize}

We now argue that our analysis holds even when the other agents are not truthful. Recall that~$\tilde{\pi}$ is the output policy selected by Algorithm~\ref{alg:vcg_learn} when other agents report $\tilde{r}_{i'}$ and the agent $i$ reports truthfully. Observe that here the decomposition in equation~\eqref{eqn:individual_rationality_decomp} can be written as
\begin{equation*}
  \begin{split}
    \EE_{{\tilde{\pi}}}[u_{i}] &\geq \tilde{G}_{-i}^{(2)}(s_{0}) - V^{\tilde{\pi}}(s_{0}; \tilde{R}_{-i}) + V^{\tilde{\pi}}(s_{0}; r_{i} + \tilde{R}_{-i}) - V^{{\pi}^*_{r_i + \tilde{R}_{-i}}}(s_{0}; r_i + \tilde{R}_{-i}) \\
    &\quad+V^{\tilde{\pi}^*_{-i}}(s_{0}; \tilde{R}_{-i}) - \tilde{G}_{-i}^{(1)}(s_{0}),
  \end{split}
\end{equation*} 
where we recall that $\tilde{R}_{-i} = \sum_{i' \neq i}\tilde{r}_{i'}$, and ${\pi}^*_{r_i + \tilde{R}_{-i}}$ and $\tilde{\pi}^*_{-i}$ maximize $V^{\pi}_{1}(s_{0}; r_i + \tilde{R}_{-i})$ and $V^{\pi}_{1}(s_{0}; \tilde{R}_{-i})$ over $\pi$, respectively. We also let $\tilde{G}_{-i}^{(1)}, \tilde{G}_{-i}^{(2)}$ be the estimates used in Algorithm~\ref{alg:vcg_learn} line~\ref{algline:vcg_pricing} when other agents are reporting untruthfully.

Similar to the previous sections, we bound different terms under difference choices of $\zeta_{1}, \zeta_{2}$.
\begin{itemize}
  \item When $\zeta_{1} = \mathtt{OPT}$, similar to equation~\eqref{eqn:g_i_bound_opt}, we have
        \begin{equation*}
        \begin{split}
          & \tilde{G}_{-i}^{(1)}(s_{0}) - V_{1}^{\tilde{\pi}_{-i}^*}(s_{0}; \tilde{R}_{-i}) \geq -2H^{2}R_{\max}\sqrt{\frac{2\log |\cA|}{T}} - \sqrt{\epsilon_{\cF}} - 2(HR_{\max})^{1/3}(\epsilon_{\rm S} + 3\epsilon_{\cF})^{1/3} \\
          &\hspace{3em} - H\rbr{\frac{1}{T}\sum_{t = 1}^T\rbr{\sqrt{C^{\hat{\pi}_{\tilde{R}_{-i}}^{(t)}}(\tilde{\pi}_{-i}^*)} + \sqrt{C^{\hat{\pi}_{\tilde{R}_{-i}}^{(t)}}(\hat{\pi}_{\tilde{R}_{-i}}^{(t)})}}}
          \\&\hspace{3em}\times \rbr{2(HR_{\max})^{1/3}(\epsilon_{\rm S} + 3\epsilon_{\cF})^{1/3} + \sqrt{8\epsilon_{\rm S} + 12\epsilon_{\cF} + 3\epsilon_{\cF, \cF}}}.
        \end{split}
        \end{equation*}
  \item When $\zeta_{1} = \mathtt{PES}$, similar to equation~\eqref{eqn:g_i_bound_pes}, we have
        \begin{equation*}
        \begin{split}
          & \tilde{G}_{-i}^{(1)}(s_{0}) - V_{1}^{\tilde{\pi}^*_{-i}}(s_{0}; \tilde{R}_{-i}) \geq -2H^{2}R_{\max}\sqrt{\frac{2\log |\cA|}{T}} - \sqrt{\epsilon_{\cF}} - 2(HR_{\max})^{1/3}(\epsilon_{\rm S} + 3\epsilon_{\cF})^{1/3}\\
          &\hspace{3em} - H\rbr{\sqrt{C^{\check{\pi}^{\rm out}_{\tilde{R}_{-i}}}(\check{\pi}_{\tilde{R}_{-i}}^{\rm out})} + \frac{1}{T}\sum_{t = 1}^T \sqrt{C^{\check{\pi}_{\tilde{R}_{-i}}^{(t)}}(\tilde{\pi}_{-i}^*)}}
          \\&\hspace{3em}\times \rbr{2(HR_{\max})^{1/3}(\epsilon_{\rm S} + 3\epsilon_{\cF})^{1/3} + \sqrt{8\epsilon_{\rm S} + 12\epsilon_{\cF} + 3\epsilon_{\cF, \cF}}}.
        \end{split}
        \end{equation*}
  \item When $\zeta_{2} = \mathtt{OPT}$, by Lemma~\ref{lemma:alg_policy_evaluation}, we know
        \begin{align*}
          \tilde{G}_{-i}^{(2)}(s_{0}) - V^{\tilde{\pi}}(s_{0}; \tilde{R}_{-i}) \geq-\sqrt{\epsilon_{\cF}} - 2(HR_{\max})^{1/3}(\epsilon_{\rm S} + 3\epsilon_{\cF})^{1/3}.
        \end{align*}
  \item When $\zeta_{2} = \mathtt{PES}$, by Lemma~\ref{thm:alg_policy_evaluation_performance}
  \begin{align*}
    & \tilde{G}_{-i}^{(2)}(s_0) - V_{1}^{\tilde{\pi}}(s_0; \tilde{R}_{-i})\geq - H\sqrt{C^{\tilde{\pi}}(\tilde{\pi})}\rbr{2(HR_{\max})^{1/3}(\epsilon_{\rm S} + 3\epsilon_{\cF})^{1/3} +\sqrt{ 8\epsilon_{\rm S} + 12 \epsilon_{\cF} + 3\epsilon_{\cF, \cF}}},
  \end{align*} 
  where $\tilde{\pi}$ is the policy that the seller chooses when agent $i$ reports truthfully and the other agents do not.
\end{itemize}

We finally focus on lower bounding $V^{\tilde{\pi}}(s_0; r_i + \tilde{R}_{-i}) - V^{{\pi}^*_{r_i + \tilde{R}_{-i}}}(s_0; r_i + \tilde{R}_{-i})$. Since $\tilde{\pi}$ is the uniform mixture of $\{\tilde{\pi}^{(t)}\}_{t \in [T]}$, we have
\begin{align*}
   &V^{{\pi}^*_{r_i + \tilde{R}_{-i}}}_1(s_0; r_{i} + \tilde{R}_{-i}) -  V^{\tilde{\pi}}_{1}(s_{0}; r_i + \tilde{R}_{-i})\\
   & \qquad = \frac{1}{T}\sum_{t = 1}^T\rbr{ V^{{\pi}^*_{r_i + \tilde{R}_{-i}}}_{1}(s_{0}; r_i + \tilde{R}_{-i}) - V^{\tilde{\pi}^{(t)}}_{1}(s_{0}; r_i + \tilde{R}_{-i})}\\
   & \qquad \leq \frac{1}{T}\sum_{t = 1}^T\rbr{ V^{{\pi}^*_{r_i + \tilde{R}_{-i}}}_{1}(s_{0}; r_i + \tilde{R}_{-i}) - \check{Q}^{(t)}_{1, r_i + \tilde{R}_{-i}}(s_{0}, \tilde{\pi}^{(t)}_1)} + \sqrt{\epsilon_{\cF}} + 2(HR_{\max})^{1/3}(\epsilon_{\rm S} + 3\epsilon_{\cF})^{1/3}
\end{align*} by Lemma~\ref{lemma:alg_policy_evaluation}. By Lemma~\ref{thm:alg_spi}, we know that
\begin{align*}
  &\frac{1}{T}\sum_{t = 1}^T\rbr{ V^{{\pi}^*_{r_i + \tilde{R}_{-i}}}_{1}(s_{0}; r_i + \tilde{R}_{-i}) - \check{Q}^{(t)}_{1, r_i + \tilde{R}_{-i}}(s_{0}, \tilde{\pi}^{(t)}_1)}\leq 2H^{2}R_{\max}\sqrt{\frac{2\log|\cA|}{T}}\\
  &\qquad + H\rbr{\frac{1}{T}\sum_{t = 1}^T\sqrt{C^{\tilde{\pi}^{(t)}}(\pi^*_{r_i + \tilde{R}_{-i}})}}\rbr{2(HR_{\max})^{1/3}(\epsilon_{\rm S} + 3\epsilon_{\cF})^{1/3} + \sqrt{8\epsilon_{\rm S} + 12\epsilon_{\cF} + 3\epsilon_{\cF, \cF}}}.
\end{align*} 
Therefore, we have
\begin{equation}\label{eqn:tilde_subopt_bound}
\begin{split}
  &V^{{\pi}^*_{r_i + \tilde{R}_{-i}}}_1(s_0; r_{i} + \tilde{R}_{-i}) -  V^{\tilde{\pi}}_{1}(s_{0}; r_i + \tilde{R}_{-i})\\
  &\qquad\leq 2H^{2}R_{\max}\sqrt{\frac{2\log|\cA|}{T}} +\sqrt{\epsilon_{\cF}} + 2(HR_{\max})^{1/3}(\epsilon_{\rm S} + 3\epsilon_{\cF})^{1/3} \\
  &\hspace{3em} + H\rbr{\frac{1}{T}\sum_{t = 1}^T\sqrt{C^{\tilde{\pi}^{(t)}}(\pi^*_{r_i + \tilde{R}_{-i}})}}\rbr{2(HR_{\max})^{1/3}(\epsilon_{\rm S} + 3\epsilon_{\cF})^{1/3} + \sqrt{8\epsilon_{\rm S} + 12\epsilon_{\cF} + 3\epsilon_{\cF, \cF}}}.
\end{split}
\end{equation} Flipping the signs yields the final bound.

\paragraph{Truthfulness}\label{subsec:truthfulness}
Similar to above and let $\tilde{r}_{i'}$ be the potentially untruthful reward functions reported by other agents and let $\tilde{r}_{i}$ be the untruthful reward function that the agent $i$ may report. Furthermore, let $\tilde{R}_{-i} = \sum_{i' \neq i} \tilde{r}_{i'}$ and $\tilde{R} = \sum_{i= 1}^n \tilde{r}_{i}$.

Let $\tilde{\pi}$ be the policy chosen by the seller when the agent $i$ is truthful and other agents are possibly non-truthful and $\check{\pi}_{\tilde{R}}$ the policy chosen by Algorithm~\ref{alg:vcg_learn} when both the agent $i$ and other agents are non-truthful. The agents' expected utilities for the two cases are
\begin{align*}
  \EE_{{\tilde{\pi}}}[u_{i}] & = V^{\tilde{\pi}}_{1}(s_{0}; r_{i}) +  \tilde{G}_{-i}^{(2)}(s_{0}) -  \tilde{G}_{-i}^{(1)}(s_{0}),\\
  \EE_{d_{\check{\pi}_{\tilde{R}}}}[u_{i}] & = V^{\check{\pi}_{\tilde{R}}}_{1}(s_{0}; r_{i}) + \tilde{G}_{-i}^{(2), \prime}(s_{0}) -  \tilde{G}_{-i}^{(1), \prime}(s_{0}),
\end{align*} where $\tilde{G}_{-i}^{(2)}(s_{0})$ estimates $V^{\tilde{\pi}}(s_0; \tilde{R}_{-i})$ and $\tilde{G}_{-i}^{(2), \prime}(s_{0})$ estimates $V^{\check{\pi}_{\tilde{R}}}(s_0; \tilde{R}_{-i})$.

Observe that both $\tilde{G}_{-i}^{(1)}(s_{0})$ and $ \tilde{G}_{-i}^{(1), \prime}(s_{0})$ approximate $V^{\tilde{\pi}^*_{-i}}_{1}(s_{0}; \tilde{R}_{-i}) $ using the same algorithm, Algorithm~\ref{alg:spi}. As the algorithm itself does not contain randomness and $\tilde{G}_{-i}^{(1)}(s_{0})$ and $ \tilde{G}_{-i}^{(1), \prime}(s_{0})$ are constructed using the same parameters, the two terms must be equal. Then we have
\begin{align*}
  &\EE_{{\check{\pi}_{\tilde{R}}}}[u_{i}] - \EE_{\tilde{\pi}}[u_{i}] =  V^{\check{\pi}_{\tilde{R}}}_{1}(s_{0}; r_{i}) + \tilde{G}_{-i}^{(2), \prime}(s_{0}) - \rbr{V^{\tilde{\pi}}_{1}(s_{0}; r_{i}) +  \tilde{G}_{-i}^{(2)}(s_{0})}\\
  &\qquad = V^{\check{\pi}_{\tilde{R}}}_{1}(s_{0}; r_{i} + \tilde{R}_{-i}) + \tilde{G}_{-i}^{(2), \prime}(s_{0}) - V^{\check{\pi}_{\tilde{R}}}_{1}(s_{0}; \tilde{R}_{-i}) - \rbr{ V^{\tilde{\pi}}_{1}(s_{0}; r_i + \tilde{R}_{-i}) +  \tilde{G}_{-i}^{(2)}(s_{0}) - V^{\tilde{\pi}}_{1}(s_{0}; \tilde{R}_{-i})}\\
  &\qquad = V^{\check{\pi}_{\tilde{R}}}_{1}(s_{0}; r_{i} + \tilde{R}_{-i}) - V^{{\pi}^*_{r_i + \tilde{R}_{-i}}}_1(s_0; r_{i} + \tilde{R}_{-i}) + \tilde{G}_{-i}^{(2), \prime}(s_{0}) - V^{\check{\pi}_{\tilde{R}}}_{1}(s_{0}; \tilde{R}_{-i})\\
  &\hspace{3em} + V^{{\pi}^*_{r_i + \tilde{R}_{-i}}}_1(s_0; r_{i} + \tilde{R}_{-i}) -  V^{\tilde{\pi}}_{1}(s_{0}; r_i + \tilde{R}_{-i}) +  V^{\tilde{\pi}}_{1}(s_{0}; \tilde{R}_{-i}) -  \tilde{G}_{-i}^{(2)}(s_{0}),
\end{align*} 
where we recall that ${\pi}^*_{r_i + \tilde{R}_{-i}}$ is the maximizer of $V^{{\pi}}_{1}(s_{0}; r_i + \tilde{R}_{-i})$ over $\pi$ (the social welfare maximizing policy when agent $i$ reports truthfully). We then know that 
\[
V^{\check{\pi}_{\tilde{R}}}_{1}(s_{0}; r_{i} + \tilde{R}_{-i}) - V^{{\pi}^*_{r_i + \tilde{R}_{-i}}}_1(s_0; r_{i} + \tilde{R}_{-i})\leq 0
\]and 
\begin{align*}
  &\EE_{{\check{\pi}_{\tilde{R}}}}[u_{i}] - \EE_{{\tilde{\pi}}}[u_{i}]\\
  &\qquad \leq \rbr{\tilde{G}_{-i}^{(2), \prime}(s_{0}) - V^{\check{\pi}_{\tilde{R}}}_{1}(s_{0}; \tilde{R}_{-i})} + \rbr{V^{{\pi}^*_{r_i + \tilde{R}_{-i}}}_1(s_0; r_{i} + \tilde{R}_{-i}) -  V^{\tilde{\pi}}_{1}(s_{0}; r_i + \tilde{R}_{-i})} \\
  &\hspace{3em}+  \rbr{V^{\tilde{\pi}}_{1}(s_{0}; \tilde{R}_{-i}) -  \tilde{G}_{-i}^{(2)}(s_{0})}.
\end{align*}

Let us focus on the middle term first. 
By~\eqref{eqn:tilde_subopt_bound}, we have
\begin{align*}
  &V^{{\pi}^*_{r_i + \tilde{R}_{-i}}}_1(s_0; r_{i} + \tilde{R}_{-i}) -  V^{\tilde{\pi}}_{1}(s_{0}; r_i + \tilde{R}_{-i})\\
  &\qquad\leq 2H^{2}R_{\max}\sqrt{\frac{2\log|\cA|}{T}} +\sqrt{\epsilon_{\cF}} + 2(HR_{\max})^{1/3}(\epsilon_{\rm S} + 3\epsilon_{\cF})^{1/3} \\
  &\hspace{3em} + H\rbr{\frac{1}{T}\sum_{t = 1}^T\sqrt{C^{\tilde{\pi}^{(t)}}(\pi^*_{r_i + \tilde{R}_{-i}})}}\rbr{2(HR_{\max})^{1/3}(\epsilon_{\rm S} + 3\epsilon_{\cF})^{1/3} + \sqrt{8\epsilon_{\rm S} + 12\epsilon_{\cF} + 3\epsilon_{\cF, \cF}}}.
\end{align*}
We state the results conditioned on different values of $\zeta_{2}$ as the bound no longer depends on $\zeta_{1}$.
\begin{itemize}
  \item When $\zeta_{2} = \mathtt{OPT}$, by Lemma~\ref{lemma:alg_policy_evaluation}, we have
        \begin{align*}
          V^{\tilde{\pi}}_{1}(s_{0}; \tilde{R}_{-i}) -  \tilde{G}_{-i}^{(2)}(s_{0}) \leq \sqrt{\epsilon_{\cF}} + 2(HR_{\max})^{1/3}(\epsilon_{\rm S} + 3\epsilon_{\cF})^{1/3},
        \end{align*} and by Lemma~\ref{thm:alg_policy_evaluation_performance},
        \begin{multline*}
          \tilde{G}_{-i}^{(2), \prime}(s_{0}) - V^{\check{\pi}_{\tilde{R}}}_{1}(s_{0}; \tilde{R}_{-i}) \\
          \leq H\sqrt{C^{\check{\pi}_{\tilde{R}}}(\check{\pi}_{\tilde{R}})}\rbr{2(HR_{\max})^{1/3}(\epsilon_{\rm S} + 3\epsilon_{\cF})^{1/3} + \sqrt{8\epsilon_{\rm S} + 12\epsilon_{\cF} + 3\epsilon_{\cF, \cF}}}.
        \end{multline*}
  \item When $\zeta_{2} = \mathtt{PES}$, by Lemma~\ref{thm:alg_policy_evaluation_performance},
        \begin{align*}
          &V^{\tilde{\pi}}_{1}(s_{0}; \tilde{R}_{-i}) -  \tilde{G}_{-i}^{(2)}(s_{0}) \leq H\sqrt{C^{\tilde{\pi}}(\tilde{\pi})}\rbr{2(HR_{\max})^{1/3}(\epsilon_{\rm S} + 3\epsilon_{\cF})^{1/3} + \sqrt{8\epsilon_{\rm S} + 12\epsilon_{\cF} + 3\epsilon_{\cF, \cF}}},
        \end{align*} and by Lemma~\ref{lemma:alg_policy_evaluation},
        \begin{align*}
          \tilde{G}_{-i}^{(2), \prime}(s_{0}) - V^{\check{\pi}_{\tilde{R}}}_{1}(s_{0}; \tilde{R}_{-i}) \leq \sqrt{\epsilon_{\cF}} + 2(HR_{\max})^{1/3}(\epsilon_{\rm S} + 3\epsilon_{\cF})^{1/3}.
        \end{align*}
\end{itemize}
Combining the terms completes the proof.
\end{proof}



\section{Supporting Lemmas}\label{sec:supporting_lemmas}
In this section, we provide detailed proofs of supporting lemmas used in Section~\ref{sec:proof_of_main_result}.

\subsection{Proofs for Algorithm~\ref{alg:policy_evaluation}} \label{sec:proofs_of_alg_policy_evaluation}
Previous work has shown that the estimate of the value function $f^{\pi}$ is the exact value function of an induced MDP that shares the same state space, action space, and transition kernel as $\cM$, only with slightly perturbed reward functions~\citep{cai2020provably,uehara2021pessimistic, xie2021bellman, zanette2021provable}. More precisely, let $r$ be the input reward for Algorithm~\ref{alg:policy_evaluation}, $\pi$ the input policy, and $f^{\pi}$ the output. Let $\cM_{f^{\pi}}$ be the induced MDP. We formally state the result below.
\begin{lemma}\label{lemma:policy_evaluation_induced_mdp}
  For any input policy $\pi$ (not necessarily in $\Pi_{\rm SPI}$) and input reward function $r$, Algorithm~\ref{alg:policy_evaluation} returns a function $f^{\pi}$ such that $f^{\pi}$ is the $Q$-function of the policy $\pi$ under the induced MDP $\cM_{f^{\pi}}$, given by
  \begin{equation}\label{eqn:induced_MDP}
    \cM_{f^{\pi}} = (\cS, \cA, H, \cP, r_{f^{\pi}}),
  \end{equation} 
  where $r_{f^{\pi}, h} = r_{h} + f_{h}^{\pi} - \cT_{h, r}^{\pi}f_{h + 1}^{\pi}$. In other words, $f^{\pi}(\cdot, \cdot) = Q^{\pi}(\cdot, \cdot; r_{f^{\pi}})$.
\end{lemma}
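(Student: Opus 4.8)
The plan is to observe that the claimed identity is a purely structural consequence of the \emph{definition} of $r_{f^\pi}$ and does not use that $f^\pi$ minimizes the regularized loss in \eqref{eqn:policy_evaluation_regularized_loss}; the same argument will then apply verbatim to every iterate $\hat{Q}^{(t)}_r, \check{Q}^{(t)}_r$ produced inside Algorithm~\ref{alg:spi}, which is what we need downstream. First I would recall that, for any reward $r' = \{r'_h\}_{h=1}^H$, the action-value function $Q^\pi(\cdot,\cdot;r')$ of the fixed policy $\pi$ is the unique family of functions solving the backward Bellman evaluation recursion $Q^\pi_h(\cdot,\cdot;r') = \cT^\pi_{h,r'}Q^\pi_{h+1}(\cdot,\cdot;r')$ for $h \in [H]$, with terminal condition $Q^\pi_{H+1}(\cdot,\cdot;r') \equiv 0$; this is immediate from unrolling the definition $Q^\pi_h(x,a;r') = \EE_\pi[\sum_{h'=h}^H r'_{h'}(s_{h'},a_{h'}) \mid s_h=x,a_h=a]$ and induces the uniqueness by backward induction over $h$.

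The core of the proof is then a one-line cancellation. Fix $h \in [H]$ and $(x,a) \in \cS \times \cA$, and expand, using \eqref{eqn:defn_bellman_op} and the definition $r_{f^\pi,h} = r_h + f^\pi_h - \cT^\pi_{h,r}f^\pi_{h+1}$,
\[
(\cT^\pi_{h, r_{f^\pi}} f^\pi_{h+1})(x,a) = r_{f^\pi,h}(x,a) + \EE_\cP[f^\pi_{h+1}(s_{h+1},\pi_{h+1}) \mid s_h=x,a_h=a].
\]
Substituting $r_{f^\pi,h}$ and noting that $(\cT^\pi_{h,r}f^\pi_{h+1})(x,a) = r_h(x,a) + \EE_\cP[f^\pi_{h+1}(s_{h+1},\pi_{h+1}) \mid s_h=x,a_h=a]$, the $r_h$ term, the $\cT^\pi_{h,r}f^\pi_{h+1}$ term and the conditional-expectation term cancel, leaving exactly $f^\pi_h(x,a)$. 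Hence $f^\pi$ satisfies the Bellman evaluation recursion for the reward $r_{f^\pi}$. The boundary condition is handled by the convention $\cF_{H+1} = \{f : f \equiv 0\}$, which forces $f^\pi_{H+1} \equiv 0 = Q^\pi_{H+1}(\cdot,\cdot;r_{f^\pi})$, so uniqueness of the solution to the recursion gives $f^\pi_h(\cdot,\cdot) = Q^\pi_h(\cdot,\cdot;r_{f^\pi})$ for all $h \in [H]$, i.e.\ $f^\pi(\cdot,\cdot) = Q^\pi(\cdot,\cdot;r_{f^\pi})$, identifying $f^\pi$ as the $Q$-function of $\pi$ in the induced MDP \eqref{eqn:induced_MDP}.

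I do not expect a real obstacle here; the only points that need a sentence of care are (i) that the argument must avoid invoking optimality of $f^\pi$, so that the lemma is also available for the SPI iterates, and (ii) the terminal layer, where the degenerate class $\cF_{H+1}$ makes the boundary condition automatic. It is also worth recording, for later use, that the per-step reward perturbation satisfies $|r_{f^\pi,h}(x,a) - r_h(x,a)| = |f^\pi_h(x,a) - (\cT^\pi_{h,r}f^\pi_{h+1})(x,a)|$; that is, the discrepancy between $\cM_{f^\pi}$ and the true MDP is exactly the pointwise Bellman residual of $f^\pi$, which is the quantity controlled by the bounds on $\cE_{h,r}$ (Corollary~\ref{lemma:small_cE_small_bellman_error}).
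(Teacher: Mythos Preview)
Your proof is correct. The paper does not actually prove this lemma itself; it simply defers to Section~C.1 of \citet{zanette2021provable}. Your argument---verifying directly that $f^\pi$ satisfies the Bellman evaluation recursion $\cT^\pi_{h,r_{f^\pi}} f^\pi_{h+1} = f^\pi_h$ for the perturbed reward $r_{f^\pi}$ via the one-line cancellation, and then invoking uniqueness of the backward recursion with the terminal convention $\cF_{H+1}=\{0\}$---is exactly the standard route and is what that reference contains. Your additional remarks (that optimality of $f^\pi$ is irrelevant, and that the reward perturbation equals the pointwise Bellman residual) are both accurate and are precisely what the paper exploits immediately afterwards in Corollary~\ref{corollary:policy_evaluation_error}.
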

\begin{proof}
  See Section C.1 in~\citet{zanette2021provable} for a detailed proof.
\end{proof}
We immediately have the following corollary.
\begin{corollary}\label{corollary:policy_evaluation_error}
  Let $f^{\pi}$ be any one of the two functions returned by Algorithm~\ref{alg:policy_evaluation} for any input policy $\pi$ (not necessarily in $\Pi_{\rm SPI}$) and any input reward function $r$. 
  Then, for all $h \in [H]$, we have
  \[
    \abr{f_{h}^{\pi}(s, a) - Q_{h}^{\pi}(s, a; r)} \leq \sum_{h' = h}^{H}\EE_{(S_{h'}, A_{h'}) \sim \pi | (s, a)}\sbr{\abr{f_{h}^{\pi} - \cT_{h, r}^{\pi}f_{h + 1}^{\pi}}}.
  \]
\end{corollary}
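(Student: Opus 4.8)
The plan is to obtain the corollary directly from Lemma~\ref{lemma:policy_evaluation_induced_mdp} combined with the linearity of the action-value function in its reward argument. By Lemma~\ref{lemma:policy_evaluation_induced_mdp}, the output $f^{\pi}$ of Algorithm~\ref{alg:policy_evaluation} equals the $Q$-function of $\pi$ in the induced MDP $\cM_{f^{\pi}}$, i.e.\ $f^{\pi}(\cdot,\cdot) = Q^{\pi}(\cdot,\cdot; r_{f^{\pi}})$ with $r_{f^{\pi}, h} = r_{h} + f_{h}^{\pi} - \cT_{h, r}^{\pi} f_{h + 1}^{\pi}$ for every $h \in [H]$. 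Consequently the estimation error is itself a value function of a signed reward,
\[
  f_{h}^{\pi}(s,a) - Q_{h}^{\pi}(s,a; r) = Q_{h}^{\pi}(s,a; r_{f^{\pi}}) - Q_{h}^{\pi}(s,a; r).
\]

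Next I would use that $r \mapsto Q_{h}^{\pi}(s,a; r)$ is linear, since it is the expectation under the fixed visitation measure $d_{\pi}$ of the cumulative per-step reward; hence the right-hand side equals $Q_{h}^{\pi}(s,a; r_{f^{\pi}} - r)$, whose $h'$-th reward component is exactly $f_{h'}^{\pi} - \cT_{h', r}^{\pi} f_{h'+1}^{\pi}$. Unfolding the definition of $Q^{\pi}$ as an expectation over the trajectory generated by $\pi$ from $(s,a)$ at step $h$ then gives
\[
  f_{h}^{\pi}(s,a) - Q_{h}^{\pi}(s,a; r) = \sum_{h' = h}^{H} \EE_{(S_{h'}, A_{h'}) \sim \pi \mid (s,a)}\sbr{ \rbr{f_{h'}^{\pi} - \cT_{h', r}^{\pi} f_{h'+1}^{\pi}}(S_{h'}, A_{h'}) }.
\]
Applying the triangle inequality to pull the absolute value inside the sum and the expectation yields the stated bound.

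I do not expect a genuine obstacle here; the only points worth checking are that the boundary conventions are consistent, namely $\cF_{H+1} = \{0\}$ so that $\cT_{H, r}^{\pi} f_{H+1}^{\pi} = r_{H}$ and the $h' = H$ residual is $f_{H}^{\pi} - r_{H}$, and that the argument applies verbatim to \emph{either} function returned by Algorithm~\ref{alg:policy_evaluation}, since Lemma~\ref{lemma:policy_evaluation_induced_mdp} is stated for both the optimistic estimate $\hat{Q}_{r}^{\pi}$ and the pessimistic estimate $\check{Q}_{r}^{\pi}$.
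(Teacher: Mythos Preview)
Your proposal is correct and follows essentially the same argument as the paper: invoke Lemma~\ref{lemma:policy_evaluation_induced_mdp} to write $f_h^{\pi}-Q_h^{\pi}(\cdot,\cdot;r)$ as a difference of $Q$-functions, expand as a sum of expected per-step residuals $f_{h'}^{\pi}-\cT_{h',r}^{\pi}f_{h'+1}^{\pi}$, and pull the absolute value inside (the paper phrases this last step as Jensen's inequality rather than the triangle inequality, which amounts to the same thing here). Your boundary check at $h'=H$ and the remark that the argument applies to both outputs of Algorithm~\ref{alg:policy_evaluation} are also in order.
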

\begin{proof}
  By definition of the $Q$-function, we have
  \[
  \begin{aligned}
    f_{h}^{\pi}(s, a) - Q_{h}^{\pi}(s, a; r) 
    &= Q_{h}^{\pi}(s, a; r_{f^{\pi}}) - Q_{h}^{\pi}(s, a; r) \\
    &= \sum_{h' = h}^H \EE_{(S_{h'}, A_{h'}) \sim \pi | (s, a)}[r_{h}(S_{h'}, A_{h'}) - r_{f^{\pi}, h}(S_{h'}, A_{h'})].
    \end{aligned}    
  \] 
  Recalling the definition of $r_{f^{\pi}}$ in equation \eqref{eqn:induced_MDP} and using Jensen's inequality concludes the proof.
\end{proof}

We proceed to show that Algorithm~\ref{alg:policy_evaluation} is approximately optimistic/pessimistic and bounding the estimation error of its outputs. We begin with the proof of Lemma~\ref{lemma:alg_policy_evaluation}.
\begin{proof}[Proof of Lemma~\ref{lemma:alg_policy_evaluation}]
  We start by upper bounding two auxiliary terms. Let $f^{\pi, *}_{r} \in \cF$ be the best approximation of $Q^{\pi}(\cdot, \cdot; r)$, as defined in Assumption~\ref{assumption:realizability}. By Jensen's inequality, we have
  \begin{align*}
    |f^{\pi, *}_{1, r}(s_0, \pi_{1}) - Q_{1}^{\pi}(s_0, \pi_{1}; r)| &\leq \EE_{a \sim \pi_{1}(\cdot | s_0)}[|f^{\pi, *}_{1, r}(s_0, \pi_{1}) - Q_{1}^{\pi}(s_0, \pi_{1}; r)|] \leq \sqrt{\epsilon_{\cF}}.
  \end{align*} 
  Additionally, using Lemma~\ref{lemma:q_star_small_cE} we know that, conditioned on the event $\cG(\Pi_{\rm SPI})$, for all $h \in [H]$ we have $\cE_{h, r}(f^{\pi, *}_{r}, \pi; \cD) \leq 2\epsilon_{\rm S} + 6\epsilon_{\cF}$.

  We then consider $\check{Q}_{r}^{\pi}$. By~\eqref{eqn:policy_evaluation_regularized_loss}, we know that
  \begin{align*}
    \check{Q}^{\pi}_{1, r}(s_0, \pi) + \lambda\sum_{h = 1}^{H}\cE_{h, r}(\check{Q}_{r}^{\pi}, \pi; \cD) &\leq f^{\pi, *}_{1, r}(s_0, \pi) + \lambda \sum_{h = 1}^{H}\cE_{h, r}(f^{\pi, *}_{r}, \pi; \cD) \\
    &\leq Q^{\pi}_{1}(s_0, \pi; r) + |f^{\pi, *}_{1, r}(s_0, \pi_{1}) - Q_{1}^{\pi}(s_0, \pi_{1}; r)| +2\lambda H\epsilon_{\rm S} + 6\lambda H\epsilon_{\cF}\\
    &\leq Q^{\pi}_{1}(s_0, \pi_{1}; r) + \sqrt{\epsilon_{\cF}} + 2\lambda H\epsilon_{\rm S} + 6\lambda H\epsilon_{\cF}.
  \end{align*} 
  Similarly for $\hat{Q}_{r}^{\pi}$, by~\eqref{eqn:policy_evaluation_regularized_loss}, we have
    \begin{align*}
    \hat{Q}^{\pi}_{1, r}(s_0, \pi) - \lambda\sum_{h = 1}^{H}\cE_{h, r}(\hat{Q}_{r}^{\pi}, \pi; \cD) &\geq f^{\pi, *}_{1, r}(s_0, \pi) - \lambda \sum_{h = 1}^{H}\cE_{h, r}(f^{\pi, *}_{r}, \pi; \cD) \\
    &\geq Q^{\pi}_{1}(s_0, \pi; r) - |f^{\pi, *}_{1, r}(s_0, \pi_{1}) - Q_{1}^{\pi}(s_0, \pi_{1}; r)| -2\lambda H\epsilon_{\rm S} - 6\lambda H\epsilon_{\cF}\\
    &\geq Q^{\pi}_{1}(s_0, \pi_{1}; r) - \sqrt{\epsilon_{\cF}} - 2\lambda H\epsilon_{\rm S} - 6\lambda H\epsilon_{\cF},
  \end{align*} 
  thus completing the proof.
\end{proof}

We prove that the action-value functions returned by Algorithm~\ref{alg:policy_evaluation} are sufficiently good estimates.
\begin{proof}[Proof of Lemma~\ref{thm:alg_policy_evaluation_performance}]
  By Corollary~\ref{corollary:policy_evaluation_error}, we have
  \begin{align*}
    & \hat{Q}_{1, r}^{\pi}(s_{0}, \pi_{1})  - Q_{1}^{\pi}(s_{0}, \pi_{1}; r)\leq \abr{\sum_{h = 1}^H \EE_{\pi}\sbr{\hat{Q}_{h, r}^{\pi} - \cT_{h, r}^{\pi}\hat{Q}_{h + 1, r}^{\pi}}}, \\
    &  Q_{1}^{\pi}(s_{0}, \pi_{1}; r)  - \check{Q}_{1, r}^{\pi}(s_{0}, \pi_{1})\leq \abr{\sum_{h = 1}^H \EE_{\pi}\sbr{\check{Q}_{h, r}^{\pi} - \cT_{h, r}^{\pi}\check{Q}_{h + 1, r}^{\pi}}}.
  \end{align*} 
  Since the differences share similar forms, we can without loss of generality only consider $\hat{Q}_{r}^{\pi}$. Recall the definition of $C^\pi(\nu)$, given in Definition~\ref{defn:distribution_shift_coef}.
  We have
    \begin{equation}\label{eqn:thm_alg_policy_evaluation_performance_decomp}
    \begin{split}
      \abr{\sum_{h = 1}^H \EE_{\pi}\sbr{\check{Q}_{h, r}^{\pi} - \cT_{h, r}^{\pi}\check{Q}_{h + 1, r}^{\pi}}} &\leq \sum_{h = 1}^H \EE_{\pi}\sbr{\nbr{\check{Q}_{h, r}^{\pi} - \cT_{h, r}^{\pi}\check{Q}_{h + 1, r}^{\pi}}}\\
      & \leq \sqrt{C^\pi(\pi)}\sum_{h = 1}^H\EE_{\mu_h}\sbr{\nbr{\check{Q}_{h, r}^{\pi} - \cT_{h, r}^{\pi}\check{Q}_{h + 1, r}^{\pi}}},
    \end{split}
    \end{equation} 
    where the first inequality is by Cauchy-Schwarz, the second inequality by the definition of $C^\pi({\pi})$, which is the shorthand notation for $C^{\pi}(d_{\pi})$. Similar to the proof of Lemma~\ref{lemma:alg_policy_evaluation}, let $f^{\pi, *}_{r}$ be the best approximation of $Q^{\pi}(\cdot, \cdot; r)$ as defined in Assumption~\ref{assumption:realizability}. Then
    \begin{align*}
        \lambda \sum_{h = 1}^H\cE_{h, r}(\check{Q}_r^\pi, \pi; \cD) \leq f^{\pi, *}_{1, r}(s_{0}, \pi_1) - \check{Q}_{1, r}^\pi(s_0, \pi_1) + 2\lambda H\epsilon_{\rm S} + 6\lambda H \epsilon_{\cF}.
    \end{align*}
    Since $f^{\pi, *}_{r}, \check{Q}_{1, r}^{\pi} \in \cF$, we have $f^{\pi, *}_{r}, \check{Q}_{1, r}^\pi \in [-HR_{\max}, HR_{\max}]$ and thus
    \begin{align*}
        &\sum_{h = 1}^H\cE_{h, r}(\check{Q}_r^\pi, \pi; \cD) \leq \frac{2HR_{\max}}{\lambda} +  2H\epsilon_{\rm S} + 6 H \epsilon_{\cF}.
    \end{align*}
    By Corollary~\ref{lemma:small_cE_small_bellman_error}, conditioned on $\cG(\Pi_{\rm SPI})$, we have 
    \begin{align*}
        \sum_{h = 1}^H\EE_{\mu_h}\sbr{\|\check{Q}_{h, r}^\pi - \cT_{h, r}^\pi \check{Q}_{h + 1, r}^\pi\|^{2}} &\leq 2\sum_{h = 1}^H\cE_{h, r}(\check{Q}_r^\pi, \pi; \cD) + 4H\epsilon_{\rm S} + 3H\epsilon_{\cF, \cF}\\
        &\leq \frac{4HR_{\max}}{\lambda} + 8H\epsilon_{\rm S} + 12 H \epsilon_{\cF} + 3H\epsilon_{\cF, \cF}.
    \end{align*} Plugging the bound back into~\eqref{eqn:thm_alg_policy_evaluation_performance_decomp} and applying Cauchy-Schwarz inequality gives us
    \begin{align*}
        \abr{\sum_{h = 1}^H \EE_{\pi}\sbr{\check{Q}_{h, r}^{\pi} - \cT_{h, r}^{\pi}\check{Q}_{h + 1, r}^{\pi}}} &\leq \sqrt{H}\sqrt{C^\pi(\pi)}\sqrt{\frac{4HR_{\max}}{\lambda} + 8H\epsilon_{\rm S} + 12 H \epsilon_{\cF} + 3H\epsilon_{\cF, \cF}}\\
        &=H \sqrt{C^\pi(\pi)}\sqrt{\frac{4R_{\max}}{\lambda} + 8\epsilon_{\rm S} + 12 \epsilon_{\cF} + 3\epsilon_{\cF, \cF}}.
    \end{align*} 
    Setting $\lambda = \rbr{\frac{R_{\max}}{H^{2}(\epsilon_{\rm S} + 3\epsilon_{\cF})^{2}}}^{1/3}$ and using $\sqrt{a + b} \leq \sqrt{a} + \sqrt{b}$ for $a, b \in \RR_{\geq 0}$ completes the proof.
\end{proof}

\subsection{Proofs for Algorithm~\ref{alg:spi}}
\label{sec:proofs_of_alg_spi}
We now turn to analyzing the policies selected in Algorithm~\ref{alg:spi}. In particular, we focus on the mirror descent-style updates given in \eqref{eqn:ospi_policy_update} and \eqref{eqn:pspi_policy_update}. We start by defining an abstract version of the procedure in Algorithm~\ref{alg:spi}.
\begin{definition}\label{defn:mirror_descent_regret}
  Consider the following procedure. For any $t \in [T]$:
  \begin{enumerate}
    \item Let $f^{(t)}\in \cF$ be an arbitrary function in the function class.
    \item Let $\pi^{(t + 1)}_{h}(a | s)\, \propto\,\pi^{(t)}_{h}(a | s)\exp\rbr{\eta f_{h}^{(t)}(s, a)}$ for all $(s, a) \in \cS \times \cA,\,h \in [H]$.
  \end{enumerate}
\end{definition}
Recall that $\EE_{a \in \cA}\sbr{\log\pi_{h}(a | s)} = \sum_{a \in \cA} \pi_{h}(a | s)\log\pi_{h}(a|s)$ for all $\pi,h,$ and $s$. We continue with a standard analysis of the regret of actor-critic algorithms.

\begin{lemma}\label{lemma:mirror_descent_bound}
    For any $\pi$ (not necessarily in $\Pi_{\rm SPI}$), for all $h\in[H]$ and $s \in \cS$, setting $\eta = \sqrt{\frac{\log|\cA|}{2H^{2}R_{\max}^2T}}$ in the procedure defined in~\ref{defn:mirror_descent_regret} ensures that
    \begin{equation*}
        \sum_{t = 1}^T\langle \pi_h(\cdot | s) - \pi_h^{(t)}(\cdot | s), f_h^{(t)}(s, \cdot) \rangle \leq 2HR_{\max}\sqrt{2T\log|\cA|}.
    \end{equation*}
\end{lemma}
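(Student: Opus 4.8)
The plan is to recognize, for each fixed step $h \in [H]$ and state $s \in \cS$, that the coordinate update in Definition~\ref{defn:mirror_descent_regret} is exactly the exponential-weights (entropic mirror descent) algorithm on the online linear optimization problem over the simplex $\Delta(\cA)$ with gain vectors $f_h^{(t)}(s, \cdot)$, initialized at the uniform policy $\pi_h^{(1)}(\cdot \mid s) = \mathrm{Unif}(\cA)$ as in Algorithm~\ref{alg:spi}. Fix $(h, s)$ and abbreviate $g^{(t)}(a) = f_h^{(t)}(s, a)$; the only structural fact needed is that $f_h^{(t)} \in \cF_h$ gives the uniform range bound $|g^{(t)}(a)| \le (H - h + 1)R_{\max} \le HR_{\max}$.

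First I would introduce the log-partition potential $\Phi_t = \eta^{-1}\log\sum_{a \in \cA} w^{(t)}(a)$ with unnormalized weights $w^{(1)} \equiv 1$ and $w^{(t+1)}(a) = w^{(t)}(a)\exp(\eta g^{(t)}(a))$, so that $\pi_h^{(t)}(a\mid s) = w^{(t)}(a)/\sum_b w^{(t)}(b)$ and $\Phi_1 = \eta^{-1}\log|\cA|$. A one-line computation gives $\Phi_{t+1} - \Phi_t = \eta^{-1}\log \EE_{a\sim\pi_h^{(t)}(\cdot\mid s)}[\exp(\eta g^{(t)}(a))]$; applying Hoeffding's lemma to the bounded random variable $g^{(t)}(a)$ (range width at most $2HR_{\max}$) yields $\Phi_{t+1} - \Phi_t \le \langle \pi_h^{(t)}(\cdot\mid s), g^{(t)}\rangle + \tfrac{\eta}{2}H^2R_{\max}^2$. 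Telescoping over $t = 1, \dots, T$ bounds $\Phi_{T+1}$ from above by $\eta^{-1}\log|\cA| + \sum_{t=1}^T\langle\pi_h^{(t)}(\cdot\mid s), g^{(t)}\rangle + \tfrac{\eta}{2}H^2R_{\max}^2 T$.

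For the matching lower bound on $\Phi_{T+1}$, I would use $\Phi_{T+1} = \eta^{-1}\log\sum_a \exp\bigl(\eta\sum_{t=1}^T g^{(t)}(a)\bigr)$ together with the conjugacy identity $\log\sum_a \exp(x_a) = \sup_{p\in\Delta(\cA)}\{\langle p, x\rangle + \mathrm{Ent}(p)\}$ and the nonnegativity of entropy to get $\Phi_{T+1} \ge \langle\pi_h(\cdot\mid s), \sum_{t=1}^T g^{(t)}\rangle$ for the arbitrary comparator $\pi$ (in particular $\pi$ need not be a vertex of the simplex). Combining the two bounds gives $\sum_{t=1}^T\langle\pi_h(\cdot\mid s) - \pi_h^{(t)}(\cdot\mid s), f_h^{(t)}(s,\cdot)\rangle \le \eta^{-1}\log|\cA| + \tfrac{\eta}{2}H^2R_{\max}^2 T$; plugging in $\eta = \sqrt{\log|\cA|/(2H^2R_{\max}^2 T)}$ makes the first term equal to $HR_{\max}\sqrt{2T\log|\cA|}$ and the second equal to $\tfrac{1}{4} HR_{\max}\sqrt{2T\log|\cA|}$, whose sum is at most $2HR_{\max}\sqrt{2T\log|\cA|}$, as claimed. (Alternatively, one could simply invoke the standard mirror-descent regret bound~\citep{bubeck2014convex} with Bregman divergence the KL divergence, using $D_{\mathrm{KL}}(\pi_h(\cdot\mid s)\,\|\,\mathrm{Unif}(\cA)) \le \log|\cA|$.)

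There is no genuine obstacle here: this is a textbook exponential-weights analysis, and the paper already cites \citet{bubeck2014convex} for precisely this step. The only points that require a little care are (i) handling a general comparator $\pi$ rather than a single best action, which is exactly what the entropy-nonnegativity step delivers; (ii) noting that the gains $f_h^{(t)}$ may be chosen adaptively (they depend on the whole trajectory of Algorithm~\ref{alg:spi}), which is harmless because the per-round potential inequality is a deterministic statement given $\pi_h^{(t)}$; and (iii) tracking the constant $(H-h+1)R_{\max} \le HR_{\max}$ carefully so the final numerical factor does not exceed $2$.
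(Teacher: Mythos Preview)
Your proof is correct and self-contained. The paper's own proof is essentially a citation: it invokes Lemma~C.3 of \citet{xie2021bellman} for the three-point mirror-descent inequality
\[
\sum_{t=1}^T\langle \pi_h - \pi_h^{(t)}, f_h^{(t)}\rangle \le \sum_{t=1}^T\langle \pi_h^{(t+1)} - \pi_h^{(t)}, f_h^{(t)}\rangle - \tfrac{1}{\eta}\EE_{a\sim\pi_h^{(1)}}[\log\pi_h^{(1)}(a\mid s)],
\]
then bounds each one-step term $\langle \pi_h^{(t+1)} - \pi_h^{(t)}, f_h^{(t)}\rangle \le 2\eta H^2R_{\max}^2$ via a Pinsker-type estimate from Lemma~C.4 of the same reference, and finally defers the remaining algebra to Section~C.1 there. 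Your route instead runs the textbook log-partition potential analysis with Hoeffding's lemma directly, which yields the regret bound $\eta^{-1}\log|\cA| + \tfrac{\eta}{2}H^2R_{\max}^2T$ in one pass and actually lands on the slightly sharper constant $\tfrac{5}{4}$ before relaxing to $2$. The two arguments are standard and equivalent in spirit; the advantage of yours is that it is fully written out and does not require the reader to chase external lemmas, while the paper's version has the virtue of matching the presentation in \citet{xie2021bellman} that the rest of the analysis is modeled on.
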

\begin{proof}
By a direct application of Lemma C.3 of~\citet{xie2021bellman}, we know that even for policies not in $\Pi_{\rm SPI}$ (as we are effectively performing mirror descent over the probability simplex with the $\mathrm{KL}$ penalty) we have
  \begin{align*}
    \sum_{t = 1}^T \langle \pi_h(\cdot | s) - \pi^{(t)}_h(\cdot | s), f_h^{(t)}(s, \cdot)\rangle \leq \sum_{t = 1}^T \langle \pi_{h}^{(t + 1)} - \pi_h^{(t)}(\cdot | s), f_h^{(t)}(s, \cdot)\rangle - \frac{1}{\eta}\EE_{a \sim \pi_h^{(1)}}\sbr{\log \pi_h^{(1)}(a | s)},
  \end{align*} 
  where $\eta$ is the stepsize. From the proof of Lemma C.4 in~\citet{xie2021bellman}, we further note that for any $\pi \in \pi$, $h \in [H]$, $s \in \cS$, and $t\in[T]$ we have
  \begin{align*}
    \langle \pi_h(\cdot | s) - \pi_h^{(t)}(\cdot | s), f_h^{(t)}(s, \cdot) \rangle &\leq \|f_{h}^{(t)}(s, \cdot)\|_{\infty}\sqrt{2\eta\langle \pi_h(\cdot | s) - \pi_h^{(t)}(\cdot | s), f_h^{(t)}(s, \cdot) \rangle }.
  \end{align*} 
  Recalling that all $f_{h} \in \cF_{h}$ are bounded by $HR_{\max}$, we know that $\langle \pi_h(\cdot | s) - \pi_h^{(t)}(\cdot | s), f_h^{(t)}(s, \cdot) \rangle \leq 2\eta H^{2}R_{\max}^{2}$. Following the proof in Section C.1 in~\citet{xie2021bellman} completes our proof.
\end{proof}

With the observations above, we proceed with proving Lemma~\ref{thm:alg_spi}.
\begin{proof}[Proof of Lemma~\ref{thm:alg_spi}]
We analyze the pessimistic estimate and note that the analysis is similar for the other part. Let $\check{\pi}_r^{(t)}$ be the policy iterate of Algorithm~\ref{alg:spi} and $\check{Q}_r^{(t)}$ the corresponding value function estimate. We know that
  \begin{align*}
      &V_{1}^{\pi}(s_0; r) - \frac{1}{T}\sum_{t =1}^T\check{Q}_{1, r}^{(t)}(s_0, \check{\pi}^{(t)}_{1, r}) = \frac{1}{T}\sum_{t =1}^T\rbr{Q_{1}^{\pi}(s_0, \pi_{1}; r) - \check{Q}_{1, r}^{(t)}(s_0, \check{\pi}^{(t)}_{1, r})}\\
      &\qquad \leq \frac{1}{T}\sum_{t =1}^T\sum_{h = 1}^H\EE_{\pi}\sbr{\langle \check{Q}^{(t)}_{h, r}(s_h, \cdot), \pi_h(\cdot | s_h) - \check{\pi}^{(t)}_{h, r}(\cdot | s_h)\rangle} + \abr{\frac{1}{T}\sum_{t = 1}^T\sum_{h = 1}^H \EE_{\pi}\sbr{\check{Q}^{(t)}_{h, r} - \cT_{h, r}^{\check{\pi}_{r}^{(t)}} \check{Q}_{h + 1, r}^{(t)}}},
  \end{align*} 
  where the inequality is by a standard argument in episodic reinforcement learning (see, for example, Lemma A.1 in~\citet{jin2021pessimism} or Section B.1 in~\citet{cai2020provably}). By Lemma~\ref{lemma:mirror_descent_bound}, we know that when $\eta = \sqrt{\frac{\log |\cA|}{2H^2R_{\max}^2 T}}$, we have
  \begin{align*}
      &\frac{1}{T}\sum_{t =1}^T\sum_{h = 1}^H\EE_{\pi}\sbr{\langle \check{Q}^{(t)}_{h, r}(s_h, \cdot), \pi_h(\cdot | s_h) - \check{\pi}^{(t)}_{h, r}(\cdot | s_h)\rangle} \leq 2H^2R_{\max}\sqrt{\frac{2\log | \cA |}{T}}.
  \end{align*} 
  For all $t \in [T]$, similar to the proof of Lemma~\ref{thm:alg_policy_evaluation_performance}, when $\lambda = \rbr{\frac{R_{\max}}{H^{2}(\epsilon_{\rm S} + 3\epsilon_{\cF})^{2}}}^{1/3}$, we have
  \begin{align*}
    \abr{\sum_{h = 1}^H \EE_{\pi}\sbr{\check{Q}^{(t)}_{h, r} - \cT_{h, r}^{\check{\pi}_{r}^{(t)}} \check{Q}_{h + 1, r}^{(t)}}}&\leq H\sqrt{C^{\check{\pi}_r^{(t)}}(\pi)}\rbr{2(HR_{\max})^{1/3}(\epsilon_{\rm S} + 3\epsilon_{\cF})^{1/3} +\sqrt{8\epsilon_{\rm S} + 12\epsilon_{\cF} + 3\epsilon_{\cF, \cF}}}.
  \end{align*} 
  Notice that the distribution shift coefficient is changed from $C^\pi(\pi)$ to $C^{\check{\pi}_r^{(t)}}(\pi)$, as the policy specific Bellman operator $\cT$ is now induced by policy $\check{\pi}_r^{(t)}$ rather than $\pi$.
  Taking the average over $t$ and applying the triangle inequality give us
  \begin{align*}
    &\abr{\frac{1}{T}\sum_{t = 1}^T\sum_{h = 1}^H \EE_{\pi}\sbr{\check{Q}^{(t)}_{h, r} - \cT_{h, r}^{\check{\pi}_{r}^{(t)}} \check{Q}_{h + 1, r}^{(t)}}}\\
    &\qquad\leq H\rbr{\frac{1}{T}\sum_{t = 1}^T \sqrt{C^{\check{\pi}^{(t)}_r}({\pi})}}\rbr{2(HR_{\max})^{1/3}(\epsilon_{\rm S} + 3\epsilon_{\cF})^{1/3} +\sqrt{ 8\epsilon_{\rm S} + 12\epsilon_{\cF} + 3\epsilon_{\cF, \cF}}}.
  \end{align*} 
  Combining the bounds, we have
  \begin{align*}
    &V_{1}^{\pi}(s_0; r) - \frac{1}{T}\sum_{t =1}^T\check{Q}_{1, r}^{(t)}(s_0, \check{\pi}^{(t)}_{1, r})\leq 2H^2R_{\max}\sqrt{\frac{2\log|\cA|}{T}}\\
    &\qquad  + H\rbr{\frac{1}{T}\sum_{t = 1}^T \sqrt{C^{\check{\pi}^{(t)}_r}({\pi})}}\rbr{2(HR_{\max})^{1/3}(\epsilon_{\rm S} + 3\epsilon_{\cF})^{1/3} +\sqrt{ 8\epsilon_{\rm S} + 12\epsilon_{\cF} + 3\epsilon_{\cF, \cF}}},
  \end{align*}
  which completes the proof.
\end{proof}




\section{Concentration Analysis} \label{sec:concentration_analysis}

In this section, we prove the concentration lemmas used in Section~\ref{sec:proof_of_main_result}.

\subsection{Proof of Lemma~\ref{lemma:gyorfi_variant_bellman_error_concentration}}
\label{subsec:proof_of_lemma_gyorfi_variant}

We start by including a minor adaptation of a useful result from~\citet{gyorfi2002distribution}.
\begin{theorem}[Adaptation of Theorem 11.6 from~\citet{gyorfi2002distribution}]
\label{thm:11_6_gyorfi}
Let $B \geq 1$ and let $\cG$ be a class of functions $g: \RR^{d} \to [0, B]$. Let $Z_{1}, Z_{2}, \ldots, Z_{K}$ be i.i.d.~$\RR^{d}$-valued random variables. Assume $\alpha > 0$, $0 < \epsilon < 1$, and $K \geq 1$. Then
\[
  \Pr\rbr{\sup_{g \in \cG}\frac{\frac{1}{K}\sum_{j = 1}^{K}g(Z_{j}) - \EE[Z_{j}]}{\alpha + \frac{1}{K}\sum_{j = 1}^{K}g(Z_{j}) + \EE[Z_{j}]} > \epsilon} \leq 4\cN_{\infty}\rbr{\frac{\alpha\epsilon}{5}, \cG}\exp\rbr{-\frac{3\epsilon^{2}\alpha K}{40B}}.
\]
\end{theorem}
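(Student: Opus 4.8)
The statement is the uniform‑$\ell_\infty$‑covering‑number version of Theorem~11.6 of \citet{gyorfi2002distribution}, whose original form carries the \emph{data‑dependent} empirical $L_1$ covering number $\cN_1(\tfrac{\alpha\epsilon}{5},\cG,Z_1^K)$ in place of $\cN_\infty(\tfrac{\alpha\epsilon}{5},\cG)$, and the two‑sided quantity $|\tfrac{1}{K}\sum_j g(Z_j)-\EE g(Z)|$ in the numerator. So the quickest route is to deduce the claim directly from that theorem via two elementary observations. First, the one‑sided event appearing in the claim is contained in the corresponding two‑sided event, so its probability is no larger. Second, any set of functions that is a uniform $\rho$‑cover of $\cG$ in $\ell_\infty$ is, on every sample $z_1^K$, also a $\rho$‑cover of $\cG$ in the $L_1$ norm of the empirical measure; hence $\cN_1(\rho,\cG,z_1^K)\le\cN_\infty(\rho,\cG)$ pointwise and therefore $\EE[\cN_1(\rho,\cG,Z_1^K)]\le\cN_\infty(\rho,\cG)$. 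Taking $\rho=\alpha\epsilon/5$ and substituting both observations into Theorem~11.6 of \citet{gyorfi2002distribution} yields the stated inequality with the identical constants $4$, $5$, and $\tfrac{3}{40}$.

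For readers who prefer not to black‑box that theorem, I would also recall the three load‑bearing steps of its proof, each of which is distribution‑free and hence untouched by the substitution. \textbf{(i) Symmetrization.} One first replaces $\EE g(Z)$ in the numerator and denominator by a ghost‑sample average $\tfrac{1}{K}\sum_j g(Z_j')$ at the cost of a factor $2$ and a controlled shrinkage of $\epsilon$; the shrinkage is harmless precisely because the self‑normalizing denominator is bounded below by $\alpha>0$, so additive perturbations of size $O(\alpha\epsilon)$ translate into an $O(\epsilon)$ multiplicative change. \textbf{(ii) Reduction to a finite class.} Conditioning on the $2K$ sample points makes the relevant restriction of $\cG$ finite; one covers it to radius $\alpha\epsilon/5$ — this is the only place a covering number enters, and it is here that we upgrade the empirical $L_1$ cover to the (larger, hence still valid) $\ell_\infty$ cover, and where the constant $5$ originates. \textbf{(iii) Per‑function tail bound.} For each cover element, a Bernstein/Hoeffding bound on the symmetrized average, using crucially that $0\le g\le B$ forces $\EE[g(Z)^2]\le B\,\EE g(Z)$, i.e.\ the variance is controlled by the mean; the self‑normalizing denominator then absorbs exactly this variance proxy, which is what produces $\exp(-3\epsilon^2\alpha K/(40B))$ with no dependence on the size of $\cG$ beyond the covering number or on the range beyond $B$. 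A union bound over the cover completes the argument.

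The honest answer to ``where is the difficulty'' is: there is essentially none. The sole new ingredient over \citet{gyorfi2002distribution} is the covering inequality $\cN_1(\rho,\cG,z_1^K)\le\cN_\infty(\rho,\cG)$, immediate from the definitions, together with the trivial passage from the two‑sided to the one‑sided event. The constant chase that produces $5$, $4$, and $\tfrac{3}{40}$ is performed once in \citet{gyorfi2002distribution}, and I would simply quote it. If one instead wanted a fully self‑contained derivation, the only genuinely fiddly point would be making the $\epsilon$‑losses in steps (i) and (ii) compose with the per‑function exponent in step (iii) so as to reproduce exactly those constants rather than merely ``some absolute constants''; that is bookkeeping, and nothing in it interacts with the $\ell_\infty$ substitution.
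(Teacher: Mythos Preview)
Your proposal is correct and matches the paper's own proof essentially line for line: the paper quotes Theorem~11.6 of \citet{gyorfi2002distribution} with the empirical $L_1$ covering number $\EE[\cN_1(\alpha\epsilon/5,\cG,\{Z_j\})]$, then observes that any $\ell_\infty$ $\rho$-cover of $\cG$ is also an empirical $L_1$ $\rho$-cover on every sample, so $\cN_1(\rho,\cG,\{Z_j\})\le\cN_\infty(\rho,\cG)$ pointwise and hence in expectation. Your additional remark about passing from a two-sided to a one-sided event is harmless (the paper simply restates the quoted theorem in one-sided form), and your optional sketch of the symmetrization/covering/Bernstein machinery is extra context the paper does not include.
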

\begin{proof}
  By Theorem 11.6 from~\citet{gyorfi2002distribution}, we know that
  \begin{align*}
    \Pr\rbr{\sup_{g \in \cG}\frac{\frac{1}{K}\sum_{j = 1}^{K}g(Z_{j}) - \EE[Z_{j}]}{\alpha + \frac{1}{K}\sum_{j = 1}^{K}g(Z_{j}) + \EE[Z_{j}]} > \epsilon} \leq 4\EE\sbr{\cN_{1}\rbr{\frac{\alpha\epsilon}{5}, \cG, \{Z_{j}\}_{j = 1}^{K}}}\exp\rbr{-\frac{3\epsilon^{2}\alpha K}{40B}},
  \end{align*} where $\cN_{1}\rbr{\frac{\alpha\epsilon}{5}, \cG, \{Z_{j}\}_{j = 1}^{K}}$ is the cardinality of the smallest set of functions $\{g^{l}\}_{l = 1}^{L}$ such that for all $g \in \cG$ there exists some $l \in [L]$ where
  \[
    \frac{1}{K}\sum_{j = 1}^K \left|g(Z_{j}) - g^{l}(Z_{j})\right| \leq \frac{\alpha \epsilon}{5}.
  \]
  See Section 11.4 from~\citet{gyorfi2002distribution} for a detailed proof of the statement above. We then show that for any $\{Z_{j}\}_{j=1 }^{K}$, $\cN_{1}\rbr{\frac{\alpha\epsilon}{5}, \cG, \{Z_{j}\}_{j = 1}^{K}} \leq\cN_{\infty}\rbr{\frac{\alpha\epsilon}{5}, \cG}$. Let $\{\tilde{g}^{l}\}_{l = 1}^{L}$ be an $\frac{\alpha\epsilon}{5}$-covering of $\cG$ with respect to the $\ell_{\infty}$-norm. We then know that for any $g \in \cG$, there exists some $l \in [L]$ such that
  \begin{align*}
    \frac{1}{K}\sum_{j = 1}^{K}|g(Z_{j}) - \Tilde{g}^{l}(Z_{j})| \leq \frac{1}{K}\sum_{j = 1}^{K}\frac{\alpha\epsilon}{5} = \frac{\alpha\epsilon}{5}.
  \end{align*} Therefore $\{\tilde{g}^{l}\}_{l = 1}^{L}$ satisfies the requirement above, concluding our proof.
\end{proof}
  Let $h \in [H], r \in \Tilde{\cR}$ be arbitrary and fixed.
  First, we show
  \begin{align*}
    &\Pr\Bigl(\exists f, f' \in \cF, \pi \in \Pi: \EE_{\mu_h}\sbr{\|f_{h} - \cT_{h, r}^{\pi}f'_{h + 1}\|^2} - \cL_{h, r}(f_{h}, f'_{h + 1}, \pi; \cD) + \\
    &\hspace{12em}\cL_{h, r}(\cT_{h, r}^{\pi}f'_{h + 1}, f'_{h + 1}, \pi; \cD) \geq \epsilon\bigl(\alpha + \beta + \EE_{\mu_h}\sbr{\|f_{h} - \cT_{h, r}^{\pi}f'_{h + 1}\|^2}\bigr)\Bigr)\\
    &\qquad \leq 14\rbr{\cN_{\infty}\rbr{\frac{\epsilon\beta}{140HR_{\max}}, \cF}}^{2}\cN_{\infty, 1}\rbr{\frac{\epsilon\beta}{140H^{2}R^{2}_{\mathrm{max}}}, \Pi}\exp\rbr{-\frac{\epsilon^{2}(1 - \epsilon)\alpha K}{214(1 + \epsilon)H^{4}R_{\max}^{4}}}.
  \end{align*} for all $\alpha, \beta > 0$, $0 < \epsilon \leq 1/2$.

  Let $Z$ be the random vector $(s_{h}, a_{h}, r_{h}(s_h, a_h), s_{h + 1})$ where $(s_{h}, a_{h}, s_{h + 1}) \sim \mu_{h}$. Let $Z_j$ be its realization for any $j \in [K]$ drawn independently from $\cD_{h}$. For any $f, f' \in \cF$, and $\pi \in \Pi$, we further define the random variable
  \begin{equation*}
          g_{f, f'}^{\pi}(Z) = (f_{h}(s_{h}, a_{h}) - r_{h} - f'_{h +  1}(s_{h + 1}, \pi_{h + 1}))^{2} -
          (\cT_{h, r}^{\pi}f'_{h + 1}(s_{h}, a_{h}) - r_{h} - f'_{h +  1}(s_{h + 1}, \pi_{h + 1}))^{2},
  \end{equation*} 
  and $g_{f, f'}^{\pi}(Z_{j})$ its empirical counterpart evaluated on $Z$'s realization, $Z_{j}$. We begin by showing some basic properties of the random variable $g_{f, f'}^{\pi}(Z)$. Recall that by definition of the Bellman evaluation operator
  \begin{equation}
    \cT_{h, r}^{\pi}f'_{h + 1}(s_{h}, a_{h}) = \EE_{\cP}\sbr{r_{h} + f'_{h +  1}(s_{h + 1}, \pi_{h + 1}) | s_{h}, a_{h}}.
  \end{equation} Since $\cT_{h, r}^{\pi}f_{h + 1}(s_{h}, a_{h}) = \EE_{\mu_{h}}\sbr{r_{h} + f'_{h +  1}(s_{h + 1}, \pi_{h + 1}) | s_{h}, a_{h}}$, by the law of total probability
  \begin{align*}
    \EE_{Z \sim \mu_h}&[g_{f, f'}^{\pi}(Z)] \\
    & = {\EE_{s_h, a_h \sim \mu_h}}\Big[{\EE_{s_{h + 1} \sim \mu_h | s_h, a_h}} [(f_{h}(s_{h}, a_{h}) - r_{h} - f'_{h +  1}(s_{h + 1}, \pi_{h + 1}))^{2} -\\
    & \hspace{9em}(\cT_{h, r}^{\pi}f'_{h + 1}(s_{h}, a_{h}) - r_{h} - f'_{h +  1}(s_{h + 1}, \pi_{h + 1}))^{2} | s_{h}, a_{h}]\Big]\\
    & =  {\EE_{\mu_h}}\Big[{\EE}_{s_{h + 1} \sim \mu_h | s_h, a_h}[(f_{h}(s_{h}, a_{h}) +\cT_{h, r}^{\pi}f'_{h + 1}(s_{h}, a_{h}) - 2( r_{h} + f'_{h +  1}(s_{h + 1}, \pi_{h + 1}))) \times \\
    & \hspace{21em}(f_{h}(s_{h}, a_{h}) - \cT_{h, r}^{\pi}f'_{h + 1}(s_{h}, a_{h})) | s_{h}, a_{h}]\Big]\\
    & = \EE_{\mu_h}\sbr{\|f_{h}(s_{h}, a_{h}) - \cT_{h, r}^{\pi}f'_{h + 1}(s_{h}, a_{h})\|^{2}}.
  \end{align*}
  Additionally, recalling that $r_{h} \in [-R_{\max}, R_{\max}]$, $f'_{h + 1} \in [-(H - h)R_{\max}, (H - h)R_{\max}]$, $f_{h} \in [-(H - h + 1)R_{\max}, (H - h + 1)R_{\max}]$, we know that $g_{f, f'}^{\pi}(Z) \in [- 16H^2R_{\max}^2, 16H^2R_{\max}^2]$. Lastly, notice that
  \begin{equation}\label{eqn:g_var_bound}
    \begin{split}
      \Var(g_{f, f'}^{\pi}(Z)) &\leq \EE[(g_{f, f'}^{\pi}(Z))^{2}]\\
      &=  {\EE}\Big[{\EE}[(f_{h}(s_{h}, a_{h}) +\cT_{h, r}^{\pi}f'_{h + 1}(s_{h}, a_{h}) - 2( r_{h} + f'_{h +  1}(s_{h + 1}, \pi_{h + 1})))^{2} \times \\
      & \hspace{16em}(f_{h}(s_{h}, a_{h}) - \cT_{h, r}^{\pi}f'_{h + 1}(s_{h}, a_{h}))^{2} | s_{h}, a_{h}]\Big]\\
      &\leq \EE[16H^{2}R_{\max}^{2}(f_{h}(s_{h}, a_{h}) - \cT_{h, r}^{\pi}f'_{h + 1}(s_{h}, a_{h}))^{2}] = 16H^{2}R_{\max}^{2}\EE[g_{f, f'}^{\pi}(Z)],
    \end{split}
  \end{equation} where for the last inequality we noticed that $f_{h}(s_{h}, a_{h}) +\cT_{h, r}^{\pi}f'_{h + 1}(s_{h}, a_{h}) - 2( r_{h} + f'_{h +  1}(s_{h + 1}, \pi_{h + 1}))$ is bounded by $[-4HR_{\max}, 4HR_{\max}]$.

  Our ensuing proof largely follows the structure of Section 11.5 of~\citet{gyorfi2002distribution} and we reproduce the proof below for completeness. Let $\alpha, \beta > 0$ and $0 < \epsilon \leq \frac{1}{2}$ be arbitrary and fixed constants. We now proceed with the proof.

  {\noindent \bf Symmetrization by Ghost Sample.} Consider some $(f_{n}, f'_{n}, \pi_{n}) \in \cF \times \cF \times \Pi$ depending on $\{Z_{j}\}_{j = 1}^{K}$ such that
  \[
    \EE[g_{f_{n}, f'_{n}}^{\pi_{n}}(Z) | \{Z_{j}\}_{j = 1}^{K}] - \frac{1}{K}\sum_{j = 1}^{K}g_{f_{n}, f'_{n}}^{\pi_{n}}(Z_{j}) \geq \epsilon(\alpha + \beta + \EE[g_{f_{n}, f'_{n}}^{\pi_{n}}(Z) | \{Z_{j}\}_{\tau = 1}^{K}]),
  \] if such $(f_{n}, f'_{n}, \pi_{n})$ exists. If not, choose some arbitrary $(f_{n}, f'_{n}, \pi_{n})$. As a shorthand notation, let $g_{n} = g_{f_{n}, f'_{n}}^{\pi_{n}}$. Finally, introduce ghost samples $\{Z_{j}'\}_{j = 1}^{K} \sim \mu_{h}$, drawn i.i.d.~from the same distribution as $\{Z_{j}\}_{j= 1}^{K}$. Recalling that the variance of $g_{n}$ is bounded by $16\EE[g_{n}(Z)]$, by Chebyshev's inequality we have
  \begin{align*}
    &\Pr\biggl(\EE[g_{n}(Z) | \{Z_{j}\}_{j = 1}^{K}] - \frac{1}{K}\sum_{j = 1}^{K}g_{n}({Z_{j}'}) \geq \frac{\epsilon}{2}(\alpha + \beta) + \frac{\epsilon}{2}\EE[g_{n}(Z) | \{Z_{j}\}_{j = 1}^{K}] | \{Z_{j}\}_{j = 1}^{K}\biggr)\\
    &\qquad \leq  \frac{\Var(g_{n}(Z) | \{Z_j\}_{j = 1}^{K})}{K(\frac{\epsilon}{2}(\alpha + \beta) + \frac{\epsilon}{2}\EE[g_{n}(Z) | \{Z_j\}_{j = 1}^{K}] )^{2}}\\
    &\qquad \leq \frac{16H^{2}R_{\max}^{2}\EE[g_{n}(Z) | \{Z_j\}_{j = 1}^{K}] }{K(\frac{\epsilon}{2}(\alpha + \beta) + \frac{\epsilon}{2}\EE[g_{n}(Z) | \{Z_j\}_{j = 1}^{K}] )^{2}}\\
    & \qquad \leq \frac{16H^{2}R_{\max}^{2}}{\epsilon^{2}(\alpha + \beta)K},
  \end{align*} where the last inequality comes from the fact that $\frac{s_0}{(a + s_0)^{2}} \leq \frac{1}{4a}$ for all $s_0 \geq 0$ and $a > 0$. Thus, for all $K \geq \frac{128H^{2}R_{\max}^{2}}{\epsilon^{2}(\alpha + \beta)}$,
  \[
    \Pr\biggl(\EE[g_{n}(Z) | \{Z_j\}_{j = 1}^{K}] - \frac{1}{K}\sum_{j = 1}^{K}g_{n}({Z'}_{j}) \geq \frac{\epsilon}{2}(\alpha + \beta) + \frac{\epsilon}{2}\EE[g_{n}(Z) | \{Z_j\}_{j = 1}^{K}] | \{Z_{j}\}_{j = 1}^{K}\biggr) \leq \frac{7}{8}.
  \] We then know that
  \begin{align*}
    &\Pr\biggl(\exists f, f'\in \cF, \pi \in \Pi:
      \frac{1}{K}\sum_{j = 1}^{K}g_{f_{h}, f'_{h + 1}}^{\pi}(Z_i')- \frac{1}{K}\sum_{j = 1}^{K}g_{f_{h}, f'_{h + 1}}^{\pi}(Z_j) \geq \frac{\epsilon}{2}(\alpha + \beta) + \frac{\epsilon}{2}\EE[g_{f_{h}, f'_{h + 1}}^{\pi}(Z)]
      \biggr)\\
    &\ \ \geq \Pr\biggl(\frac{1}{K}\sum_{j = 1}^{K}g_{n}(Z_i')- \frac{1}{K}\sum_{j = 1}^{K}g_{n}(Z_j) \geq \frac{\epsilon}{2}(\alpha + \beta) + \frac{\epsilon}{2}\EE[g_{n}(Z) | \{Z_j\}_{j = 1}^{K}]\biggr)\\
    &\ \  \geq \Pr\biggl(\EE[g_{n}(Z) | \{Z_j\}_{j = 1}^{K}] - \frac{1}{K}\sum_{j = 1}^{K}g_{n}(Z_j) \geq \epsilon(\alpha + \beta) + \epsilon\EE[g_{n}(Z) | \{Z_j\}_{j = 1}^{K}]  \\
    &\hspace{8em}\EE[g_{n}(Z) | \{Z_j\}_{j = 1}^{K}] - \frac{1}{K}\sum_{j = 1}^{K}g_{n}(Z_i') \geq \epsilon(\alpha + \beta) + \epsilon\EE[g_{n}(Z) | \{Z_j\}_{j = 1}^{K}]\biggr)\\
    &\ \  = \EE\Biggl(\ind\biggl\{\EE[g_{n}(Z) | \{Z_j\}_{j = 1}^{K}] - \frac{1}{K}\sum_{j = 1}^{K}g_{n}(Z_j) \geq \epsilon(\alpha + \beta) + \epsilon\EE[g_{n}(Z) | \{Z_j\}_{j = 1}^{K}]\biggr\}\\
    &\hspace{8em}\Pr\biggl(\EE[g_{n}(Z) | \{Z_j\}_{j = 1}^{K}] - \frac{1}{K}\sum_{j = 1}^{K}g_{n}(Z_i') \geq \epsilon(\alpha + \beta) + \epsilon\EE[g_{n}(Z) | \{Z_j\}_{j = 1}^{K}]\biggr)\Biggr)\\
    &\ \  \geq \frac{7}{8}\Pr\biggl(\EE[g_{n}(Z) | \{Z_j\}_{j = 1}^{K}] - \frac{1}{K}\sum_{j = 1}^{K}g_{n}(Z_j) \geq \epsilon(\alpha + \beta) + \epsilon\EE[g_{n}(Z) | \{Z_j\}_{j = 1}^{K}]\biggr)\\
    &\ \  = \frac{7}{8}\Pr\biggl(\exists f, f' \in \cF, \pi \in \Pi: \EE[g_{f_{h}, f'_{h + 1}}^{\pi}(Z)] - \frac{1}{K}\sum_{j = 1}^{K}g_{f_{h}, f'_{h + 1}}^{\pi}(Z_j) \geq \epsilon(\alpha + \beta) + \epsilon\EE[g_{f_{h}, f'_{h + 1}}^{\pi}(Z)]\biggr).\\
  \end{align*}
  In other words, for $K \geq \frac{128H^{2}R_{\max}^{2}}{\epsilon^{2}(\alpha + \beta)}$,
  \begin{multline}\label{eqn:symmetrization}
    \Pr\biggl(\exists f, f' \in \cF, \pi \in \Pi: \EE[g_{f_{h}, f'_{h + 1}}^{\pi}(Z)] - \frac{1}{K}\sum_{j = 1}^{K}g_{f_{h}, f'_{h + 1}}^{\pi}(Z_j) \geq \epsilon(\alpha + \beta) + \epsilon\EE[g_{f_{h}, f'_{h + 1}}^{\pi}(Z)]\biggr) \\
    \leq \frac{8}{7} \Pr\biggl(\exists f, f'\in \cF, \pi \in \Pi: \frac{1}{K}\sum_{j = 1}^{K}g_{f_{h}, f'_{h + 1}}^{\pi}(Z_{j}')\\
    - \frac{1}{K}\sum_{j = 1}^{K}g_{f_{h}, f'_{h + 1}}^{\pi}(Z_j) \geq  \frac{\epsilon}{2}(\alpha + \beta) + \frac{\epsilon}{2}\EE[g_{f_{h}, f'_{h + 1}}^{\pi}(Z)]
    \biggr).
  \end{multline}
  
  {\noindent \bf Replacement of Expectation by Empirical Mean of Ghost Sample}
  We begin by noticing
  \begin{equation}\label{eqn:ghost_sample}
    \begin{split}
      &\Pr\biggl(\exists f, f'\in \cF, \pi \in \Pi:
      \frac{1}{K}\sum_{j = 1}^{K}g_{f_{h}, f'_{h + 1}}^{\pi}(Z_i')
      -\frac{1}{K}\sum_{j = 1}^{K}g_{f_{h}, f'_{h + 1}}^{\pi}(Z_j) \geq  \frac{\epsilon}{2}(\alpha + \beta) + \frac{\epsilon}{2}\EE[g_{f_{h}, f'_{h + 1}}^{\pi}(Z)]
      \biggr)\\
      &\qquad \leq \Pr\biggl(\exists f, f'\in \cF, \pi \in \Pi:\\
      &\hspace{5em}
      \frac{1}{K}\sum_{j = 1}^{K}g_{f_{h}, f'_{h + 1}}^{\pi}(Z_i')
      -\frac{1}{K}\sum_{j = 1}^{K}g_{f_{h}, f'_{h + 1}}^{\pi}(Z_j) \geq  \frac{\epsilon}{2}(\alpha + \beta) + \frac{\epsilon}{2}\EE[g_{f_{h}, f'_{h + 1}}^{\pi}(Z)],\\
      &\hspace{5em}  \frac{1}{K}\sum_{j = 1}^{K}(g^{\pi}_{f_{h}, f'_{h + 1}})^{2}(Z_i') - \EE[(g^{\pi}_{f_{h}, f'_{h + 1}})^{2}(Z)]
      \leq\\
      &\hspace{10em}\epsilon\Bigl(\alpha + \beta + \frac{1}{K}\sum_{j = 1}^{K}(g^{\pi}_{f_{h}, f'_{h + 1}})^{2}(Z_j) + \EE[(g^{\pi}_{f_{h}, f'_{h + 1}})^{2}(Z)]
      \Bigr) ,\\
      &\hspace{5em}\frac{1}{K}\sum_{j = 1}^{K}(g^{\pi}_{f_{h}, f'_{h + 1}})^{2}(Z_i') - \EE[(g^{\pi}_{f_{h}, f'_{h + 1}})^{2}(Z)]
      \leq \\
      &\hspace{15em}\epsilon\Bigl(\alpha + \beta + \frac{1}{K}\sum_{j = 1}^{K}(g^{\pi}_{f_{h}, f'_{h + 1}})^{2}(Z_i') + \EE[(g^{\pi}_{f_{h}, f'_{h + 1}})^{2}(Z)] \Bigr)\biggr)\\
      &\hspace{3em} + 2\Pr\rbr{\exists f, f'\in \cF, \pi \in \Pi: \frac{\frac{1}{K}\sum_{j = 1}^{K}(g^{\pi}_{f_{h}, f'_{h + 1}})^{2}(Z_j) - \EE[(g^{\pi}_{f_{h}, f'_{h + 1}})^{2}(Z)]}{\rbr{\alpha + \beta + \frac{1}{K}\sum_{j = 1}^{K}(g^{\pi}_{f_{h}, f'_{h + 1}})^{2}(Z_j) + \EE[(g^{\pi}_{f_{h}, f'_{h + 1}})^{2}(Z)] }}}.
    \end{split}
  \end{equation}
  Citing Theorem~\ref{thm:11_6_gyorfi}, we may bound the second probability term on the right hand side as
  \begin{align*}
    &\Pr\rbr{\exists f, f'\in \cF, \pi \in \Pi: \frac{\frac{1}{K}\sum_{j = 1}^{K}(g^{\pi}_{f_{h}, f'_{h + 1}})^{2}(Z_{j}) - \EE[(g^{\pi}_{f_{h}, f'_{h + 1}})^{2}(Z)]}{\rbr{\alpha + \beta + \frac{1}{K}\sum_{j = 1}^{K}(g^{\pi}_{f_{h}, f'_{h + 1}})^{2}(Z_{j}) + \EE[(g^{\pi}_{f_{h}, f'_{h + 1}})^{2}(Z)] }}}\\
    &\qquad \leq 4\cN_{\infty}\rbr{\frac{(\alpha + \beta)\epsilon}{5}, \{g_{f_{h}, f'_{h + 1}}^{\pi}: f, f' \in \cF, \pi \in \Pi\}}\exp\rbr{-\frac{3\epsilon^{2}(\alpha + \beta)K}{40(16H^{2}R_{\max}^{2})}}.
  \end{align*} 
  For the first probability term, notice that the second event in the conjunction implies
  \[
    (1 + \epsilon)\EE[(g^{\pi}_{f_{h}, f'_{h + 1}})^{2}(Z)] \geq (1 - \epsilon)\frac{1}{K}\sum_{j = 1}^{K}(g^{\pi}_{f_{h}, f'_{h + 1}})^{2}(Z_j) - \epsilon(\alpha + \beta),
  \] 
  which is equivalent to
  \[
    \frac{1}{32H^{2}R_{\max}^{2}}\EE[(g^{\pi}_{f_{h}, f'_{h + 1}})^{2}(Z)] \geq \frac{1 - \epsilon}{32H^{2}R_{\max}^{2}(1 + \epsilon)}\frac{1}{K}\sum_{j = 1}^{K}(g^{\pi}_{f_{h}, f'_{h + 1}})^{2}(Z_j) - \epsilon \frac{(\alpha + \beta)}{32H^{2}R_{\max}^{2}(1 + \epsilon)}.
  \] 
  A similar bound may be obtained for the term involving $Z_i'$. Noticing that by equation~\eqref{eqn:g_var_bound}, we have $\EE[g^{\pi}_{f_{h}, f'_{h + 1}}(Z)] \geq \frac{1}{16H^{2}R_{\max}^{2}}\EE[(g^{\pi}_{f_{h}, f'_{h + 1}})^{2}(Z)]$, and we know the first probability term in~\eqref{eqn:ghost_sample} can be bounded by
  \begin{align*}\label{eqn:ghost_sample_first_term}
    &\Pr\biggl(\exists f, f'\in \cF, \pi \in \Pi:\frac{1}{K}\sum_{j = 1}^{K}g_{f_{h}, f'_{h + 1}}^{\pi}(Z_i')- \frac{1}{K}\sum_{j = 1}^{K}g_{f_{h}, f'_{h + 1}}^{\pi}(Z_j) \geq\frac{\epsilon}{2}(\alpha + \beta) +\\
    &\hspace{6em}\frac{\epsilon}{2}\Bigl(\frac{1 - \epsilon}{32H^{2}R_{\max}^{2}(1 + \epsilon)}\frac{1}{K}\sum_{j = 1}^{K}(g^{\pi}_{f_{h}, f'_{h + 1}})^{2}(Z_j) - \frac{\epsilon(\alpha + \beta)}{32H^{2}R_{\max}^{2}} +\\
    &\hspace{12em}\frac{1 - \epsilon}{32H^{2}R_{\max}^{2}(1 + \epsilon)}\frac{1}{K}\sum_{j = 1}^{K}(g^{\pi}_{f_{h}, f'_{h + 1}})^{2}(Z_j) - \frac{\epsilon(\alpha + \beta)}{32H^{2}R_{\max}^{2}}\Bigr)\biggr)\\
    & \qquad = \Pr\biggl(\exists f, f'\in \cF, \pi \in \Pi:\frac{1}{K}\sum_{j = 1}^{K}g_{f_{h}, f'_{h + 1}}^{\pi}(Z_i')- \frac{1}{K}\sum_{j = 1}^{K}g_{f_{h}, f'_{h + 1}}^{\pi}(Z_j) \geq\frac{\epsilon}{2}(\alpha + \beta) -\\
    &\hspace{6em}\frac{\epsilon^{2}(\alpha + \beta)}{32H^{2}R_{\max}^{2}(1 + \epsilon)} + \frac{\epsilon(1 - \epsilon)}{64H^{2}R_{\max}^{2}(1 + \epsilon)}\rbr{\frac{1}{K}\sum_{j = 1}^{K}((g^{\pi}_{f_{h}, f'_{h + 1}})^{2}(Z'_j) +(g^{\pi}_{f_{h}, f'_{h + 1}})^{2}(Z_j))}\biggr).
  \end{align*}
  
  {\noindent \bf Additional Randomization by Random Signs} 
  Let $\{U_j\}_{j = 1}^{K}$ be i.i.d.~Rademacher random variables drawn independently from $\{Z_j\}_{j = 1}^{K}$ and $\{Z'_j\}_{j = 1}^{K}$. Because $\{Z_j\}_{j = 1}^{K}$ and $\{Z'_j\}_{j=1}^{K}$ are i.i.d., we know that
  \begin{equation}\label{eqn:random_signs}
    \begin{split}
      &\Pr\biggl(\exists f, f'\in \cF, \pi \in \Pi:\frac{1}{K}\sum_{j = 1}^{K}g_{f_{h}, f'_{h + 1}}^{\pi}(Z'_{j})- \frac{1}{K}\sum_{j = 1}^{K}g_{f_{h}, f'_{h + 1}}^{\pi}(Z_{j}) \geq\frac{\epsilon}{2}(\alpha + \beta) -\\
      &\hspace{6em}\frac{\epsilon^{2}(\alpha + \beta)}{32H^{2}R_{\max}^{2}(1 + \epsilon)} + \frac{\epsilon(1 - \epsilon)}{64H^{2}R_{\max}^{2}(1 + \epsilon)}\rbr{\frac{1}{K}\sum_{j = 1}^{K}((g^{\pi}_{f_{h}, f'_{h + 1}})^{2}(Z_i') +(g^{\pi}_{f_{h}, f'_{h + 1}})^{2}(Z_j))}\biggr)\\
      &\qquad = \Pr\biggl(\exists f, f'\in \cF, \pi \in \Pi:\frac{1}{K}\sum_{j = 1}^{K}U_{j}\bigl(g_{f_{h}, f'_{h + 1}}^{\pi}(Z'_{j})-g_{f_{h}, f'_{h + 1}}^{\pi}(Z_j) \bigr) \geq\frac{\epsilon}{2}(\alpha + \beta) -\\
      &\hspace{6em}\frac{\epsilon^{2}(\alpha + \beta)}{32H^{2}R_{\max}^{2}(1 + \epsilon)} + \frac{\epsilon(1 - \epsilon)}{64H^{2}R_{\max}^{2}(1 + \epsilon)}\rbr{\frac{1}{K}\sum_{j = 1}^{K}((g^{\pi}_{f_{h}, f'_{h + 1}})^{2}(Z_i') +(g^{\pi}_{f_{h}, f'_{h + 1}})^{2}(Z_j))}\biggr)\\
      &\qquad \leq 2  \Pr\biggl(\exists f, f'\in \cF, \pi \in \Pi:\frac{1}{K}\sum_{j = 1}^{K}\abr{U_{j}g_{f_{h}, f'_{h + 1}}^{\pi}(Z_j)}\geq\frac{\epsilon}{4}(\alpha + \beta) -\\
      &\hspace{6em}\frac{\epsilon^{2}(\alpha + \beta)}{64H^{2}R_{\max}^{2}(1 + \epsilon)} + \frac{\epsilon(1 - \epsilon)}{64H^{2}R_{\max}^{2}(1 + \epsilon)}\frac{1}{K}\sum_{j = 1}^{K}((g^{\pi}_{f_{h}, f'_{h + 1}})^{2}(Z_j))\biggr).
    \end{split}
  \end{equation}
  
  {\noindent \bf Conditioning and Covering} We then condition the probability on $\{Z_{j}\}_{j = 1}^{K}$. Fix some $z_{1}, \ldots, z_{K}$ and we consider instead
  \begin{align*}
    &\Pr\Biggl\{\exists f, f'\in\cF, \pi\in\Pi: \abr{\frac{1}{K}\sum_{j = 1}^{K}U_{j}g_{f_{h}, f'_{h + 1}}^{\pi}(z_{j})}\geq\\
    &\hspace{8em}\frac{\epsilon(\alpha + \beta)}{4} - \frac{\epsilon^{2}(\alpha + \beta)}{64H^{2}R_{\max}^{2}(1 + \epsilon)} + \frac{\epsilon(1 -\epsilon)}{64H^{2}R_{\max}^{2}(1 + \epsilon)}\frac{1}{K}\sum_{j = 1}^{K}(g_{f_{h}, f'_{h + 1}}^{\pi})^{2}(z_{j}) \Biggr\}.
  \end{align*} Let $\delta > 0$ and let $\cG_{\delta}$ be an $\ell_{\infty}$ $\delta$-cover of $\cG_{\cF, \Pi} = \{g_{f_{h}, f'_{h + 1}}^{\pi} : f, f'\in F, \pi\in\Pi\}$. Fix some $(f, f', \pi)\in \cF \times \cF \times \Pi$ and there exists some $g \in \cG_{\delta}$ such that $\sup_{z}|g(z) - g_{f_{h}, f'_{h + 1}}^{\pi}(z)| < \delta$. We then know that
  \begin{align*}
    \abr{\frac{1}{K}\sum_{j = 1}^{K}U_{j}g_{f_{h}, f'_{h + 1}}^{\pi}(z_{j})} \leq \abr{\frac{1}{K}\sum_{j = 1}^{K}U_{j}g(z_{j})} + \frac{1}{K}\sum_{j = 1}^{K}\abr{g_{f_{h}, f'_{h + 1}}^{\pi}(z_{j}) - g(z_{j})} \leq \abr{\frac{1}{K}\sum_{j = 1}^{K}U_{j}g(z_{j})} + \delta
  \end{align*} and
  \begin{align*}
    \frac{1}{K}\sum_{j = 1}^{K}(g_{f_{h}, f'_{h + 1}}^{\pi})^{2}(z_{j}) &= \frac{1}{K}\sum_{j = 1}^{K}g^{2}(z_{j}) + \frac{1}{K}\sum_{j = 1}^{K}((g_{f_{h}, f'_{h + 1}}^{\pi})^{2}(z_{j}) - g^{2}(z_{j}))\\
    &=  \frac{1}{K}\sum_{j = 1}^{K}g^{2}(z_{j}) + \frac{1}{K}\sum_{j = 1}^{K}(g_{f_{h}, f'_{h + 1}}^{\pi}(z_{j}) - g(z_{j}))(g_{f_{h}, f'_{h + 1}}^{\pi}(z_{j}) + g(z_{j}))\\
    &\geq \frac{1}{K}\sum_{j = 1}^{K}g^{2}(z_{j}) - 8H^{2}R_{\max}^{2}\frac{1}{K}\sum_{j = 1}^{K}|g_{f_{h}, f'_{h + 1}}^{\pi}(z_{j}) - g(z_{j})|\\
    &\geq \frac{1}{K}\sum_{j = 1}^{K}g^{2}(z_{j}) - 8H^{2}R_{\max}^{2}\delta.\\
  \end{align*} Set $\delta = \frac{\beta\epsilon}{enumerate5}$. Notice that as $HR_{\max} \geq 1$, $0 < \epsilon \leq \frac{1}{2}$, we have
  \begin{align*}
    \frac{\epsilon\beta}{4} - \frac{\epsilon^{2}\beta}{64H^{2}R_{\max}^{2}(1 + \epsilon)} - \delta - \delta\frac{\epsilon(1 - \epsilon)}{8(1 + \epsilon)} = \frac{\epsilon\beta}{2} - \frac{\epsilon^{2}\beta}{64H^{2}R_{\max}^{2}(1 + \epsilon)}  - \frac{\epsilon^{2}(1 - \epsilon)\beta}{40(1 + \epsilon)} \geq 0.
  \end{align*}
  Therefore we have
  \begin{align}
    &\Pr\Biggl\{\exists f, f'\in\cF, \pi\in\Pi: \abr{\frac{1}{K}\sum_{j = 1}^{K}U_{j}g_{f_{h}, f'_{h + 1}}^{\pi}(z_{j})}\geq \nonumber\\
    &\hspace{8em}\frac{\epsilon(\alpha + \beta)}{4} - \frac{\epsilon^{2}(\alpha + \beta)}{64H^{2}R_{\max}^{2}(1 + \epsilon)} + \frac{\epsilon(1 -\epsilon)}{64H^{2}R_{\max}^{2}(1 + \epsilon)}\frac{1}{K}\sum_{j = 1}^{K}(g_{f_{h}, f'_{h + 1}}^{\pi})^{2}(z_{j}) \Biggr\}\nonumber\\
    & \quad \leq |\cG_{{\epsilon\beta}/{5}}|\max_{g \in \cG_{{\epsilon\beta}/{5}}}\Pr\Biggl\{\abr{\frac{1}{K}\sum_{j = 1}^{K}U_{j}g(z_{j})}\geq \frac{\epsilon\alpha}{4} -\frac{\epsilon^{2}\alpha}{64H^{2}R_{\max}^{2}(1 + \epsilon)} + \nonumber\\
    &\hspace{18em}\frac{\epsilon(1 -\epsilon)}{64H^{2}R_{\max}^{2}(1 + \epsilon)}\frac{1}{K}\sum_{j = 1}^{K}g^{2}(z_{j}) \Biggr\}\label{eqn:covering_bound_intermediate}.
  \end{align}
  We then apply Bernstein's inequality to bound
  \[
    \Pr\Biggl\{\abr{\frac{1}{K}\sum_{j = 1}^{K}U_{j}g(z_{j})}\geq \frac{\epsilon\alpha}{4} -\frac{\epsilon^{2}\alpha}{64H^{2}R_{\max}^{2}(1 + \epsilon)} + \frac{\epsilon(1 -\epsilon)}{64H^{2}R_{\max}^{2}(1 + \epsilon)}\frac{1}{K}\sum_{j = 1}^{K}g^{2}(z_{j}) \Biggr\}
  \] for any $g \in \cG_{\epsilon\beta/5}$. We begin by relating the variance of $U_{j}g(z_{j})$ with $\frac{1}{K}\sum_{j = 1}^{k}g^{2}(z_{j})$. Notice that as $U_{j}$ is i.i.d. Rademacher,
  \[
    \frac{1}{K}\sum_{j = 1}^{K}\Var(U_{j}g(z_{j})) = \frac{1}{K}\sum_{j = 1}^{k}g^{2}(z_{j}) \Var(U_{i}) =\frac{1}{K}\sum_{j = 1}^{k}g^{2}(z_{j}).
  \] Perform a simple change of variable and let $V_{j} = g(z_{j})U_{j}$. As $g(z_{j}) \in [-4H^{2}R_{\max}^{2},4H^{2}R_{\max}^{2}]$ for all $z_{j}$, we know $|V_{j}| \leq 4H^{2}R_{\max}^{2}$. For convenience, further let $A_{1} = \frac{\epsilon\alpha}{4} - \frac{\epsilon^{2}\alpha}{64H^{2}R_{\max}^{2}(1 + \epsilon)}, A_{2} = \frac{\epsilon(1 - \epsilon)}{64H^{2}R_{\max}^{2}(1 + \epsilon)},$ and $\sigma^{2} = \frac{1}{K}\sum_{j = 1}^{K}\Var(U_{j}g(z_{j})) = \frac{1}{K}\sum_{j = 1}^{k}g^{2}(z_{j})$. We then have for any $g \in \cG_{\epsilon\beta/5}$
  \begin{align*}
    &\Pr\Biggl\{\abr{\frac{1}{K}\sum_{j = 1}^{K}U_{j}g(z_{j})}\geq \frac{\epsilon\alpha}{4} -\frac{\epsilon^{2}\alpha}{64H^{2}R_{\max}^{2}(1 + \epsilon)} + \frac{\epsilon(1 -\epsilon)}{64H^{2}R_{\max}^{2}(1 + \epsilon)}\frac{1}{K}\sum_{j = 1}^{K}g^{2}(z_{j}) \Biggr\}\\
    &\qquad = \Pr\rbr{\abr{\frac{1}{K}\sum_{j = 1}^{k}V_{j}} \ge A_{1} + A_{2}\sigma^{2}}\\
    &\qquad \leq 2\exp\rbr{-\frac{K(A_{1} + A_{2}\sigma^{2})^{2}}{2\sigma^{2} + 2(A_{1} + A_{2}\sigma^{2})\frac{8H^{2}R^{2}}{3}}}\\
    &\qquad = 2\exp\rbr{-\frac{3KA_{2}}{16H^{2}R_{\max}^{2}}\frac{\rbr{\frac{A_{1}}{A_{2}} + \sigma^{2}}^{2}}{\frac{A_{1}}{A_{2}} + \rbr{1 + \frac{3}{8H^{2}R_{\max}^{2}A_{2}}}\sigma^{2}}}\\
    &\qquad \leq 2\exp\rbr{-\frac{\epsilon^{2}(1 - \epsilon)\alpha K}{140H^{2}R_{\max}^{2}(1 + \epsilon)}},
  \end{align*} where the last inequality follows a series of manipulations discussed in greater detail in page 218 of~\citet{gyorfi2002distribution} that we omit here for brevity. Plugging the result back into equations~\eqref{eqn:random_signs} and~\eqref{eqn:covering_bound_intermediate} gives us
  \begin{align*}
    &\Pr\biggl(\exists f, f'\in \cF, \pi \in \Pi:\frac{1}{K}\sum_{j = 1}^{K}g_{f_{h}, f'_{h + 1}}^{\pi}(Z'_{j})- \frac{1}{K}\sum_{j = 1}^{K}g_{f_{h}, f'_{h + 1}}^{\pi}(Z_{j}) \geq\frac{\epsilon}{2}(\alpha + \beta) -\\
    &\hspace{6em}\frac{\epsilon^{2}(\alpha + \beta)}{32H^{2}R_{\max}^{2}(1 + \epsilon)} + \frac{\epsilon(1 - \epsilon)}{64H^{2}R_{\max}^{2}(1 + \epsilon)}\rbr{\frac{1}{K}\sum_{j = 1}^{K}((g^{\pi}_{f_{h}, f'_{h + 1}})^{2}(Z_i') +(g^{\pi}_{f_{h}, f'_{h + 1}})^{2}(Z_j))}\biggr)\\
    &\qquad \leq 2\cN_{\infty}\rbr{\frac{\epsilon\beta}{5}, \{g_{f_{h}, f'_{h + 1}}^{\pi} : f, f'\in F, \pi\in\Pi\}}\exp\rbr{-\frac{\epsilon^{2}(1 - \epsilon)\alpha K}{140H^{2}R_{\max}^{2}(1 + \epsilon)}} .
  \end{align*} Recalling equations~\eqref{eqn:ghost_sample} and~\eqref{eqn:random_signs}, we have
  \begin{align*}
    &\Pr\biggl(\exists f, f'\in \cF, \pi \in \Pi:
      \frac{1}{K}\sum_{j = 1}^{K}g_{f_{h}, f'_{h + 1}}^{\pi}(Z_i')
      -\frac{1}{K}\sum_{j = 1}^{K}g_{f_{h}, f'_{h + 1}}^{\pi}(Z_j) \geq  \frac{\epsilon}{2}(\alpha + \beta) + \frac{\epsilon}{2}\EE[g_{f_{h}, f'_{h + 1}}^{\pi}(Z)]
      \biggr)\\
    &\qquad \leq 4\cN_{\infty}\rbr{\frac{\epsilon\beta}{5}, \{g_{f_{h}, f'_{h + 1}}^{\pi} : f, f'\in F, \pi\in\Pi\}}\exp\rbr{-\frac{\epsilon^{2}(1 - \epsilon)\alpha K}{140H^{2}R_{\max}^{2}(1 + \epsilon)}}\\
    &\hspace{3em}+ 8\cN_{\infty}\rbr{\frac{(\alpha + \beta)\epsilon}{5}, \{g_{f_{h}, f'_{h + 1}}^{\pi}: f, f' \in \cF, \pi \in \Pi\}}\exp\rbr{-\frac{3\epsilon^{2}(\alpha + \beta)K}{640H^{2}R_{\max}^{2}}}.
  \end{align*} Plugging the result back into equation~\eqref{eqn:symmetrization} and we finally know for $K \geq \frac{128H^{2}R_{\max}^{2}}{\epsilon^2(\alpha + \beta)}$,
  \begin{align*}
    &\Pr\biggl(\exists f, f' \in \cF, \pi \in \Pi: \EE[g_{f_{h}, f'_{h + 1}}^{\pi}(Z)] - \frac{1}{K}\sum_{j = 1}^{K}g_{f_{h}, f'_{h + 1}}^{\pi}(Z_j) \geq \epsilon(\alpha + \beta) + \epsilon\EE[g_{f_{h}, f'_{h + 1}}^{\pi}(Z)]\biggr) \\
    &\qquad \leq \frac{32}{7}\cN_{\infty}\rbr{\frac{\epsilon\beta}{5}, \{g_{f_{h}, f'_{h + 1}}^{\pi} : f, f'\in F, \pi\in\Pi\}}\exp\rbr{-\frac{\epsilon^{2}(1 - \epsilon)\alpha K}{140H^{2}R_{\max}^{2}(1 + \epsilon)}}\\
    &\hspace{3em} +\frac{64}{7}\cN_{\infty}\rbr{\frac{(\alpha + \beta)\epsilon}{5}, \{g_{f_{h}, f'_{h + 1}}^{\pi}: f, f' \in \cF, \pi \in \Pi\}}\exp\rbr{-\frac{3\epsilon^{2}(\alpha + \beta)K}{640H^{2}R_{\max}^{2}}}\\
    &\qquad \leq 14\cN_{\infty}\rbr{\frac{\epsilon\beta}{5}, \{g_{f_{h}, f'_{h + 1}}^{\pi} : f, f'\in F, \pi\in\Pi\}}\exp\rbr{-\frac{\epsilon^{2}(1 - \epsilon)\alpha K}{214(1 + \epsilon)H^{4}R_{\max}^{4}}}.
  \end{align*} 
  When $K < \frac{128H^{2}R_{\max}^{2}}{\epsilon^2(\alpha + \beta)}$, $\exp\rbr{-\frac{\epsilon^{2}(1 - \epsilon)\alpha K}{214(1 + \epsilon)H^{4}R_{\max}^{4}}} \geq \exp\rbr{-\frac{128}{214}} \geq \frac{1}{14}$ and the claim trivially holds.
  {\noindent \bf Bounding the Covering Number.} Our final task is bounding $\cN_{\infty}\rbr{\frac{\epsilon\beta}{5}, \{g_{f_{h}, f'_{h + 1}}^{\pi} : f, f'\in F, \pi\in\Pi\}}$ using the covering numbers of $\Pi$ and $\cF$. Let $\cF_{0}$ be a $\frac{\epsilon\beta}{140HR_{\max}}$-covering of $\mathcal{F}$ with respect to $\ell_{\infty}$ and $\Pi_{0}$ a $\frac{\epsilon\beta}{140H^{2}R^{2}_{\mathrm{max}}}$-covering of $\Pi$ with respect to $\|\cdot\|_{\infty, 1}$. We then know that for any $f, f' \in \mathcal{F}, \pi \in \Pi$, there exits some $f^{\dagger}, f^{\ddagger} \in \cF_{0}, \pi^{\dagger} \in \Pi_{0}$ such that
  \begin{align*}
    &\sup_{(s, a) \in \cS \times \cA} |f_{h}(s, a) - f_{h}^{\dagger}(s, a)| \leq \frac{\epsilon\beta}{140HR_{\max}},\\
    &\sup_{(s, a) \in \cS \times \cA} |f'_{h + 1}(s, a) - f_{h + 1}^{\ddagger}(s, a)| \leq \frac{\epsilon\beta}{140HR_{\max}},\\
    &\sup_{s \in\cS}\int_{a \in \cA} |\pi_{h + 1}(a | s) - \pi_{h + 1}^{\dagger}(a | s)| \leq \frac{\epsilon\beta}{140H^{2}R^{2}_{\mathrm{max}}}.
  \end{align*}
  Consider any arbitrary $z = (s, a, r, s') \sim \mu_{h}$. We know that
  \begin{align}\label{eqn:intermediate_bound_g}
    &\abr{g_{f_{h}, f'_{h + 1}}^{\pi_{h + 1}}(z)- g_{f^{\dagger}_{h}, f^{\ddagger}_{h + 1}}^{\pi^{\dagger}_{h + 1}}(z)}\nonumber\\
    &\qquad = \biggl|(f_{h}(s, a) - r - f'_{h + 1}(s', \pi_{h + 1}))^{2} - (\cT_{h, r}^{\pi_{h + 1}}f'_{h + 1}(s, a) - r - f'_{h + 1}(s', \pi_{h + 1}))^{2} - \nonumber\\
    &\hspace{7em}(f_{h}^{\dagger}(s, a) - r - f^{\ddagger}_{h + 1}(s', \pi^{\dagger}_{h + 1}))^{2} + (\cT_{h, r}^{\pi^{\dagger}_{h + 1}}f^{\ddagger}_{h + 1}(s, a) - r - f^{\ddagger}_{h + 1}(s', \pi^{\dagger}_{h + 1}))^{2}\biggr|\nonumber\\
    &\qquad \leq \biggl|(f_{h}(s, a) - r - f'_{h + 1}(s', \pi_{h + 1}))^{2} -(f_{h}^{\dagger}(s, a) - r - f^{\ddagger}_{h + 1}(s', \pi^{\dagger}_{h + 1}))^{2}\biggr| \nonumber\\
    &\hspace{3em} + \biggl|(\cT_{h, r}^{\pi_{h + 1}}f'_{h + 1}(s, a) - r - f'_{h + 1}(s', \pi_{h + 1}))^{2} - (\cT_{h, r}^{\pi^{\dagger}_{h + 1}}f^{\ddagger}_{h + 1}(s, a) - r - f^{\ddagger}_{h + 1}(s', \pi^{\dagger}_{h + 1}))^{2}\biggr|\nonumber\\
    &\qquad\leq\biggl|f_{h}(s, a) + f_{h}^{\dagger}(s, a) - 2r - f'_{h + 1}(s', \pi_{h + 1}) - f^{\ddagger}_{h + 1}(s', \pi^{\dagger}_{h + 1})\biggr| \nonumber\\
    &\hspace{3em}\times\biggl|f_{h}(s, a) - f_{h}^{\dagger}(s, a)  + f'_{h + 1}(s', \pi_{h + 1}) - f^{\ddagger}_{h + 1}(s', \pi^{\dagger}_{h + 1})\biggr| \nonumber\\
    &\hspace{3em}+\biggl|\cT_{h, r}^{\pi_{h + 1}}f'_{h + 1}(s, a) + \cT_{h, r}^{\pi^{\dagger}_{h + 1}}f^{\ddagger}_{h + 1}(s, a) - 2r - f'_{h + 1}(s', \pi_{h + 1})-f^{\ddagger}_{h + 1}(s', \pi^{\dagger}_{h + 1})\biggr| \nonumber\\
    &\hspace{3em}\times\biggl|\cT_{h, r}^{\pi_{h + 1}}f'_{h + 1}(s, a) -\cT_{h, r}^{\pi^{\dagger}_{h + 1}}f^{\ddagger}_{h + 1}(s, a) + f'_{h + 1}(s', \pi_{h + 1}) - f^{\ddagger}_{h + 1}(s', \pi^{\dagger}_{h + 1})\biggr|\nonumber\\
    &\qquad \leq 4HR_{\max}\biggl|f_{h}(s, a) - f_{h}^{\dagger}(s, a)  + f'_{h + 1}(s', \pi_{h + 1}) - f^{\ddagger}_{h + 1}(s', \pi^{\dagger}_{h + 1})\biggr|\nonumber\\
    &\hspace{3em}+4HR_{\max}\biggl|\cT_{h, r}^{\pi_{h + 1}}f'_{h + 1}(s, a) -\cT_{h, r}^{\pi^{\dagger}_{h + 1}}f^{\ddagger}_{h + 1}(s, a) + f'_{h + 1}(s', \pi_{h + 1}) - f^{\ddagger}_{h + 1}(s', \pi^{\dagger}_{h + 1})\biggr|,
  \end{align} where for the last inequality we used the boundedness of functions in $\cF_{h}$ and $\cF_{h + 1}$. We then  notice that
  \begin{align*}
    &\biggl|f_{h}(s, a) - f_{h}^{\dagger}(s, a)  + f'_{h + 1}(s', \pi_{h + 1}) - f^{\ddagger}_{h + 1}(s', \pi^{\dagger}_{h + 1})\biggr|\\
    &\qquad \leq |f_{h}(s, a) - f_{h}^{\dagger}(s, a)| + |f'_{h + 1}(s', \pi_{h + 1}) - f^{\ddagger}_{h + 1}(s', \pi^{\dagger}_{h + 1})|\\
    &\qquad \leq \frac{\epsilon\beta}{140HR_{\max}} + |f'_{h + 1}(s', \pi_{h + 1}) - f'_{h + 1}(s', \pi^{\dagger}_{h + 1})| + |f'_{h + 1}(s', \pi^{\dagger}_{h + 1}) - f^{\ddagger}_{h + 1}(s', \pi^{\dagger}_{h + 1})|\\
    &\qquad \leq \frac{\epsilon\beta}{140HR_{\max}} + \|\pi_{h + 1} - \pi^{\dagger}_{h + 1}\|_{1}\|f'_{h + 1}\|_{\infty} + |f'_{h + 1}(s', \pi^{\dagger}_{h + 1}) - f^{\ddagger}_{h + 1}(s', \pi^{\dagger}_{h + 1})|\\
    &\qquad \leq \frac{\epsilon\beta}{140HR_{\max}} + \frac{\epsilon\beta}{140H^{2}R^{2}_{\mathrm{max}}}HR_{\max} + |f'_{h + 1}(s', \pi^{\dagger}_{h + 1}) - f^{\ddagger}_{h + 1}(s', \pi^{\dagger}_{h + 1})|\\
    &\qquad \leq \frac{\epsilon\beta}{140HR_{\max}} + \frac{\epsilon\beta}{140HR_{\max}} + \EE_{a' \sim \pi^{\dagger}_{h + 1}(\cdot | s')}[|f'_{h + 1}(s', a') - f^{\ddagger}_{h + 1}(s', a')|]\\
    &\qquad \leq \frac{3\epsilon\beta}{140HR_{\max}},
  \end{align*}
  where the third inequality uses Holder's inequality, the fourth definition of $\Pi_{0}$ and boundedness of $\cF_{h}$, the fifth Jensen's inequality, and the last inequality the definition of $\cF_{0}$. Additionally we have
  \begin{align*}
    &|\cT_{h, r}^{\pi_{h + 1}}f'_{h + 1}(s, a) -\cT_{h, r}^{\pi^{\dagger}_{h + 1}}f^{\ddagger}_{h + 1}(s, a) + f'_{h + 1}(s', \pi_{h + 1}) - f^{\ddagger}_{h + 1}(s', \pi^{\dagger}_{h + 1})|\\
    &\qquad\leq |\cT_{h, r}^{\pi_{h + 1}}f'_{h + 1}(s, a) -\cT_{h, r}^{\pi^{\dagger}_{h + 1}}f_{h + 1}^{\ddagger}(s, a)| + |f'_{h + 1}(s', \pi_{h + 1}) - f^{\ddagger}_{h + 1}(s', \pi^{\dagger}_{h + 1})|\\
    &\qquad\leq |\cT_{h, r}^{\pi_{h + 1}}f'_{h + 1}(s, a) -\cT_{h, r}^{\pi^{\dagger}_{h + 1}}f_{h + 1}^{\ddagger}(s, a)| + \frac{2\epsilon\beta}{140HR_{\max}}\\
    &\qquad\leq \EE_{s'' \sim \cP_{h}(\cdot | s, a)}|f'_{h + 1}(s', \pi_{h + 1}) - f^{\ddagger}_{h + 1}(s', \pi^{\dagger}_{h + 1})| + \frac{2\epsilon\beta}{140HR_{\max}}\\
    &\qquad\leq \frac{4\epsilon\beta}{140HR_{\max}},
  \end{align*} 
  where the second inequality uses the same reasoning as above to bound $|f'_{h + 1}(s', \pi_{h + 1}) - f^{\ddagger}_{h + 1}(s', \pi^{\dagger}_{h + 1})|$, the third Jensen's inequality, and the last inequality reuses the bound for $|f'_{h + 1}(s', \pi_{h + 1}) - f^{\ddagger}_{h + 1}(s', \pi^{\dagger}_{h + 1})|$ over arbitrary $s'$. Plugging these back into equation \eqref{eqn:intermediate_bound_g} shows
  \[
  \abr{g_{f_{h}, f'_{h + 1}}^{\pi_{h + 1}}(z)- g_{f^{\dagger}_{h}, f^{\ddagger}_{h + 1}}^{\pi^{\dagger}_{h + 1}}(z)} \le \frac{7\epsilon\beta}{140HR_{\max}} \times 4HR_{\max} = \frac{\epsilon\beta}{5}.
  \]
  Thus
  \begin{equation*}
    \cN_{\infty}\rbr{\frac{\epsilon\beta}{5}, \{g_{f_{h}, f'_{h + 1}}^{\pi} : f, f'\in F, \pi\in\Pi\}} \leq \rbr{\cN_{\infty}\rbr{\frac{\epsilon\beta}{140HR_{\max}}, \cF}}^{2}\cN_{\infty, 1}\rbr{\frac{\epsilon\beta}{140H^{2}R^{2}_{\mathrm{max}}}, \Pi},
  \end{equation*} showing one side of the inequality holds.

  To show the other side holds, simply replace $g_{f, f'}^{\pi}(Z)$ defined in equation~\ref{eqn:defn_g} with its negative and repeat the analysis above. We then complete the proof by taking a union bound over both halves.

\subsection{Proofs of ``Good Event"}\label{subsec:proofs_of_good_event}
With the help of the previous theorem, we are able to show that $\cG(\Pi_{\rm SPI})$ occurs with high probability.

\begin{proof}[Proof of Lemma~\ref{corollary:good_event}]
  Taking a union bound over all $h \in [H]$ and reported reward $r \in \tilde{\cR}$ recalling that $|\tilde{\cR}| \leq n + 1 \leq 2n$, by Lemma~\ref{lemma:gyorfi_variant_bellman_error_concentration}, we have
  \begin{align*}
    &\Pr\Bigl(\exists h \in [H], r \in \tilde{\cR}, f, f' \in \cF, \pi \in \Pi:\\
    &\hspace{6em}\abr{ \EE_{\mu_h}\sbr{\|f_{h} - \cT_{h, r}^{\pi}f'_{h + 1}\|^2} - \cL_{h, r}(f_{h}, f'_{h + 1}, \pi; \cD) + \cL_{h, r}(\cT_{h, r}^{\pi}f'_{h + 1}, f'_{h + 1}, \pi; \cD)}\\
    &\hspace{24em} \geq \epsilon\rbr{\alpha + \beta + \EE_{\mu_h}\sbr{\|f_{h} - \cT_{h, r}^{\pi}f'_{h + 1}\|^2}}\Bigr)\\
    &\qquad \leq 56nH\rbr{\cN_{\infty}\rbr{\frac{\epsilon\beta}{140HR_{\max}}, \cF}}^{2}\cN_{\infty, 1}\rbr{\frac{\epsilon\beta}{140H^{2}R^{2}_{\mathrm{max}}}, \Pi}\exp\rbr{-\frac{\epsilon^{2}(1 - \epsilon)\alpha K}{214(1 + \epsilon)H^{4}R_{\max}^{4}}}.
  \end{align*}
  Letting $\alpha = \beta$ and $\epsilon = \frac{1}{2}$, setting the right hand side to $\delta$, and solving for \(\alpha\) gives us
  \[
    \alpha \leq \frac{1}{K}\max\cbr{5136H^{4}R^{4}_{\max}, 5136H^{4}R^{4}_{\max}\log \frac{56nH\cN_{\infty}\rbr{\frac{HR_{\max}}{K}, \cF}\cN_{\infty, 1}\rbr{\frac{1}{K}, \Pi}}{\delta}}.
  \] As $\log 56 \geq 1$, $n, H \geq 1$, and $0 < 1 < \delta$, the second term always dominates the first and we can simplify the inequality as
  \[
    \alpha\leq \frac{5136H^{4}R^{4}_{\max}}{K}\log \frac{56nH\cN_{\infty}\rbr{\frac{19H^{3}R^{3}_{\max}}{K}, \cF}\cN_{\infty, 1}\rbr{\frac{19H^{4}R^{4}_{\max}}{K}, \Pi}}{\delta},
  \] completing the proof.
\end{proof}

\begin{proof}[Proof of Corollary~\ref{lemma:q_star_small_cE}]
  For convenience, let $\hat{g}^{\pi}_{h, r} = \argmin_{g \in \cF_{h}}\cL_{h, r}(g, f_{h + 1, r}^{\pi, *}, \pi; \cD)$. We then know that
  \begin{align*}
    \cE_{h, r}(f_{h, r}^{\pi, *}, \pi; \cD) &= \cL_{h, r}(f_{h, r}^{\pi, *}, f_{h + 1, r}^{\pi, *}, \pi; \cD) - \cL_{h, r}(\hat{g}^{\pi}_{h, r}, f_{h + 1, r}^{\pi, *}, \pi; \cD)\\
    &= \cL_{h, r}(f_{h, r}^{\pi, *}, f_{h + 1, r}^{\pi, *}, \pi; \cD)  -\cL_{h, r}(\cT_{h, r}^{\pi, *}f_{h + 1, r}^{\pi, *}, f_{h + 1, r}^{\pi, *}, \pi; \cD) \\
    &\quad -\rbr{\cL_{h, r}(\hat{g}^{\pi}_{h, r}, f_{h + 1, r}^{\pi, *}, \pi; \cD) -\cL_{h, r}(\cT_{h, r}^{\pi, *}f_{h + 1, r}^{\pi, *}, f_{h + 1, r}^{\pi, *}, \pi; \cD)}.
  \end{align*}
  By Lemma~\ref{corollary:good_event}, conditionally on the event $\cG(\Pi)$ we have the following simultaneously:
  \begin{align*}
    \cL_{h, r}(f_{h, r}^{\pi, *}, f_{h + 1, r}^{\pi, *}, \pi; \cD)  -\cL_{h, r}(\cT_{h, r}^{\pi, *}f_{h + 1, r}^{\pi, *}, f_{h + 1, r}^{\pi, *}, \pi; \cD) &\leq \epsilon_{\rm S} + \frac{3}{2} \EE_{\mu_h}\sbr{\| f_{h, r}^{\pi, *} - \cT_{h, r}^{\pi, *}f_{h + 1, r}^{\pi, *}\|^{2}},\\
    -\cL_{h, r}(\hat{g}^{\pi}_{h, r}, f_{h + 1, r}^{\pi, *}, \pi; \cD) +\cL_{h, r}(\cT_{h, r}^{\pi, *}f_{h + 1, r}^{\pi, *}, f_{h + 1, r}^{\pi, *}, \pi; \cD) &\leq \epsilon_{\rm S},
  \end{align*} where the second inequality uses the fact that $\|\cdot\|^{2}$ is non-negative. Finally, noticing that
  \begin{align*}
    \EE_{\mu_h}\sbr{\| f_{h, r}^{\pi, *} - \cT_{h, r}^{\pi, *}f_{h + 1, r}^{\pi, *}\|^{2}} &\leq 2\EE_{\mu_h}\sbr{\| f_{h, r}^{\pi, *} - Q_{h}^{\pi}(\cdot, \cdot; r)\|^{2}} + 2\EE_{\mu_h}\sbr{\| \cT_{h, r}^{\pi, *}f_{h + 1, r}^{\pi, *} - \cT_{h, r}^{\pi, *}Q_{h}^{\pi}(\cdot, \cdot; r)\|^{2}}\\
    & \leq 2\epsilon_{\cF} +2\EE_{\mu_{h + 1}'}\sbr{\| f_{h + 1, r}^{\pi, *} - Q_{h + 1}^{\pi}(\cdot, \cdot; r)\|^{2}}\\
    & \leq 4\epsilon_{\cF},
  \end{align*} where $\mu_{h + 1}'$ shares the marginal distribution over $\cS$ with $\mu_{h + 1}$ but the conditional distribution over $\cA$ given $s \in \cS$ is given by $\pi_{h + 1}(\cdot | s)$. The final inequality comes from the fact that $\mu_{h + 1}'$ is an admissible distribution under Assumption~\ref{assumption:realizability}.
\end{proof}

\begin{proof}[Proof of Corollary~\ref{lemma:small_cE_small_bellman_error}]
  Let $\hat{g}^{\pi}_{h, r} = \argmin_{g \in \cF_{h}}\EE_{\mu_h}[\|g - \cT_{h, r}^{\pi}f_{h + 1, r}^{\pi}\|^{2}]$.  Recalling the definition of $\cE_{h, r}$, we have
  \begin{align*}
    \cE_{h, r}(f_{h, r}^{\pi}, \pi; \cD) &= \cL_{h, r}(f_{h, r}^{\pi}, f_{h + 1, r}^{\pi}, \pi; \cD) - \min_{g \in \cF_{h}}\cL_{h, r}(g, f_{h + 1, r}^{\pi}, \pi; \cD)\\
    &\geq \cL_{h, r}(f_{h, r}^{\pi}, f_{h + 1, r}^{\pi}, \pi; \cD) - \cL_{h, r}(\hat{g}^{\pi}_{h, r}, f_{h + 1, r}^{\pi}, \pi; \cD)\\
    &= \cL_{h, r}(f_{h, r}^{\pi}, f_{h + 1, r}^{\pi}, \pi; \cD)  -\cL_{h, r}(\cT_{h, r}^{\pi}f_{h + 1, r}^{\pi}, f_{h + 1, r}^{\pi}, \pi; \cD) \\
    &\quad -\rbr{\cL_{h, r}(\hat{g}^{\pi}_{h, r}, f_{h + 1, r}^{\pi}, \pi; \cD) -\cL_{h, r}(\cT_{h, r}^{\pi}f_{h + 1, r}^{\pi}, f_{h + 1, r}^{\pi}, \pi; \cD)}.
  \end{align*}
  By Lemma~\ref{corollary:good_event}, conditionally on the event $\cG(\Pi)$ we have the following:
  \begin{align*}
    \cL_{h, r}(f_{h, r}^{\pi}, f_{h + 1, r}^{\pi}, \pi; \cD)  -\cL_{h, r}(\cT_{h, r}^{\pi}f_{h + 1, r}^{\pi}, f_{h + 1, r}^{\pi}, \pi; \cD) &\geq -\epsilon_{\rm S} + \frac{1}{2}\EE_{\mu_h}\sbr{\| f_{h, r}^{\pi} - \cT_{h, r}^{\pi}f_{h + 1, r}^{\pi}\|^{2}},\\
    -\cL_{h, r}(\hat{g}^{\pi}_{h, r}, f_{h + 1, r}^{\pi}, \pi; \cD) +\cL_{h, r}(\cT_{h, r}^{\pi}f_{h + 1, r}^{\pi}, f_{h + 1, r}^{\pi}, \pi; \cD) &\geq -\epsilon_{\rm S} -\frac{3}{2}\EE_{\mu_h}\sbr{\| \hat{g}_{h, r}^{\pi} - \cT_{h, r}^{\pi}f_{h + 1, r}^{\pi}\|^{2}}.
  \end{align*} Recalling that $\cE_{h, r}(f, \pi; \cD) \leq \epsilon_{0}$, we have
  \[
    \EE_{\mu_H}\sbr{\| f_{h, r}^{\pi} - \cT_{h, r}^{\pi}f_{h + 1, r}^{\pi}\|^{2}} \leq 4\epsilon_{\rm S} + 3\EE_{\mu_h}\sbr{\| \hat{g}_{h, r}^{\pi} - \cT_{h, r}^{\pi}h_{h + 1, r}^{\pi}\|^{2}} + 2\epsilon_{0}.
  \] We conclude our proof by reminding ourselves of Assumption~\ref{assumption:completeness}.
\end{proof}


\end{document}